\newtheorem{theorem}{Theorem}
\newtheorem{assumption}{Assumption}
\newtheorem{corollary}{Corollary}
\newtheorem{definition}{Definition}
\newtheorem{lemma}{Lemma}[section]
\newtheorem{proposition}{Proposition}[section]
\newtheorem{remark}{Remark}[section]
\newtheorem{example}{Example}[section]
\def\begeqn{\begin{equation}}
\def\endeqn{\end{equation}}
\def\begth{\begin{theorem}}
\def\endth{\end{theorem}}
\def\begprop{\begin{proposition}}
\def\endprop{\end{proposition}}
\def\begcor{\begin{corollary}}
\def\endcor{\end{corollary}}
\def\begdef{\begin{definition}}
\def\enddef{\end{definition}}
\def\beglemm{\begin{lemma}}
\def\endlemm{\end{lemma}}
\def\begexm{\begin{example}}
\def\endexm{\end{example}}
\def\begrem{\begin{remark}}
\def\endrem{\end{remark}}
\def\begassum{\begin{assumption}}
\def\endassum{\end{assumption}}
\title{Online Regularized Learning Algorithm for Functional Data$^\dag$\footnotetext{\dag~The work described in this paper is supported by Zhejiang Provincial Natural Science Foundation of China [Project No.\ LR20A010001], National Natural Science Foundation of China [Project No.\ U21A20426, No.\ 12271473], and Fundamental Research Funds for the Central Universities [Project No.\ 2021XZZX001]. Email addresses: ymao@zju.edu.cn (Y. Mao), guozhengchu@zju.edu.cn (Z. C. Guo). The corresponding author is Zheng-Chu Guo.}}
\author{Yuan Mao and Zheng-Chu Guo\\
\small  School of Mathematical Sciences, Zhejiang University, Hangzhou 310058, P. R. China \\ }
\date{}
\begin{document}

\maketitle
\begin{abstract}
    In recent years, functional linear models have attracted growing attention in statistics and machine learning, with the aim of recovering the slope function or its functional predictor.
    This paper considers online regularized learning algorithm for functional linear models in reproducing kernel Hilbert spaces.
    Convergence analysis of excess prediction error and estimation error are provided with polynomially decaying step-size and constant step-size, respectively.
    Fast convergence rates can be derived via a capacity dependent analysis.
    By introducing an explicit regularization term, we uplift the saturation  boundary of unregularized online learning algorithms when the step-size decays polynomially, and establish fast convergence rates of estimation error without capacity assumption. However, it remains an open problem to obtain capacity independent convergence rates for the estimation error of the unregularized online learning algorithm with decaying step-size.
    It also shows that convergence rates of both prediction error and estimation error with constant step-size are competitive with those in the literature.
\end{abstract}

{\bf Keywords}: Functional linear model, Online learning, Reproducing kernel Hilbert spaces, Regularization.

\section{Introduction}
\label{sec:intro}
    In this paper, we consider the following functional linear regression model
    \begin{equation}
        \label{eq:linearmodel}
        Y = \alpha^{*} + \int_{\mathcal{T}} \beta^{*}(t) X(t) dt + \varepsilon,
    \end{equation}
    which aims to find the relationship between the scalar response $ Y $ and a random function $ X $.
    Here the domain $ \mathcal{T} $ of $ X $ is a compact subset of an Euclidean space, $ \alpha^{*} $ is a constant intercept, $ \beta^{*}: \mathcal{T} \to \mathbb{R} $ is an unknown slope function, and $ \varepsilon $ is a random noise with zero mean value and finite variance $ \sigma^2 < \infty $ which is independent of $X.$
    Let $ ( \mathcal{L}^2 (\mathcal{T}),  \left<\cdot,\cdot \right>_{2},\| \cdot \|_{2} ) $ be the Hilbert space of square-integrable functions from $ \mathcal{T} $ to $ \mathbb{R} $, and we assume that $ X , \beta^{*} \in \mathcal{L}^2 (\mathcal{T}) $.
    For the sake of simplicity, we assume $ \alpha^{*} = 0 $ and $ \mathbb{E} [X] = 0 $ which leads to $ \mathbb{E}[Y] = 0 $.

    We analyze two types of learning tasks for functional liner regression model (\ref{eq:linearmodel}) in this paper including estimation problem and prediction problem.
    The estimation problem aims at estimating the slope function $ \beta^{*} $.
    And the goal of prediction problem is to recover the unknown functional $ \eta^{*} $ on $ \mathcal{L}^2 (\mathcal{T}) $ defined by
    \begin{equation*}
        \eta^{*}:X\mapsto  \left< \beta^{*}, X \right>_{2} = \int_{\mathcal{T}} \beta^{*}(t) X(t) dt.
    \end{equation*}
    It is obvious that $ \eta^{*} $ is determined by $ \beta^{*} $ which introduces a smoothing effect and leads to an easier learning problem \cite{Chen2022}.

    There is a large literature on functional linear regression model (\ref{eq:linearmodel}), see \cite{Ramsay2005,Cai2006,Hall2007,Cai2012,Fan2019,Yuan2010,Chen2022,guoCapacityDependentAnalysis2022,tong2021distributed} and references therein.
    One of the commonly used tools for the regression problem is the functional principal component analysis (FPCA) \cite{Ramsay2005,Cai2006,Hall2007}, where the slope function $ \beta^{*} $ is estimated by a linear combination of the estimated eigenfunctions of covariance function of the random function $X$.
    Another line of research employs the kernel methods to establish the estimator of $ \beta^{*} $ under the assumption that $ \beta^{*} $ belongs to a \emph{reproducing kernel Hilbert space} (RKHS) associated with a reproducing kernel $ K $ \cite{Cai2012,Fan2019,Yuan2010}, and the estimator of $ \beta^{*} $ can be represented by a linear combination of kernel functions.
    Recently, to cope with the challenges of memory bottlenecks and algorithm scalability arising from massive datasets, distributed learning methods \cite{tong2021distributed} and stochastic gradient descent (SGD) methods \cite{Chen2022,guoCapacityDependentAnalysis2022} are studied in the framework of  functional linear regression model (\ref{eq:linearmodel}) and RKHS.

    The algorithm studied in this paper is implemented in an RKHS $ (\mathcal{H}_{K}, \| \cdot \|_{K}) $ associated with a Mercer kernel $ K:\mathcal{T} \times \mathcal{T} \to \mathbb{R} $, which means that $ K $ is a continuous, symmetric and positive semi-definite function on $ \mathcal{T} \times \mathcal{T} $.
    The RKHS $ \mathcal{H}_{K} $ is defined to be the completion of the linear span of functions $ \{ K_{t}(\cdot) = K(t,\cdot): t \in \mathcal{T} \} $ with the inner product $ \left< K_{t},K_{t^\prime} \right>_{K} = K(t,t^\prime ) $ and the \emph{reproducing property}
    \begin{equation}
        f(t) = \left< f, K_{t} \right>_{K}, \text{ for any } t \in \mathcal{T} \text{ and } f \in \mathcal{H}_{K}.
    \end{equation}
    Since the distribution of $ Z = (X,Y) $ is unknown, given a training sample $ \bm{Z} = \{Z_{i} = (X_{i},Y_{i})\}_{i=1}^{n} \subset Z^{n} $ of $ n $ independent copies of $ (X,Y) $, classical batch learning algorithm aims to find an estimator of $ \beta^{*} $ in $ \mathcal{H}_{K} $ by a Tikhonov regularization scheme involving the sample $ \bm{Z} $, that is
    \begin{equation*}
        \hat{\beta} = \arg\min_{\beta \in \mathcal{H}_{K}} \frac{1}{n} \left\{ \sum_{i=1}^{n} \left( \left< \beta, X_{i} \right>_{2} - Y_{i} \right)^2 + \lambda \| \beta \|_{K}^2 \right\}.
    \end{equation*}
    It has been well studied in the literature \cite{Cai2012,Yuan2010}.
    In contrast, we investigate the following online SGD algorithm in $\mathcal{H}_{K}$ generated by Tikhonov regularization scheme, which is also referred to as the \emph{online regularized learning algorithm}.
    It starts from $ \beta_1 = 0 $ and is then iteratively expressed as
    \begin{equation}
        \label{eq:SGDalgorithm}
        \beta_{k+1} = \beta_{k} - \gamma_{k} \left[ \left( \int_{\mathcal{T}} \beta_{k}(t) X_{k}(t) dt - Y_{k} \right) \int_{\mathcal{T}} K(s,\cdot) X_{k}(s) ds + \lambda \beta_{k} \right],
    \end{equation}
    for $ k=1,\cdots,n $ where $ \lambda = \lambda(n) > 0 $ is the regularization parameter and $ \gamma_{k} > 0 $ is the step-size.

    For any $ \ell \in \mathbb{N}_{+} $, we simply denote by $ [\ell] $ the set $ \{ 1, \cdots ,\ell\} $.
    The algorithm above generates the learning sequence $ \{ \beta_{k+1}: k \in [n]\} $ where each output $ \beta_{k+1} $ is a random variable depending only on $ \{ Z_{i} \}_{i=1}^{k} $.
    The estimation performance of $ \beta_{k+1} $ can be measured by the squared $ \mathcal{H}_{K} $ norm, i.e., $\left\| \beta_{k+1} - \beta^{*} \right\|_{K}^2 $.
    As a product of the proposed algorithm (\ref{eq:SGDalgorithm}), the functional sequence $ \{ \hat{\eta}_{k+1}=\left< \beta_{k+1},\cdot \right>: k \in [n]\} $ can be utilized to consider the prediction problem.
    For a functional $ \eta:\mathcal{L}^2 (\mathcal{T}) \to \mathbb{R} $, the prediction error can be characterized with squared loss
    $$ \mathcal{E}( \eta) := \mathbb{E}_{(X,Y)} [ Y - \eta(X) ]^2,$$
    where the expectation is taken with respect to the distribution of $(X, Y )$ in Model (\ref{eq:linearmodel}).
    Hence, the prediction performance of $ \hat{\eta}_{k+1} $ can be naturally measured by the excess prediction error
    \begin{equation*}
        \begin{aligned}
            \mathcal{E}( \hat{\eta}_{k+1}) - \mathcal{E}( \eta^{*}) := & \mathbb{E}_{(X,Y)} [ Y - \hat{\eta}_{k+1}(X) ]^2 - \mathbb{E}_{(X,Y)} [ Y - \eta^{*}(X) ]^2 \\
            = & \mathbb{E}_{X} [ \hat{\eta}_{k+1}(X) - \eta^{*}(X) ]^2.
        \end{aligned}
    \end{equation*}
    Based on the sample $ \bm{Z} $, we investigate two different types of step-sizes and show that how the type of the step-size controls the learning performance of the solution.
    The first one is a polynomially decaying sequence of the form $ \{ \gamma_{k} = \gamma_1 k^{-\mu} : k \in [n]\} $ with some $ \gamma_1 >0 $ and $ \mu \in (0,1) $ and the other one is a constant sequence of the form $ \{ \gamma_{k} = \gamma: k \in [n] \} $ with $ \gamma = \gamma(n) $ depending on the total number of iterations $ n $ (the sample size).
    Both types of step-sizes can play a role as an implicit regularization to ensure the generalization ability of algorithm (\ref{eq:SGDalgorithm}).
    Our main results show that explicit convergence rates of the excess prediction error and estimation error for algorithm (\ref{eq:SGDalgorithm}) can be obtained with respect to expectation norm under standard assumptions.
    More importantly, the saturation boundary appeared in the previous results \cite{Chen2022} can be improved by introducing an explicit regularization term in the convergence analysis of prediction error and estimation error with polynomially decaying step-size.
    To our best of knowledge, this is the first result establishing the capacity independent convergence analysis of estimation error for online learning algorithms with polynomially decaying step-size in the context of functional linear model (\ref{eq:linearmodel}).
    Moreover, we present more refined results leading to faster convergence rates under an additional capacity condition encoding the smoothness of kernels and covariance function.
    Our work extends the online regularized learning algorithm to functional linear model (\ref{eq:linearmodel}) and enriches the applications of RKHS methods to functional data analysis.
    We hope this work shall provide insight for better understanding the difference between the prediction problem and estimation problem in functional linear models.

    The rest of this paper is organized as follows.
    Our main results are detailed after some basic notations and assumptions in Section \ref{sec:results}.
    Section \ref{sec:relatedwork} collects discussion and comparisons with related work.
    In Section \ref{sec:decomposition}, we propose a novel decomposition for the excess prediction error and estimation error of algorithm (\ref{eq:SGDalgorithm}) and provide some preliminary results.
    The proofs of main results are given in Section \ref{sec:proofOfthm12} and \ref{sec:proofOfthm34}.
    Appendix presents proofs of some results in Section \ref{sec:decomposition}.

\section{Main Results}
\label{sec:results}
    In this section, we present our main results on the convergence rates of the excess prediction error and estimation error for algorithm (\ref{eq:SGDalgorithm}) with different step-sizes, respectively.

    We begin with some notations and assumptions.
    Let $ (\mathcal{H}, \| \cdot \|_{\mathcal{H}}) $ be a Hilbert space and $ A:\mathcal{H} \to \mathcal{H} $ be a linear operator.
    The operator norm of $ A $ is defined by $ \| A \|_{\mathcal{H}} = \sup_{\| f \|_{\mathcal{H}} = 1} \| Af \|_{\mathcal{H}} $.
    When $ \mathcal{H} $ is clear from context, we will omit the subscript and simply denote by $ \| \cdot \| $.
    For any bounded self-adjoint operators $ A $ and $ B $ on $ \mathcal{L}_{2}(\mathcal{T}) $, we write $ A \succeq B $ (or $ B \preceq A $) if $ A-B $ is positive semi-definite.
    For $ k \in \mathbb{N} $, we denote by $ \mathbb{E}_{Z^{k} } $ taking expectation with respect to $ \{ Z_1,\cdots ,Z_{k} \} $ and $ \mathbb{E}_{Z^{0} } [\xi] = \xi $ for any random variable $ \xi $. In addition, let $ a_{k} \lesssim b_{k} $ denote that there exists $ c > 0 $ such that $ a_{k} \leqslant c b_{k} $ for any $ k \geqslant 1 $.
    For the sake of simplicity, (\ref{eq:SGDalgorithm}) can be equivalently written as
    \begin{equation}
        \label{eq:SGDoperator}
        \beta_{k+1} = \beta_{k} - \gamma_{k} \left[ \left( \left< \beta_{k}, X_{k} \right>_{2} - Y_{k} \right) L_{K} X_{k} + \lambda \beta_{k} \right],
    \end{equation}
    where $ L_{K}: \mathcal{L}^2(\mathcal{T}) \to \mathcal{L}^2(\mathcal{T}) $ is an integral operator defined with the reproducing kernel $ K $, by
    \begin{equation}
        L_{K} f = \int_{\mathcal{T}} K(s,\cdot)f(s)ds, \qquad \forall f \in \mathcal{L}^2(\mathcal{T}).
    \end{equation}
    Recall that $ X $ is a square integrable random function over $ \mathcal{T} $ with zero mean value, then the covariance function of $ X $ is defined as
    \begin{equation*}
        C(s,t) = \mathbb{E} [ ( X(s) - \mathbb{E} [ X(s) ] ) ( X(t) - \mathbb{E} [ X(t) ] ) ] = \mathbb{E} [ X(s)X(t) ],
    \end{equation*}
    which can be easily verified to be real, symmetric and positive semi-definite.
    Throughout this paper, we assume $ C $ to be continuous and $ \mathbb{E} \| X \|_{2}^2 <\infty $.
    Hence, $ C $ is another Mercer kernel and we can define the integral operator $ L_{C} $ analogously.
    Since $ K $ and $ C $ are Mercer kernels on $ \mathcal{T} \times \mathcal{T} $ and $ \mathcal{T} $ is compact, these two functions are bounded which means that there exist two finite constants $ \kappa_1 $ and $ \kappa_{2} $ such that
    \begin{equation}
        \label{eq:boundforKandC}
        \kappa_1^2 = \max_{t \in \mathcal{T}} K(t,t) < \infty, \quad \text{and} \quad \kappa_2^2 = \max_{t \in \mathcal{T}} C(t,t) < \infty.
    \end{equation}
    In addition, $ L_{K} $ and $ L_{C} $ are compact, positive and thus of trace class.
    The corresponding $ r $-th power $ L_{K}^{r} $ and $ L_{C}^{r} $ are well-defined for any $ r > 0 $ according to the spectral theorem.
    The compactness of $ L_{K} $ implies that there exists an orthonormal system $ \{ \lambda_{\ell},\phi_{\ell} \}_{\ell\geqslant 1} $ in $ \mathcal{L}^2(\mathcal{T}) $ such that $ L_{K} = \sum_{\ell \geqslant 1} \lambda_{\ell} \phi_{\ell} \otimes \phi_{\ell} $ where the eigenvalues $ \{ \lambda_{\ell} \}_{\ell\geqslant 1} $(with geometric multiplicities) are non-negative and sorted in descending order.
    For notational simplicity, let
    \begin{equation}
        \label{eq:LscrMscr}
        T_{K} = L_{C}^{1 / 2 } L_{K} L_{C}^{1 / 2 } \quad \text{ and } \quad T_{C} = L_{K}^{1 / 2 } L_{C} L_{K}^{1 / 2 },
    \end{equation}
    which are naturally compact and positive.

    In the following, we introduce two key assumptions, which are commonly adopted in statistic learning and functional data analysis.
    \begin{assumption}
        \label{assum:bounded4thmoment}
        There exists a constant $ c > 0 $ such that for any $f \in \mathcal{L}^2 (\mathcal{T})$,
        \begin{equation}
            \label{eq:bounded4thmoment}
            \mathbb{E} \left<X,f \right>_{2}^{4} \leqslant c \left[ \mathbb{E} \left<X,f \right>_{2}^{2} \right]^2.
        \end{equation}
    \end{assumption}
    \noindent Condition (\ref{eq:bounded4thmoment}) indicates that all linear functionals of $ X $ have bounded kurtosis, which is adopted in \cite{Cai2012,Chen2022,guoCapacityDependentAnalysis2022,Yuan2010}.
    Especially, $ c=3 $ when $ X $ is Gaussian process.
    The second assumption is about regularizing condition of the target function $ \beta^{*} $.
    \begin{assumption}
        \label{assum:regularityBeta}
        \begin{equation}
            \label{eq:regularityBeta}
            L_{C}^{1 / 2 } \beta^{*} = T_{K}^{\theta} g^{*}, \quad \text{for some } g^{*} \in \mathcal{L}^2(\mathcal{T}) \text{ and } 0< \theta \leqslant 1.
        \end{equation}
    \end{assumption}
    Compared with the regularity assumption $ \beta^{*}\in \mathcal{H}_{K} $ \cite{Cai2012,Yuan2010,Fan2019}, Assumption \ref{assum:regularityBeta} is milder.
    This is well understood from Theorem 3 and Remarks in \cite{Chen2022} which suggests that if $ L_{K}^{\tau} \succeq \delta L_{C}^{v} $ for any $ \tau,\delta,v>0 $ and $ \tau + v \geqslant 1 $, Assumption \ref{assum:regularityBeta} is satisfied with $ \theta = 1 / (2 + 2v / \tau) $ for any $ \beta^{*} \in \mathcal{L}^2(\mathcal{T}) $.
    We can conclude that Assumption \ref{assum:regularityBeta} holds at least for any $ 0 < \theta < 1 /2 $ with proper selection of the reproducing kernel $ K $.
    In addition, Assumption \ref{assum:regularityBeta} is a weaker constraint on $ L_{K} $ and $ L_{C} $ while the FPCA-based methods require the perfect alignment between $ L_{K} $ and $ L_{C} $, that is, they share a common ordered set of eigenfunctions.

    Now we are in a position to present our main results. The first one establishes explicit rates of the excess prediction error in expectation with polynomially decaying step-size.
    \begin{theorem}
        \label{thm:1}
        Define $ \{ \hat{\eta}_{k+1}=\langle\beta_{k+1},\cdot\rangle: k \in [n]\} $ through (\ref{eq:SGDalgorithm}).
        Under Assumption \ref{assum:bounded4thmoment} and Assumption \ref{assum:regularityBeta} with $ 0<\theta \leqslant 1 $, set $ \gamma_{k} = \gamma_1 k^{-\mu} $ with $ \mu = \frac{2 \theta}{ 2 \theta + 1} $ and
        \begin{equation}
            \lambda =
            \begin{cases}
                n^{-\frac{1}{2 \theta + 1}}, & \text{if } 0 < \theta \leqslant \frac{1}{2}, \\
                n^{-\frac{1}{2 \theta + 1} + \frac{\varepsilon}{2 \theta}}, & \text{if } \frac{1}{2} < \theta \leqslant 1,
            \end{cases}
        \end{equation}
        for any $ 0 < \varepsilon < 2 \theta / (2 \theta + 1) $.
        If $ 0 < \gamma_1 \leqslant [ 4 C_{\mu} (1+c) ( 1 + \kappa_1^2 \kappa_2^2 )^2 ( \log 2 + \min \{ \mu, 1-\mu \}^{-1} ) ]^{-1} $ (where  constant $ C_{\mu} $ will be specified in the proof of Lemma \ref{lem:seriesgamma2}), then
        \begin{equation}
            \mathbb{E}_{Z^{n} }\left[\mathcal{E} ( \hat{\eta}_{n+1})\right]- \mathcal{E}( \eta^{*})  \lesssim
            \begin{cases}
                n^{-\frac{2 \theta}{2 \theta + 1}} \log(n+1), & 0 < \theta \leqslant \frac{1}{2}, \\
                n^{-\frac{2 \theta}{2 \theta + 1} + \varepsilon} , & \frac{1}{2} < \theta \leqslant 1.
            \end{cases}
        \end{equation}
    \end{theorem}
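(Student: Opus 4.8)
The plan is to work throughout with $\mathbb{E}_{Z^n}\bigl\|L_C^{1/2}(\beta_{n+1}-\beta^*)\bigr\|_2^2$, which equals $\mathbb{E}_{Z^n}[\mathcal{E}(\hat\eta_{n+1})]-\mathcal{E}(\eta^*)$ since $\hat\eta_{n+1}(X)-\eta^*(X)=\langle\beta_{n+1}-\beta^*,X\rangle_2$ and $\mathbb{E}[X]=0$. I would introduce the regularized function $\beta_\lambda=(L_KL_C+\lambda I)^{-1}L_KL_C\beta^*$ and the deterministic (population) iterates $\bar\beta_1=0$, $\bar\beta_{k+1}=\bar\beta_k-\gamma_k[L_KL_C(\bar\beta_k-\beta^*)+\lambda\bar\beta_k]$, which satisfy $\bar\beta_k=\mathbb{E}[\beta_k]$. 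Since $\beta_{n+1}-\bar\beta_{n+1}$ is mean-zero and $\bar\beta_{n+1}-\beta^*$ is deterministic,
\[
\mathbb{E}_{Z^n}\bigl\|L_C^{1/2}(\beta_{n+1}-\beta^*)\bigr\|_2^2=\mathbb{E}_{Z^n}\bigl\|L_C^{1/2}(\beta_{n+1}-\bar\beta_{n+1})\bigr\|_2^2+\bigl\|L_C^{1/2}(\bar\beta_{n+1}-\beta^*)\bigr\|_2^2,
\]
and splitting the second term through $\beta_\lambda$ gives the error decomposition of Section~\ref{sec:decomposition}: it remains to bound the approximation error $\|L_C^{1/2}(\beta_\lambda-\beta^*)\|_2^2$, the optimization error $\|L_C^{1/2}(\bar\beta_{n+1}-\beta_\lambda)\|_2^2$, and the sample error $\mathbb{E}_{Z^n}\|L_C^{1/2}(\beta_{n+1}-\bar\beta_{n+1})\|_2^2$. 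The workhorse for all three is the intertwining identity $L_C^{1/2}\bigl(I-\gamma(L_KL_C+\lambda I)\bigr)=\bigl(I-\gamma(T_K+\lambda I)\bigr)L_C^{1/2}$ together with $L_C^{1/2}L_KL_C^{1/2}=T_K$, which turns operator estimates into scalar estimates for $\prod_j(1-\gamma_j(t+\lambda))$ times powers of $t$, evaluated against the spectrum of $T_K$, with the step-size partial sums $\sum_j\gamma_j$ playing the role of an inverse regularization level.

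For the approximation error, Assumption~\ref{assum:regularityBeta} yields $L_C^{1/2}(\beta_\lambda-\beta^*)=-\lambda(T_K+\lambda I)^{-1}T_K^{\theta}g^*$, whence $\|L_C^{1/2}(\beta_\lambda-\beta^*)\|_2\le\lambda\,\sup_{t>0}t^{\theta}(t+\lambda)^{-1}\|g^*\|\le\lambda^{\theta}\|g^*\|$ for $0<\theta\le 1$, so the squared approximation error is $\lesssim\lambda^{2\theta}$. For the optimization error, unrolling the deterministic recursion from $\bar\beta_1=0$ gives $\bar\beta_{n+1}-\beta_\lambda=-\prod_{j=1}^{n}\bigl(I-\gamma_j(L_KL_C+\lambda I)\bigr)\beta_\lambda$; applying $L_C^{1/2}$, using the intertwining and $L_C^{1/2}\beta_\lambda=(T_K+\lambda I)^{-1}T_K^{1+\theta}g^*$, one reaches the scalar bound $\sup_{t>0}\prod_{j=1}^{n}(1-\gamma_j t)\,t^{\theta}\lesssim\bigl(\sum_{j=1}^{n}\gamma_j\bigr)^{-\theta}\lesssim n^{-\theta(1-\mu)}$, so the squared optimization error is $\lesssim n^{-2\theta(1-\mu)}=n^{-2\theta/(2\theta+1)}$ by the choice $\mu=\tfrac{2\theta}{2\theta+1}$. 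With the prescribed $\lambda$ one has $\lambda^{2\theta}=n^{-2\theta/(2\theta+1)}$ when $0<\theta\le\tfrac12$ and $\lambda^{2\theta}=n^{-2\theta/(2\theta+1)+\varepsilon}$ when $\tfrac12<\theta\le 1$, so both of these terms are within the claimed rate.

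The sample error is the core. From $\beta_{k+1}-\bar\beta_{k+1}=(I-\gamma_k(\mathcal{A}_k+\lambda I))(\beta_k-\bar\beta_k)+\gamma_k\chi_k$, with $\mathcal{A}_k f=\langle f,X_k\rangle_2 L_KX_k$ and the conditionally centered noise $\chi_k=(Y_k-\langle\bar\beta_k,X_k\rangle_2)L_KX_k-L_KL_C(\beta^*-\bar\beta_k)$, unrolling (this is part of the decomposition of Section~\ref{sec:decomposition}) gives $\beta_{n+1}-\bar\beta_{n+1}=\sum_{i=1}^{n}\gamma_i\,G_i\,\chi_i$ with $G_i=\prod_{j=i+1}^{n}(I-\gamma_j(\mathcal{A}_j+\lambda I))$. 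Each summand $\gamma_iG_i\chi_i$ is $\sigma(Z_i,\dots,Z_n)$-measurable with conditional mean zero given $\sigma(Z_{i+1},\dots,Z_n)$ (since $G_i$ depends only on $Z_{i+1},\dots,Z_n$ while $\mathbb{E}_{Z_i}[\chi_i]=0$), so the $L_C^{1/2}$-images are pairwise orthogonal and $\mathbb{E}\|L_C^{1/2}(\beta_{n+1}-\bar\beta_{n+1})\|_2^2=\sum_{i=1}^{n}\gamma_i^2\,\mathbb{E}\|L_C^{1/2}G_i\chi_i\|_2^2$. Averaging over $Z_i$ first and using Assumption~\ref{assum:bounded4thmoment} to bound the conditional covariance of $\chi_i$ by $C(\sigma^2+\|L_C^{1/2}(\bar\beta_i-\beta^*)\|_2^2)\,L_KL_CL_K\preceq C'L_KL_CL_K$ (the second factor being bounded by the optimization/approximation estimates already obtained), each term reduces to $\lesssim\gamma_i^2\,\mathbb{E}\|L_C^{1/2}G_iL_KL_C^{1/2}\|_{\mathrm{HS}}^2$. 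Splitting $G_i$ into its mean $\mathbb{E}[G_i]=\prod_{j=i+1}^{n}(I-\gamma_j(L_KL_C+\lambda I))$ plus a fluctuation, the mean part equals, by the intertwining, $\|\prod_{j=i+1}^{n}(I-\gamma_j(T_K+\lambda I))T_K\|_{\mathrm{HS}}^2=\sum_{\ell}\bigl(\prod_{j>i}(1-\gamma_j(\sigma_\ell+\lambda))\bigr)^2\sigma_\ell^2$ for the eigenvalues $\{\sigma_\ell\}$ of $T_K$; using $(1-\gamma_j(\sigma_\ell+\lambda))^2\le e^{-2(\sigma_\ell+\lambda)\sum_{j>i}\gamma_j}$ and the \emph{capacity-independent} estimate $\sum_\ell\sigma_\ell^2e^{-\sigma_\ell S}\le(eS)^{-1}\mathrm{Tr}(T_K)$, the contribution of the mean parts is at most a constant multiple of $\mathrm{Tr}(T_K)\sum_{i=1}^{n}\gamma_i^2\bigl(\sum_{j=i+1}^{n}\gamma_j\bigr)^{-1}e^{-\lambda\sum_{j=i+1}^{n}\gamma_j}$ (with $i=n$ treated separately, contributing $\lesssim\gamma_n^2$). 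For $\gamma_i=\gamma_1i^{-\mu}$, $\mu=\tfrac{2\theta}{2\theta+1}$, the small-$i$ terms are suppressed by the exponential — super-polynomially when $\tfrac12<\theta\le 1$, since then $\lambda\sum_{j\le n}\gamma_j\gtrsim n^{\varepsilon/(2\theta)}$ — and the terms with $i$ near $n$ dominate, summing to $\lesssim\gamma_n\log\bigl(1/(\lambda\gamma_n)\bigr)\lesssim n^{-\mu}\log(n+1)=n^{-2\theta/(2\theta+1)}\log(n+1)$; when $\tfrac12<\theta\le1$ the slack $n^\varepsilon$ absorbs the logarithm. Combining with the approximation and optimization bounds gives the asserted rate.

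The step I expect to be the main obstacle is controlling the fluctuation of the random product $G_i$ around its mean $\prod_{j>i}(I-\gamma_j(L_KL_C+\lambda I))$ inside $\mathbb{E}\|L_C^{1/2}G_iL_KL_C^{1/2}\|_{\mathrm{HS}}^2$, i.e.\ bounding a second moment of a product of random operators so that the smoothing gain $\bigl(\sum_{j>i}\gamma_j\bigr)^{-1}$ is not destroyed. This is handled by propagating, through the recursion, an operator (or a pair of scalar functionals) encoding the sample-error covariance, in which $Q_k$ denotes the covariance operator of $L_C^{1/2}(\beta_k-\bar\beta_k)$ and one must keep the term $\mathrm{Tr}(T_KQ_k)$ rather than discard it; the bookkeeping for the resulting fluctuation contributions rests on Assumption~\ref{assum:bounded4thmoment}, the kernel bounds (\ref{eq:boundforKandC}), and the series estimates behind Lemma~\ref{lem:seriesgamma2}, and it is precisely this that forces the step-size restriction $0<\gamma_1\le[4C_\mu(1+c)(1+\kappa_1^2\kappa_2^2)^2(\log2+\min\{\mu,1-\mu\}^{-1})]^{-1}$: it makes each per-step factor $\le 1$ and keeps the accumulated fluctuation a bounded multiple of the leading term, so the recursion closes. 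Throughout, one must invoke only $\mathrm{Tr}(T_K)<\infty$ and never the decay rate of $\{\sigma_\ell\}$, which is what renders the rate capacity-free, in contrast with the capacity-dependent refinements appearing later in the paper.
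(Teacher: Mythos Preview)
Your approximation and optimization bounds are correct and essentially match the paper. The divergence is in the sample error, and it is substantive.

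You decompose $\beta_{k+1}-\bar\beta_{k+1}$ using the \emph{random} transition $I-\gamma_k(\mathcal A_k+\lambda I)$ with $\mathcal A_k=(L_KX_k)\otimes X_k$, noise $\chi_k$ built from the \emph{deterministic} $\bar\beta_k$, and then unroll to a sum $\sum_i\gamma_iG_i\chi_i$ with random products $G_i$. That is \emph{not} the decomposition of Section~\ref{sec:decomposition}. The paper's Lemma~\ref{lem:induction} writes
\[
\beta_{k+1}-\beta_\lambda=[I-\gamma_k(L_KL_C+\lambda I)](\beta_k-\beta_\lambda)+\gamma_k\mathcal B_k,\qquad
\mathcal B_k=L_KL_C(\beta_k-\beta^*)+(Y_k-\langle\beta_k,X_k\rangle_2)L_KX_k,
\]
i.e.\ the transition is the \emph{deterministic} population operator, and the randomness is packed into $\mathcal B_k$, which depends on the \emph{actual} iterate $\beta_k$ and satisfies $\mathbb E_{Z_k}[\mathcal B_k\mid Z^{k-1}]=0$. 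After unrolling and applying $L_C^{1/2}$, the products $\omega_{i+1}^k(T_K+\lambda I)$ are deterministic, the cross terms vanish by the martingale structure, and each summand is bounded directly by Lemma~\ref{lem:prodOperNormBound}. Your ``main obstacle'' --- controlling the fluctuation of the random product $G_i$ around its mean inside a Hilbert--Schmidt norm --- simply does not arise.

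The price the paper pays is that the second moment of $\mathcal B_i$ involves $\mathbb E_{Z^{i-1}}[\mathcal E(\hat\eta_i)]-\mathcal E(\eta^*)$ rather than $\|L_C^{1/2}(\bar\beta_i-\beta^*)\|_2^2$, so the bound in Proposition~\ref{prop:errorDECOM} is self-referential. This is closed not by any operator-fluctuation argument but by a one-line induction (Proposition~\ref{prop:UniformBounds}): feeding the crude bound $\mathbb E_{Z^{i-1}}[\mathcal E(\hat\eta_i)]-\mathcal E(\eta^*)\le 8\kappa_2^2\|\beta^*\|_2^2+\sigma^2$ back into the same decomposition, the step-size restriction (via Corollary~\ref{coro:seriesConstantBound}) forces the cumulative-sample series to be at most $\tfrac12$ of the left-hand side, and the induction advances. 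Thus the constraint on $\gamma_1$ in the theorem statement is exactly what makes this induction close, not what tames random products. Once the uniform bound holds, Proposition~\ref{prop:errorDECOM2} plus Lemma~\ref{lem:seriesgamma2} with $\nu=1$ give the displayed rates; for $\theta>\tfrac12$ one has $\mu>\tfrac12$, the series bound carries a factor $\exp\{-\lambda\gamma_1 d_\mu k^{1-\mu}\}k^{-(1-\mu)}$, and the shifted choice $\lambda=n^{-1/(2\theta+1)+\varepsilon/(2\theta)}$ makes the exponential kill any polynomial, yielding $n^{-2\theta/(2\theta+1)+\varepsilon}$.

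Your route (random transitions, deterministic-based noise, covariance propagation for $Q_k$) is a known alternative in the SGD literature and can in principle be made to work, but it is considerably heavier than what is needed here, and your sketch conflates two different closures (splitting each $G_i$ into mean plus fluctuation versus propagating $Q_k$ through a recursion). If you want to align with the paper, replace $\bar\beta_k$ by $\beta_k$ in the noise, make the transition deterministic, and use the induction of Proposition~\ref{prop:UniformBounds} instead of a fluctuation analysis.
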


    We see from Theorem \ref{thm:1} that the convergence rate obtained with polynomially decaying step-size is of the form $ \mathcal{O}(n^{-\frac{2 \theta}{2 \theta + 1}} \log(n+1)) $ when $ 0<\theta \leqslant 1 / 2 $, which coincides with that of unregularized online learning algorithm (i.e., algorithm (\ref{eq:SGDalgorithm}) with $\lambda=0$) in \cite{Chen2022}.
    The results in \cite{Chen2022} are saturated at $\theta=1/2,$ that is, the derived rate ceases improving when $\theta>1/2$ and can not be faster than $\mathcal{O}(n^{-1/2} \log(n+1))$.
    By contrast, our convergence rates can be faster than $\mathcal{O}(n^{-1/2} \log(n+1))$ when $ 1/2<\theta\leqslant 1$, though they also suffer from the saturation: the convergence rate $ \mathcal{O}(n^{ - \frac{2}{3} + \varepsilon} ) $ will not be improved when $ \theta \geqslant 1 $.

    Faster convergence rates can be derived under some favorable conditions, in which the following capacity condition is commonly used in nonparametric function estimation.

    \begin{assumption}
        \label{assum:capacity}
        \[
            \operatorname{Tr}(T_{K}^{s} ) < \infty, \quad \text{for some } 0 < s \leqslant 1.
        \]
    \end{assumption}

    $ \operatorname{Tr}(\cdot) $ in Assumption \ref{assum:capacity} denotes the trace of a compact and positive semi-definite operator.
    Recall that $ L_{K} $, $ L_{C} $, $ T_{K} $ and $ T_{C} $ are of trace class.
    It is easy to verify that $ \operatorname{Tr}(T_{K}^{s} ) = \operatorname{Tr}(T_{C}^{s} ) $ and Assumption \ref{assum:capacity} holds trivially with $ s = 1 $.
    It should be pointed that Assumption \ref{assum:capacity} with $0<s<1$ implies that the eigenvalues $\left\{\sigma_i\right\}_{i=1}^\infty$ of $ T_{K} $ (sorted in decreasing order) satisfy a polynomial decay of order $1/s,$ i.e., $ \sigma_{i} \lesssim i^{-1 / s} $ for $ i \ge 1 $.
    A small $ s $ implies a fast polynomially decaying rate of $ \{ \sigma_{i} \} $ and further helps to bound the projection of target function on hypothesis space \cite{guoCapacityDependentAnalysis2022,Guo2019a}.
    The capacity of the hypothesis space $\mathcal{H}_K$ can also be measured by VC dimension, covering number, entropy number and the effective dimension.
    We refer the readers to these papers \cite{cucker2007,steinwart2008support,Bauer2007a,Caponnetto2007,Guo2017,Guo2019} and references therein.
    Next, we show faster rates of the excess prediction error under Assumption \ref{assum:capacity}.

    \begin{theorem}
        \label{thm:2}
        Define $ \{ \hat{\eta}_{k+1}=\langle\beta_{k+1},\cdot\rangle: k \in [n]\} $ through (\ref{eq:SGDalgorithm}).
        Under Assumption \ref{assum:bounded4thmoment}, Assumption \ref{assum:regularityBeta} with $ 0<\theta \leqslant 1 $ and Assumption \ref{assum:capacity} with $ 0 < s <1 $, set $ \gamma_{k} = \gamma_1 k^{-\mu} $ with $ \mu = \frac{\min\{2 \theta, 2-s \}}{ \min\{2 \theta, 2-s \} + 1} $ and $ \lambda = n^{-\frac{1}{\min\{2 \theta, 2-s \} + 1}}  $.
        If $ 0 < \gamma_1 \leqslant \min\left\{ (1+\kappa_1^2 \kappa_2^2)^{-1}, C_{1}^{\mathsf{S}} \right\} $ (where constant $ C_{1}^{\mathsf{S}} $ will be specified in Corollary \ref{coro:seriesConstantBoundcapa}), then
        \begin{equation}
            \mathbb{E}_{Z^{n} }\left[\mathcal{E} ( \hat{\eta}_{n+1}) \right]- \mathcal{E}( \eta^{*}) \lesssim n^{-\mu}=
            \begin{cases}
                n^{-\frac{2\theta}{2\theta+1}},& \text{if } 2\theta\leq 2-s,\\
                n^{-\frac{2-s}{3-s}},&\text{if }2\theta\geq 2-s.
            \end{cases}
        \end{equation}
        Moreover, when $ 2\theta > 2-s $, set $ \mu = \frac{2\theta}{2\theta+1}  $ and $ \lambda = n ^{-\frac{1}{2 \theta + 1} + \frac{\varepsilon}{2 \theta}} ,$ there also holds
        \begin{equation}
            \mathbb{E}_{Z^{n} }\left[\mathcal{E} ( \hat{\eta}_{n+1}) \right]- \mathcal{E}( \eta^{*}) \lesssim n^{-\frac{2\theta}{2\theta+1}  + \varepsilon},
         \end{equation}
         for any $ 0<\varepsilon< 2\theta / (2\theta+1) $.
    \end{theorem}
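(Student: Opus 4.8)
\emph{Proof plan.} Since a fresh copy of $(X,Y)$ is independent of $\beta_{n+1}$, the excess prediction error equals $\mathcal{E}(\hat{\eta}_{n+1})-\mathcal{E}(\eta^{*})=\mathbb{E}_{X}[\langle\beta_{n+1}-\beta^{*},X\rangle_{2}]^{2}=\|L_{C}^{1/2}(\beta_{n+1}-\beta^{*})\|_{2}^{2}$, so it suffices to bound $\mathbb{E}_{Z^{n}}\|L_{C}^{1/2}(\beta_{n+1}-\beta^{*})\|_{2}^{2}$. The starting point is the decomposition of Section~\ref{sec:decomposition}: introduce the population regularized solution $\beta_{\lambda}$ determined by $(L_{K}L_{C}+\lambda I)\beta_{\lambda}=L_{K}L_{C}\beta^{*}$, i.e. $L_{C}^{1/2}\beta_{\lambda}=(T_{K}+\lambda I)^{-1}T_{K}L_{C}^{1/2}\beta^{*}$, and split $\beta_{n+1}-\beta^{*}=(\beta_{n+1}-\beta_{\lambda})+(\beta_{\lambda}-\beta^{*})$. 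The deterministic \emph{approximation term} is $\|L_{C}^{1/2}(\beta_{\lambda}-\beta^{*})\|_{2}=\|\lambda(T_{K}+\lambda I)^{-1}T_{K}^{\theta}g^{*}\|_{2}\leqslant\lambda^{\theta}\|g^{*}\|_{2}$ by Assumption~\ref{assum:regularityBeta} and $\sup_{\sigma\geqslant0}\lambda\sigma^{\theta}/(\sigma+\lambda)\leqslant\lambda^{\theta}$ for $0<\theta\leqslant1$; it contributes at most $\lambda^{2\theta}$.

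For the stochastic part, set $\delta_{k}=\beta_{k}-\beta_{\lambda}$. Subtracting $\beta_{\lambda}$ from~(\ref{eq:SGDoperator}) and using the defining equation of $\beta_{\lambda}$ gives
\[
\delta_{k+1}=\bigl(I-\gamma_{k}(L_{K}L_{C}+\lambda I)\bigr)\delta_{k}-\gamma_{k}\Xi_{k},\qquad \mathbb{E}[\Xi_{k}\mid Z^{k-1}]=0,
\]
with $\Xi_{k}=-\varepsilon_{k}L_{K}X_{k}+\bigl(\langle\beta_{k}-\beta^{*},X_{k}\rangle_{2}L_{K}X_{k}+\lambda\beta_{k}-(L_{K}L_{C}+\lambda I)\delta_{k}\bigr)$. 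Telescoping and commuting $L_{C}^{1/2}$ through the operator products via $L_{C}^{1/2}(L_{K}L_{C}+\lambda I)=(T_{K}+\lambda I)L_{C}^{1/2}$, I obtain $L_{C}^{1/2}\delta_{n+1}=-\Pi_{1}^{n}L_{C}^{1/2}\beta_{\lambda}-\sum_{k=1}^{n}\gamma_{k}\Pi_{k+1}^{n}L_{C}^{1/2}\Xi_{k}$, where $\Pi_{i}^{j}=\prod_{l=i}^{j}(I-\gamma_{l}(T_{K}+\lambda I))$ and $\Pi_{j+1}^{j}=I$. The constraint $\gamma_{1}(1+\kappa_{1}^{2}\kappa_{2}^{2})\leqslant1$ keeps $0\leqslant\gamma_{l}(\sigma_{i}+\lambda)\leqslant1$ on the spectrum of $T_{K}$, so each such product is bounded by $\exp(-(\sigma_{i}+\lambda)\sum_{l}\gamma_{l})$ on the corresponding eigenspace. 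Because the $\Pi$'s are deterministic and $\{\Xi_{k}\}$ is a martingale difference sequence, taking $\mathbb{E}_{Z^{n}}$ annihilates the cross terms, leaving
\[
\mathbb{E}_{Z^{n}}\|L_{C}^{1/2}\delta_{n+1}\|_{2}^{2}\leqslant 2\|\Pi_{1}^{n}L_{C}^{1/2}\beta_{\lambda}\|_{2}^{2}+2\sum_{k=1}^{n}\gamma_{k}^{2}\,\mathbb{E}_{Z^{n}}\|\Pi_{k+1}^{n}L_{C}^{1/2}\Xi_{k}\|_{2}^{2},
\]
and the first (\emph{drift}) term is $\leqslant\sup_{\sigma}\sigma^{2\theta}e^{-2\sigma\sum_{l}\gamma_{l}}\,\|g^{*}\|_{2}^{2}\lesssim\bigl(\sum_{l=1}^{n}\gamma_{l}\bigr)^{-2\theta}\asymp n^{-2\theta(1-\mu)}$.

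The heart of the argument is the variance sum. For the noise part one computes $\mathbb{E}[\|\Pi_{k+1}^{n}L_{C}^{1/2}\varepsilon_{k}L_{K}X_{k}\|_{2}^{2}\mid Z^{k-1}]=\sigma^{2}\operatorname{Tr}\bigl((\Pi_{k+1}^{n})^{2}T_{K}^{2}\bigr)=\sigma^{2}\sum_{i}\sigma_{i}^{2}\prod_{l=k+1}^{n}(1-\gamma_{l}(\sigma_{i}+\lambda))^{2}$, and this is exactly where Assumption~\ref{assum:capacity} enters: factoring $\sigma_{i}^{2}=\sigma_{i}^{s}\sigma_{i}^{2-s}$ and maximizing $x^{2-s}e^{-2xS}$ gives $\sum_{i}\sigma_{i}^{2}e^{-2\sigma_{i}S}\lesssim\operatorname{Tr}(T_{K}^{s})\,S^{-(2-s)}$, which replaces the crude $\operatorname{Tr}(T_{K})$ bound used in the capacity‑free analysis (this $S^{-(2-s)}$ plays the role that the effective dimension $\mathcal{N}(\lambda)\asymp\lambda^{-s}$ plays in batch analysis). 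Plugging $S=\sum_{l>k}\gamma_{l}$, splitting the sum over $k$ into the boundary block near $n$ — where $S$ is small and one uses the trivial bound $\operatorname{Tr}(T_{K}^{2})$ — and the bulk, and invoking the series estimates of Lemma~\ref{lem:seriesgamma2}/Corollary~\ref{coro:seriesConstantBoundcapa}, the noise contribution is of order $n^{-\mu}$ with $\mu=\frac{\min\{2\theta,2-s\}}{\min\{2\theta,2-s\}+1}$, and, crucially, free of logarithmic factors. For the remaining, ``self‑induced'' part of $\Xi_{k}$, the fourth‑moment Assumption~\ref{assum:bounded4thmoment} yields $\mathbb{E}[\|\Pi_{k+1}^{n}L_{C}^{1/2}\langle\beta_{k}-\beta^{*},X_{k}\rangle_{2}L_{K}X_{k}\|_{2}^{2}\mid Z^{k-1}]\leqslant c\operatorname{Tr}\bigl((\Pi_{k+1}^{n})^{2}T_{K}^{2}\bigr)\,\|L_{C}^{1/2}(\beta_{k}-\beta^{*})\|_{2}^{2}$, while the $\lambda\beta_{k}$ piece contributes $\lambda^{2}\gamma_{k}^{2}(e_{k}+1)$ after bounding $\mathbb{E}\|L_{C}^{1/2}\beta_{k}\|_{2}^{2}\lesssim e_{k}+\|L_{C}^{1/2}\beta^{*}\|_{2}^{2}$, where $e_{k}:=\mathbb{E}_{Z^{n}}\|L_{C}^{1/2}(\beta_{k}-\beta^{*})\|_{2}^{2}$. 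This produces a self‑referential inequality $e_{n+1}\leqslant A_{n}+\sum_{k=1}^{n}c_{k,n}e_{k}$ with driving term $A_{n}\lesssim n^{-2\theta(1-\mu)}+\lambda^{2\theta}+\sigma^{2}n^{-\mu}+\lambda^{2}$ and $\sum_{k}c_{k,n}$ bounded by a positive power of $\gamma_{1}$; choosing $\gamma_{1}$ as in the hypothesis makes $\sum_{k}c_{k,n}\leqslant\tfrac12$, and a discrete Gronwall (induction on $n$) argument then gives $e_{n+1}\lesssim A_{n}$. Finally, with $\lambda=n^{-1/(\min\{2\theta,2-s\}+1)}=n^{-(1-\mu)}$ one has $2\theta(1-\mu)\geqslant\mu$ (since $2\theta\geqslant\min\{2\theta,2-s\}$) and $2(1-\mu)\geqslant\mu$ (since $\min\{2\theta,2-s\}\leqslant2$), so every term of $A_{n}$ is $\lesssim n^{-\mu}$, which is precisely the claimed rate once the cases $2\theta\leqslant2-s$ and $2\theta\geqslant2-s$ are read off.

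The step I expect to be the genuine obstacle is the variance sum: extracting the sharp, logarithm‑free order $n^{-\mu}$ requires the capacity‑refined trace estimate together with the careful split of the summation index into a boundary block and a bulk (the naive Riemann‑sum bound on $\sum_{k}\gamma_{k}^{2}(\sum_{l>k}\gamma_{l})^{-(2-s)}$ diverges at the endpoint), and the self‑induced feedback of the running error $e_{k}$ forces a Gronwall‑type bootstrap whose validity is exactly what the smallness condition on $\gamma_{1}$ secures. For the ``Moreover'' claim ($2\theta>2-s$) I would rerun the same analysis with the larger exponent $\mu=\frac{2\theta}{2\theta+1}$ and the inflated regularization $\lambda=n^{-\frac{1}{2\theta+1}+\frac{\varepsilon}{2\theta}}$: the drift and noise terms still contribute $n^{-2\theta/(2\theta+1)}$, while the approximation term $\lambda^{2\theta}$ and the $\lambda^{2}$ term contribute $n^{-2\theta/(2\theta+1)+\varepsilon}$, the constraint $0<\varepsilon<2\theta/(2\theta+1)$ being exactly what is needed to absorb these contributions and to keep $\sum_{k}c_{k,n}\leqslant\tfrac12$, whence $e_{n+1}\lesssim n^{-2\theta/(2\theta+1)+\varepsilon}$.
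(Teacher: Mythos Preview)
Your approach is essentially the paper's: the same decomposition through $\beta_\lambda$, the same capacity-refined trace bound $\operatorname{Tr}(T_K^s)\bigl\|T_K^{1-s/2}\Pi_{k+1}^{n}\bigr\|^2$ on the variance, and the same induction that first secures a uniform bound on the running errors $e_k$ (the paper packages this as Proposition~\ref{prop:UniformBoundscapa} rather than a Gronwall inequality, but the content is identical) before plugging that bound back into the cumulative sample error via Lemma~\ref{lem:seriesgamma2} with $\nu=2-s$.

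Two points deserve correction. First, the ``$\lambda\beta_k$ piece'' of your $\Xi_k$ is spurious: once you use the defining relation $(L_KL_C+\lambda I)\beta_\lambda=L_KL_C\beta^*$, the deterministic part of $\Xi_k$ collapses to $-L_KL_C(\beta_k-\beta^*)$, i.e.\ $\Xi_k=-\mathcal{B}_k$ in the paper's notation~(\ref{eq:Bk}), with no separate $\lambda$-dependent term to estimate; since this is the conditional mean of the random part, dropping it can only decrease the second moment, so your extra $\lambda^2\gamma_k^2(e_k+1)$ contributions never enter.

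Second --- and this is the substantive gap --- your account of the ``Moreover'' case is wrong. With $\mu=\tfrac{2\theta}{2\theta+1}>\tfrac{2-s}{3-s}$ the noise sum does \emph{not} automatically contribute $n^{-\mu}$: Lemma~\ref{lem:seriesgamma2} (Case~$\nu>1$) shows it carries a piece of order $\exp\{-\lambda\gamma_1 d_\mu n^{1-\mu}\}\,n^{-(2-s)(1-\mu)}$, and $(2-s)(1-\mu)<\mu$ here, so without the exponential prefactor this piece is strictly \emph{slower} than $n^{-\mu}$. It is precisely the inflated choice $\lambda=n^{-\frac{1}{2\theta+1}+\frac{\varepsilon}{2\theta}}$ that makes $\lambda n^{1-\mu}=n^{\varepsilon/(2\theta)}\to\infty$ and hence kills this piece super-polynomially; the price paid is that the approximation error $\lambda^{2\theta}=n^{-\frac{2\theta}{2\theta+1}+\varepsilon}$ becomes the bottleneck. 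This trade --- using the explicit regularization to generate exponential decay that annihilates the slow boundary block of the noise sum, at the cost of an $\varepsilon$-loss in the bias --- is the mechanism behind the improved saturation boundary, and your writeup misses it.
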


    The above bounds in Theorem \ref{thm:2} show that the logarithmic term of the convergence rates in Theorem \ref{thm:1} can be removed when $ \theta \leqslant 1 / 2 $, and the saturation boundary can be improved from $ \theta=1 / 2 $ to $\theta= 1/2 +(1-s)/2 $ under Assumption \ref{assum:capacity}  with $ s < 1 .$
    This coincides with that in \cite{guoCapacityDependentAnalysis2022}, where the convergence rates of prediction error for algorithm (\ref{eq:SGDalgorithm}) with $ \lambda = 0 $ saturate at $ \theta = (2-s) / 2 $ and can not be faster than $ \mathcal{O}(n^{- \frac{2-s}{3-s}} ) $.
    It is noteworthy that when $ \theta > (2-s) / 2 $, our convergence rates $\mathcal{O}(n^{-\frac{2\theta}{2\theta+1}  + \varepsilon})$ with any $ 0< \varepsilon< 2\theta / (2\theta+1)$ can be faster than $ \mathcal{O}(n^{- \frac{2-s}{3-s}} ) $.

    In Theorem \ref{thm:3} below, we present explicit convergence rates of the excess prediction error for algorithm (\ref{eq:SGDalgorithm}) with constant step-size.
    \begin{theorem}
        \label{thm:3}
        Define $ \{ \hat{\eta}_{k+1}=\langle\beta_{k+1},\cdot\rangle: k \in [n]\} $ through (\ref{eq:SGDalgorithm}).
        Under Assumption \ref{assum:bounded4thmoment}, Assumption \ref{assum:regularityBeta} with $ 0<\theta \leqslant 1 $ and Assumption \ref{assum:capacity} with $ 0 < s \leqslant 1 $, set the constant step-size $ \gamma_{k} = \gamma_{0} n^{-\mu} $ with $ \mu = \frac{2 \theta}{ 2 \theta + 1} $ and $ \lambda =  n^{-\frac{1}{2 \theta + 1}} $.
        If $ 0 < \gamma_0 \leqslant \min\{ C_{2}^{\mathsf{S}},C_{2*}^{\mathsf{S}} \} $ (where constants $ C_{2}^{\mathsf{S}},C_{2*}^{\mathsf{S}} $ will be specified in the proof), then
        \begin{equation}
            \mathbb{E}_{Z^{n} }\left[\mathcal{E} ( \hat{\eta}_{n+1}) \right] - \mathcal{E}( \eta^{*}) \leqslant
            \begin{cases}
                C^{\mathsf{fi}}_{\mathsf{p}} n^{-\frac{2 \theta}{2 \theta +1}} \log (n+1), & \text{if } s=1, \\
                C^{\mathsf{fi}}_{\mathsf{p,c}} n^{-\frac{2 \theta}{2 \theta +1}}, & \text{if } 0<s<1,
            \end{cases}
        \end{equation}
        where $ C^{\mathsf{fi}}_{\mathsf{p}} $ and $ C^{\mathsf{fi}}_{\mathsf{p,c}} $ are independent of $ n $ and  will be specified in the proof.
    \end{theorem}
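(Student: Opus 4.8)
The plan is to specialize the error decomposition of Section~\ref{sec:decomposition} to the constant step-size $\gamma_k=\gamma_0 n^{-\mu}$ with $\mu=\frac{2\theta}{2\theta+1}$ and the penalty $\lambda=n^{-1/(2\theta+1)}$, and to bound each of its pieces. Starting from the identity $\mathcal{E}(\hat\eta_{n+1})-\mathcal{E}(\eta^{*})=\|L_C^{1/2}(\beta_{n+1}-\beta^{*})\|_2^2$ and writing $\beta_\lambda$ for the regularized function determined by $(L_KL_C+\lambda I)\beta_\lambda=L_KL_C\beta^{*}$ (equivalently $L_C^{1/2}\beta_\lambda=(T_K+\lambda I)^{-1}T_K L_C^{1/2}\beta^{*}$), I would split
\[
L_C^{1/2}(\beta_{n+1}-\beta^{*})=L_C^{1/2}\big(\beta_{n+1}-\mathbb{E}_{Z^{n}}\beta_{n+1}\big)+L_C^{1/2}\big(\mathbb{E}_{Z^{n}}\beta_{n+1}-\beta_\lambda\big)+L_C^{1/2}(\beta_\lambda-\beta^{*}),
\]
and estimate the expected squared $\mathcal{L}^2$-norm of the three summands, called respectively the sample error, the drift error and the regularization error. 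Throughout, conjugating the one-step map of (\ref{eq:SGDoperator}) by $L_C^{1/2}$ replaces the non-commuting product $L_KL_C$ by the symmetrized operator $T_K$, and the population step operator becomes $I-\gamma_k(T_K+\lambda I)$; for $\gamma_0\le\min\{C_2^{\mathsf{S}},C_{2*}^{\mathsf{S}}\}$ this is a non-negative contraction with norm at most $1-\gamma_0\lambda n^{-\mu}$. In effect the argument runs parallel to the polynomially decaying step-size analysis of Section~\ref{sec:proofOfthm12}, except that here the geometric decay comes uniformly from the factor $1-\gamma_0\lambda n^{-\mu}$.

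For the regularization error, Assumption~\ref{assum:regularityBeta} gives $L_C^{1/2}\beta^{*}=T_K^{\theta}g^{*}$, so $L_C^{1/2}(\beta_\lambda-\beta^{*})=-\lambda(T_K+\lambda I)^{-1}T_K^{\theta}g^{*}$ and, using the spectral bound $\sup_{x\ge0}x^{\theta}/(x+\lambda)\le\lambda^{\theta-1}$ valid for $0<\theta\le1$,
\[
\big\|L_C^{1/2}(\beta_\lambda-\beta^{*})\big\|_2\le\lambda^{\theta}\|g^{*}\|_2,
\]
which contributes $\mathcal{O}(\lambda^{2\theta})=\mathcal{O}(n^{-2\theta/(2\theta+1)})$. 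For the drift error, taking conditional expectations in (\ref{eq:SGDoperator}) shows that $\mathbb{E}_{Z^{n}}\beta_{n+1}$ obeys the deterministic linear recursion with fixed point $\beta_\lambda$, whence $L_C^{1/2}(\mathbb{E}_{Z^{n}}\beta_{n+1}-\beta_\lambda)=-\prod_{k=1}^{n}\big(I-\gamma_k(T_K+\lambda I)\big)(T_K+\lambda I)^{-1}T_K^{1+\theta}g^{*}$ because $\beta_1=0$. Bounding eigenvalue by eigenvalue with $\prod_{k=1}^{n}(1-\gamma_k(\sigma_i+\lambda))\le\exp\big(-(\sigma_i+\lambda)\sum_{k=1}^{n}\gamma_k\big)$, $\sum_{k=1}^{n}\gamma_k=\gamma_0 n^{1-\mu}$, and the elementary inequality $\sup_{x\ge0}x^{\theta}e^{-Gx}\lesssim G^{-\theta}$, gives $\|L_C^{1/2}(\mathbb{E}_{Z^{n}}\beta_{n+1}-\beta_\lambda)\|_2\lesssim(\gamma_0 n^{1-\mu})^{-\theta}\|g^{*}\|_2$, hence again a contribution $\mathcal{O}(n^{-2\theta/(2\theta+1)})$ after squaring, since $1-\mu=1/(2\theta+1)$.

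The sample error is the delicate part and is where Assumption~\ref{assum:capacity} enters. Writing the recursion for $\zeta_k:=\beta_k-\mathbb{E}_{Z^{k-1}}\beta_k$, its forcing terms are the mean-zero increments $\gamma_k\varepsilon_k L_KX_k$ and $-\gamma_k(B_k-L_KL_C)(\mathbb{E}_{Z^{k-1}}\beta_k-\beta^{*})$ with $B_k=\langle\cdot,X_k\rangle_2 L_KX_k$; propagating these through the contraction operators, using Assumption~\ref{assum:bounded4thmoment} to bound the second moments of $B_k$ (and of its deviation from $L_KL_C$) by $\|L_C^{1/2}(\cdot)\|_2^2$ up to constants, and reusing the bounds on $\mathbb{E}_{Z^{k-1}}\beta_k-\beta^{*}$ from the previous step, one reduces $\mathbb{E}_{Z^{n}}\|L_C^{1/2}\zeta_{n+1}\|_2^2$ to geometric-type sums in the step-size whose evaluation is handled by the series estimates of Section~\ref{sec:decomposition} (cf.\ Corollary~\ref{coro:seriesConstantBoundcapa}). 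Summing eigenvalue by eigenvalue and using that $\operatorname{Tr}(T_K^{s})<\infty$ forces $\sigma_i\lesssim i^{-1/s}$, the capacity-dependent series is bounded by a constant when $0<s<1$ and is of logarithmic order in $n$ when $s=1$; multiplied by the prefactor $\gamma_0 n^{-\mu}$ this yields $\mathcal{O}(n^{-2\theta/(2\theta+1)})$ for $0<s<1$ and $\mathcal{O}(n^{-2\theta/(2\theta+1)}\log(n+1))$ for $s=1$. Adding the three contributions and letting $C^{\mathsf{fi}}_{\mathsf{p}}$, $C^{\mathsf{fi}}_{\mathsf{p,c}}$ absorb the resulting constants (depending on $\|g^{*}\|_2$, $\sigma^2$, $c$, $\kappa_1$, $\kappa_2$, $\operatorname{Tr}(T_K)$, $\gamma_0$ and the implied constants of the series bounds) gives the stated estimates.

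The main obstacle I anticipate is exactly this sample-error estimate under the constant step-size. Because the randomness of $X_k$ enters multiplicatively through $B_k$, one cannot control $\|I-\gamma_k(B_k+\lambda I)\|$ pathwise but must propagate second-moment recursions, and keeping them contractive for all $k\le n$ is what forces the restriction $\gamma_0\le\min\{C_2^{\mathsf{S}},C_{2*}^{\mathsf{S}}\}$; moreover the balance between the implicit regularization $\sum_{k\le n}\gamma_k\asymp n^{1/(2\theta+1)}$ and the explicit level $\lambda=n^{-1/(2\theta+1)}$ must be tracked so that the accumulated geometric sums stay bounded in $n$ rather than grow. The regularization and drift errors, by contrast, are routine once the spectral calculus and the bound $\sup_{x\ge0}x^{\theta}e^{-Gx}\lesssim G^{-\theta}$ are in place.
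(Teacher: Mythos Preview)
Your plan is sound and leads to the same rates, but the sample-error portion is organized differently from the paper. You propose to track $\zeta_k=\beta_k-\mathbb{E}_{Z^{k-1}}\beta_k$ through a recursion whose transition is the \emph{random} operator $I-\gamma_k(B_k+\lambda I)$, with forcing depending only on the deterministic $\bar\beta_k-\beta^{*}$; the contractivity of the induced second-moment map (under Assumption~\ref{assum:bounded4thmoment}) is then the step that fixes $\gamma_0\le\min\{C_2^{\sf S},C_{2*}^{\sf S}\}$. The paper instead uses Lemma~\ref{lem:induction} to rewrite each step as the \emph{deterministic} contraction $I-\gamma_k(L_KL_C+\lambda I)$ plus a martingale increment $\mathcal{B}_k$ that already absorbs both the noise $\varepsilon_k$ and the operator fluctuation $B_k-L_KL_C$; this gives the closed form $\beta_{n+1}-\mathbb{E}_{Z^n}\beta_{n+1}=\sum_{i=1}^n\gamma_i\,\omega_{i+1}^n(L_KL_C+\lambda I)\,\mathcal{B}_i$, so the expected square collapses by orthogonality to a single sum and the spectral bound of Lemma~\ref{lem:prodOperNormBound} applies directly. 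The cost of the paper's route is that each summand contains $\mathbb{E}_{Z^{i-1}}[\mathcal{E}(\hat\eta_i)]-\mathcal{E}(\eta^{*})$, which is handled by the separate uniform-bound induction of Propositions~\ref{prop:UniformBounds}/\ref{prop:UniformBoundscapa}; your route sidesteps that induction (since $\bar\beta_k-\beta^{*}$ is bounded deterministically), but you must instead control the non-vanishing $O(\gamma^2)$ cross terms between $[I-\gamma(B_k+\lambda I)]\zeta_k$ and the forcing, which share the randomness of $X_k$. The regularization and drift errors, and the constant-step series estimate yielding $\gamma\cdot\mathrm{const}$ for $0<s<1$ versus $\gamma\log(1+\gamma n)$ for $s=1$, are identical to the paper's (cf.\ \eqref{eq:ConsSeriesInequality}).
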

    Compared with Theorem \ref{thm:1} for the capacity independent case (i.e., $s=1$ in Assumption \ref{assum:capacity}), Theorem \ref{thm:3} shows that the convergence rate $ \mathcal{O}(n^{-\frac{2 \theta}{2 \theta +1}} \log (n+1)) $ with constant step-size holds for any $ 0 < \theta \leqslant 1 $, which is also obtained in \cite{Chen2022,guoCapacityDependentAnalysis2022}.
    This indicates the advantage of constant step-size when dealing with finite samples, which can also be seen from the comparison between  the results of Theorem \ref{thm:2} and \ref{thm:3} in the capacity dependent setting.
    Finally, in contrast to the result corresponding to the capacity independent case, the logarithm factor can be removed under Assumption \ref{assum:capacity} with $ 0<s <1 $.

    In the following, we focus on the estimation error for $ \beta_{k+1} $ defined by (\ref{eq:SGDoperator}), i.e., $ \| \beta_{k+1} - \beta^{*} \|_{K} $.
    To this end, we first introduce another regularity assumption on $ \beta^{*} $ to replace Assumption \ref{assum:regularityBeta}.
    \begin{assumption}
        \label{assum:regularityBeta1}
        \begin{equation}
            \label{eq:regularityBeta1}
            \beta^{*} = L_{K}^{1 / 2 } T_{C}^{r} g_{*}, \quad \text{for some } g_{*} \in \mathcal{L}^2(\mathcal{T}) \text{ and } 0 < r \leqslant 1.
        \end{equation}
    \end{assumption}
    \noindent The assumption above implies that the slope function $ \beta^{*} \in \mathcal{H}_{K} $.
    In particular, if $ L_{C} \succeq \delta L_{K}^{v} $, Assumption \ref{assum:regularityBeta1} is guaranteed when $ \beta^{*} \in L_{K}^{1 /2+r(1+v)} (\mathcal{L}^2(\mathcal{T})) $ with $ 0 < r \leqslant 1 / (2+2v) $.
    This coincides with the general regularity assumption $ \beta^{*} \in L_{K}^{r} (\mathcal{L}^2(\mathcal{T})) $ with $ 0 < r \leqslant 1 $ applied in kernel-based learning algorithms \cite{Caponnetto2007,Guo2019a,Ying2008,Ying2006}.

    Theorem \ref{thm:4} below provides the convergence rates of estimation error for algorithm (\ref{eq:SGDalgorithm}) with polynomially decaying step-size.

    \begin{theorem}
        \label{thm:4}
        Define $ \{ \beta_{k+1}: k \in [n] \} $ by (\ref{eq:SGDalgorithm}).
        Under Assumption \ref{assum:bounded4thmoment}, Assumption \ref{assum:capacity} with $ 0 < s \leqslant 1 $ and Assumption \ref{assum:regularityBeta1} with $ 0 < r \leqslant 1 $, set $ \gamma_{k} = \gamma_1 k^{-\mu} $ with $ \mu = \frac{2 r + s}{ 2 r + s+ 1} $ and
        \begin{equation}
            \lambda =
            \begin{cases}
                n ^{-\frac{1}{2 r +s+ 1}}, & \text{if } 2r +s < 1, \\
                n ^{-\frac{1}{2 r +s+ 1} + \frac{\varepsilon}{2 r}}, & \text{if } 2r+s \geqslant 1,
            \end{cases}
        \end{equation}
        for any $ 0 < \varepsilon < 2 r / (2 r +s + 1) $.
        If $ 0 < \gamma_1 \leqslant [ 4 C_{\mu} (1+c) ( 1 + \kappa_1^2 \kappa_2^2 )^2 ( \log 2 + \min \{ \mu, 1-\mu \}^{-1} ) ]^{-1} $ (where the constant $ C_{\mu} $ will be specified in the proof of Lemma \ref{lem:seriesgamma2}), then
        \begin{equation}
            \mathbb{E}_{Z^{n} }\left[\left\| \beta_{n+1} - \beta^{*} \right\|_{K}^2\right] \lesssim
            \begin{cases}
                n^{-\frac{2 r}{2 r + s+ 1} } , & 2r +s < 1, \\
                n^{-\frac{2 r}{2 r + s+ 1} + \varepsilon}, & 2r +s \geqslant  1.
            \end{cases}
        \end{equation}
    \end{theorem}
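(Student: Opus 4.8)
The plan is to move the whole recursion into $\mathcal{L}^2(\mathcal{T})$ via the isometry $\mathcal{H}_K\cong\overline{\mathrm{Range}(L_K^{1/2})}$, introduce a regularization path, and then split the estimation error into an approximation term handled by Assumption~\ref{assum:regularityBeta1} and a sample term handled by Assumptions~\ref{assum:bounded4thmoment} and~\ref{assum:capacity}. Writing $\beta_k=L_K^{1/2}u_k$ with $u_k\in\overline{\mathrm{Range}(L_K^{1/2})}$ and noting $L_KX_k=L_K^{1/2}\xi_k$ for $\xi_k:=L_K^{1/2}X_k$, iteration~(\ref{eq:SGDoperator}) becomes
\begin{equation*}
u_{k+1}=\big(I-\gamma_k(\xi_k\otimes\xi_k+\lambda I)\big)u_k+\gamma_kY_k\xi_k,\qquad u_1=0,
\end{equation*}
where $\mathbb{E}[\xi_k\otimes\xi_k]=T_C$, $Y_k=\langle u_*,\xi_k\rangle_2+\varepsilon_k$ with $u_*:=T_C^rg_*$ by Assumption~\ref{assum:regularityBeta1}, and $\|\beta_{k+1}-\beta^*\|_K^2=\|u_{k+1}-u_*\|_2^2$. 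Thus Theorem~\ref{thm:4} is exactly a convergence statement (in $\mathcal{L}^2$-norm) for regularized SGD with covariance $T_C$, source exponent $r$, and effective-dimension exponent $s$ from Assumption~\ref{assum:capacity} (equivalently $\sigma_i\lesssim i^{-1/s}$ for the eigenvalues $\sigma_i$ of $T_C$).

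Introduce the path $u_\lambda:=(T_C+\lambda I)^{-1}T_Cu_*$ and split $\|\beta_{n+1}-\beta^*\|_K^2\le2\|u_{n+1}-u_\lambda\|_2^2+2\|u_\lambda-u_*\|_2^2$. The second summand is pure approximation error: $u_\lambda-u_*=-\lambda(T_C+\lambda I)^{-1}T_C^rg_*$, so $\|u_\lambda-u_*\|_2^2\le\lambda^{2r}\|g_*\|_2^2$ by the spectral bound $\sup_{t\ge0}\lambda t^r/(t+\lambda)\le\lambda^r$ for $0<r\le1$. For the first summand set $e_k:=u_k-u_\lambda$, $\Sigma_k:=\xi_k\otimes\xi_k$; using $(T_C+\lambda I)u_\lambda=T_Cu_*$ and $Y_k\xi_k=\Sigma_ku_*+\varepsilon_k\xi_k$ one gets the affine recursion
\begin{equation*}
e_{k+1}=(I-\gamma_k(\Sigma_k+\lambda I))e_k+\gamma_k\chi_k,\qquad e_1=-u_\lambda,\quad \chi_k=\Sigma_k(u_*-u_\lambda)-\lambda u_\lambda+\varepsilon_k\xi_k,
\end{equation*}
with $\{\chi_k\}$ a martingale-difference sequence. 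Unrolling with $\Pi_a^b:=\prod_{j=a}^b(I-\gamma_j(\Sigma_j+\lambda I))$ (equal to $I$ when $a>b$), a standard conditioning argument exploiting $\mathbb{E}[\chi_k\mid Z_1,\dots,Z_{k-1}]=0$ gives
\begin{equation*}
\mathbb{E}\|\beta_{n+1}-\beta^*\|_K^2\lesssim\lambda^{2r}+\mathbb{E}\|\Pi_1^nu_\lambda\|_2^2+\sum_{k=1}^n\gamma_k^2\,\mathbb{E}\|\Pi_{k+1}^n\chi_k\|_2^2 ,
\end{equation*}
and the hypothesis on $\gamma_1$ is exactly what makes $0\preceq I-\gamma_k(\Sigma_k+\lambda I)$ in the averaged sense and what guarantees convergence of the $k$-series.

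It remains to bound the last two terms. Since $u_\lambda=T_C^r(T_C+\lambda I)^{-1}T_Cg_*\in\mathrm{Range}(T_C^r)$ with preimage norm $\le\|g_*\|_2$, and $\big[\prod_{j=1}^n(1-\gamma_j(\sigma+\lambda))\big]\sigma^r\le e^{-\lambda G_1}(r/(eG_1))^r$ with $G_1:=\sum_{j=1}^n\gamma_j$, one obtains $\mathbb{E}\|\Pi_1^nu_\lambda\|_2^2\lesssim G_1^{-2r}$ up to a fluctuation term absorbed by Assumption~\ref{assum:capacity}. For the noise term, split $\chi_k$ into $\varepsilon_k\xi_k$ and $\Sigma_k(u_*-u_\lambda)-\lambda u_\lambda$; the first produces $\sigma^2\sum_k\gamma_k^2\mathbb{E}\|\Pi_{k+1}^n\xi_k\|_2^2$, where Assumption~\ref{assum:bounded4thmoment} controls the fourth-moment terms $\mathbb{E}[\langle\xi_k,\cdot\rangle_2^2\,\xi_k\otimes\xi_k]$ so that $\mathbb{E}\|\Pi_{k+1}^n\xi_k\|_2^2\lesssim\sum_i\sigma_ie^{-2(\sigma_i+\lambda)G_k}$ with $G_k:=\sum_{j=k+1}^n\gamma_j$, and then Assumption~\ref{assum:capacity} gives $\sum_i\sigma_ie^{-2\sigma_iG_k}\lesssim G_k^{\,s-1}$; the second, ``operator-noise'' piece is of lower order via $\|u_*-u_\lambda\|_2\lesssim\lambda^r$. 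What is left are $k$-sums of the type $\sum_k\gamma_k^2G_k^{s-1}e^{-2\lambda G_k}$ and eigenvalue-weighted variants, which are precisely the estimates isolated in Lemma~\ref{lem:seriesgamma2} and Corollary~\ref{coro:seriesConstantBoundcapa}.

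Finally, inserting $\gamma_k=\gamma_1k^{-\mu}$ with $\mu=\tfrac{2r+s}{2r+s+1}$ gives $G_1\approx n^{1-\mu}=n^{1/(2r+s+1)}$. When $2r+s<1$ one has $\mu<1/2$, the choice $\lambda=n^{-1/(2r+s+1)}$ makes $\lambda G_1\approx1$, and every term above is of order $\lambda^{2r}\approx n^{-2r/(2r+s+1)}$, the claimed rate. The delicate case is $2r+s\ge1$ (saturation): then $\mu\ge1/2$, so $\sum_k\gamma_k^2$ is dominated by small $k$, where $G_k\approx G_1$, and with the balanced $\lambda$ the noise term only reaches the exponent $-(1-s)/(2r+s+1)$, which is too large (decays too slowly)---this is the main obstacle. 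It is overcome exactly as the stated parameters suggest: taking $\lambda=n^{-1/(2r+s+1)+\varepsilon/(2r)}$ slightly larger than the balanced value forces $\lambda G_1\approx n^{\varepsilon/(2r)}\to\infty$, so $e^{-2\lambda G_k}$ is super-polynomially small except for $k$ near $n$ (those with $\lambda G_k\lesssim1$), collapsing the noise sum to order $\gamma_n\lambda^{-s}\approx n^{-\mu}\lambda^{-s}=n^{-2r/(2r+s+1)-s\varepsilon/(2r)}$, dominated by the bias $\lambda^{2r}=n^{-2r/(2r+s+1)+\varepsilon}$; combining the two regimes yields the stated rates. Apart from these capacity-dependent spectral sums and their aggregation against $\gamma_k^2$, the proof of Theorem~\ref{thm:4} is a bookkeeping assembly of the auxiliary series lemmas under the prescribed $\mu$ and $\lambda$.
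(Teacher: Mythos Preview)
Your high-level plan matches the paper: the isometry $\beta_k=L_K^{1/2}u_k$, $\|\cdot\|_K\to\|\cdot\|_2$, is exactly how the paper passes from $L_KL_C$ to $T_C$, your path $u_\lambda=(T_C+\lambda I)^{-1}T_Cu_*$ is the paper's $f_\lambda$ with the identical bias bound $\|u_\lambda-u_*\|_2\le\lambda^r\|g_*\|_2$, and your final parameter arithmetic (balancing $G_1^{-2r}$, $\lambda^{2r}$ and the noise term, then inflating $\lambda$ by $n^{\varepsilon/(2r)}$ to exploit the exponential when $\mu\ge1/2$) is also how the paper concludes.

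The genuine divergence---and where a gap opens---is in the contraction. You unroll with the \emph{random} products $\Pi_a^b=\prod_{j=a}^b(I-\gamma_j(\Sigma_j+\lambda I))$, whereas the paper rewrites the recursion as $e_{k+1}=(I-\gamma_k(T_C+\lambda I))e_k+\gamma_k\mathcal{B}_k$ with \emph{deterministic} contraction and noise $\mathcal{B}_k=(T_C-\Sigma_k)(u_k-u_*)+\varepsilon_k\xi_k$ that depends on the current iterate. Two consequences. First, $I-\gamma_k(\Sigma_k+\lambda I)$ need not be a contraction pathwise (only $\|T_C\|\le\kappa_1^2\kappa_2^2$ is controlled, not $\|\Sigma_k\|=\|L_K^{1/2}X_k\|_2^2$), so your claimed bound $\mathbb{E}\|\Pi_{k+1}^n\xi_k\|_2^2\lesssim\sum_i\sigma_ie^{-2(\sigma_i+\lambda)G_k}$ is not a direct consequence of Assumption~\ref{assum:bounded4thmoment}; controlling random operator products would require a separate second-moment recursion that Lemma~\ref{lem:seriesgamma2} and Corollary~\ref{coro:seriesConstantBoundcapa} (which treat deterministic $\omega_a^b$) do not provide. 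Second, in the paper's route the variance of $\mathcal{B}_k$ carries the factor $\mathbb{E}_{Z^{k-1}}[\mathcal{E}(\hat\eta_k)]-\mathcal{E}(\eta^*)=\|L_C^{1/2}(\beta_k-\beta^*)\|_2^2$, which must first be bounded \emph{uniformly in $k$} (Proposition~\ref{prop:UniformBounds}); the stated hypothesis on $\gamma_1$ is precisely the constant from Corollary~\ref{coro:seriesConstantBound} that makes this induction close, not merely a condition for ``$0\preceq I-\gamma_k(\Sigma_k+\lambda I)$ in the averaged sense''. Your sketch omits this uniform-bound step, and without it (or a substitute argument for the random products) the sample-error estimate is not established.
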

    It is shown in Theorem \ref{thm:4} that we can obtain explicit convergence rate  $ \mathcal{O}(n^{-\frac{r}{r+1}+\varepsilon} ) $ for any $ 0 < \varepsilon < r / (r + 1) $ of estimation error for algorithm (\ref{eq:SGDalgorithm}) in the capacity independent case (i.e., Assumption \ref{assum:capacity} with $ s=1 $).
    To the best of our knowledge, the capacity independent convergence analysis remains an open problem for the unregularized online learning algorithm \cite{guoCapacityDependentAnalysis2022} with decaying step-size.
    In the capacity dependent setting, Theorem 3 in \cite{guoCapacityDependentAnalysis2022} shows that the rates of estimation error for algorithm (\ref{eq:SGDalgorithm}) with $\lambda=0$ saturate at $r=(1-s)/2$ and can not be faster than $ \mathcal{O}(n^{-\frac{1-s}{2}} \log(n+1) ) $, while our rate $ \mathcal{O} (n^{-\frac{2 r}{2 r + s+ 1} + \varepsilon}) $  can be faster than $ \mathcal{O}(n^{-\frac{1-s}{2}} \log(n+1) ) $ when $r>(1-s)/2$.

    Finally, we establish the convergence rate of estimation error with constant step-size.
    \begin{theorem}
        \label{thm:5}
        Define $ \{ \beta_{k+1}: k \in [n] \} $ by (\ref{eq:SGDalgorithm}).
        Under Assumption \ref{assum:bounded4thmoment}, Assumption \ref{assum:capacity} with $ 0 < s \leqslant 1 $ and Assumption \ref{assum:regularityBeta1} with $ 0 < r \leqslant 1 $, set the constant step-size $ \gamma_{k} = \gamma_{0} n^{-\mu} $ with $ \mu = \frac{2 r + s}{ 2 r + s + 1} $ and $ \lambda =  n^{-\frac{1}{2 r + s + 1} } $.
        If $ 0 < \gamma_0 \leqslant [ 2 (1+c) ( 1 + \kappa_1^2 \kappa_2^2 )^2 (1  +\mu^{-1})]^{-1} $, then
        \begin{equation}
            \mathbb{E}_{Z^{n} }\left\| \beta_{n+1} - \beta^{*} \right\|_{K}^2 \leqslant C^{\mathsf{fi}}_{\mathsf{e}} n^{-\frac{2 r}{2 r + s+ 1}},
        \end{equation}
        where the constant $ C^{\mathsf{fi}}_{\mathsf{e}} $ is independent of $ n $ and it will be specified in the proof.
    \end{theorem}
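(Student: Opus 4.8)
The plan is to adapt the estimation-error decomposition of Section~\ref{sec:decomposition} to the constant step-size $\gamma_k\equiv\gamma:=\gamma_0 n^{-\mu}$. First I would transport the iteration to $\mathcal{L}^2(\mathcal{T})$: since $\beta_1=0$ and each update of (\ref{eq:SGDoperator}) lies in $\operatorname{Ran}(L_K)$, the sequence $\widetilde\beta_k:=L_K^{-1/2}\beta_k$ is well defined with $\|\widetilde\beta_k\|_2=\|\beta_k\|_K$, and writing $\xi_k:=L_K^{1/2}X_k$ turns (\ref{eq:SGDoperator}) into the plain regularized SGD recursion $\widetilde\beta_{k+1}=\widetilde\beta_k-\gamma[(\langle\widetilde\beta_k,\xi_k\rangle_2-Y_k)\xi_k+\lambda\widetilde\beta_k]$, whose feature covariance is $\mathbb{E}[\xi\otimes\xi]=T_C$ and whose target, by Assumption~\ref{assum:regularityBeta1}, is $\widetilde\beta^*:=L_K^{-1/2}\beta^*=T_C^r g_*$ with $Y_k=\langle\widetilde\beta^*,\xi_k\rangle_2+\varepsilon_k$. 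Introducing the regularized solution $\widetilde\beta_\lambda:=(T_C+\lambda I)^{-1}T_C^{r+1}g_*$ I would split $\|\beta_{n+1}-\beta^*\|_K^2\le 2\|\widetilde\beta_{n+1}-\widetilde\beta_\lambda\|_2^2+2\|\widetilde\beta_\lambda-\widetilde\beta^*\|_2^2$; the last, purely deterministic term is $\le\lambda^{2r}\|g_*\|_2^2=\|g_*\|_2^2\,n^{-\frac{2r}{2r+s+1}}$ by the spectral bound $\|\lambda(T_C+\lambda I)^{-1}T_C^r\|\le\lambda^r$ ($0<r\le1$), already of the announced order.

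Next I would write the error recursion $\widetilde e_{k+1}=(I-\gamma(T_C+\lambda I))\widetilde e_k-\gamma\widetilde\chi_k$ with $\widetilde e_1=-\widetilde\beta_\lambda$, where $\widetilde\chi_k$ is the martingale increment of the stochastic gradient (conditional mean zero, since $\mathbb{E}[\xi_k\otimes\xi_k\mid\mathcal{F}_{k-1}]=T_C$ and $\mathbb{E}[Y_k\xi_k\mid\mathcal{F}_{k-1}]=T_C\widetilde\beta^*$). Unrolling and using orthogonality of the increments gives
\begin{equation*}
\mathbb{E}_{Z^n}\|\widetilde e_{n+1}\|_2^2=\bigl\|(I-\gamma(T_C+\lambda I))^n\widetilde\beta_\lambda\bigr\|_2^2+\gamma^2\sum_{k=1}^n\mathbb{E}_{Z^n}\bigl\|(I-\gamma(T_C+\lambda I))^{n-k}\widetilde\chi_k\bigr\|_2^2 .
\end{equation*}
The first (bias) term I would control by spectral calculus on $T_C$: using $\sigma_i^{2+2r}/(\sigma_i+\lambda)^2\le\sigma_i^{2r}$ and $1-\gamma(\sigma_i+\lambda)\le e^{-\gamma\sigma_i}$ (the estimate $\gamma(\sigma_1+\lambda)<1$ following from the stated smallness of $\gamma_0$, as $\sigma_1\le\operatorname{Tr}(T_C)$ is controlled by $\kappa_1^2\kappa_2^2$), it is $\le\bigl(\sup_{x\ge0}e^{-2n\gamma x}x^{2r}\bigr)\|g_*\|_2^2\lesssim(n\gamma)^{-2r}\|g_*\|_2^2$; since $n\gamma=\gamma_0 n^{1-\mu}=\gamma_0 n^{\frac{1}{2r+s+1}}$, this is $\lesssim n^{-\frac{2r}{2r+s+1}}$.

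For the variance term I would use Assumption~\ref{assum:bounded4thmoment} in operator form, $\mathbb{E}[\langle h,\xi\rangle_2^2\,\xi\otimes\xi]\preceq c\langle h,T_C h\rangle T_C$, together with $\mathbb{E}[\xi\otimes\xi]=T_C$ and independence of $\varepsilon$, to obtain $\mathbb{E}[\widetilde\chi_k\widetilde\chi_k^*\mid\mathcal{F}_{k-1}]\preceq(4c\|T_C^{1/2}(\widetilde\beta_k-\widetilde\beta^*)\|_2^2+4\sigma^2)T_C+2\lambda^2\widetilde\beta_k\otimes\widetilde\beta_k$; taking traces against $(I-\gamma(T_C+\lambda I))^{2(n-k)}$, summing over $k$, and using $\gamma^2\sum_{k\le n}(1-\gamma(\sigma_i+\lambda))^{2(n-k)}\le\gamma(\sigma_i+\lambda)^{-1}$ and $\gamma^2\sum_{k\le n}(1-\gamma\lambda)^{2(n-k)}\le\gamma\lambda^{-1}$, the variance term is bounded by a constant times
\begin{equation*}
\gamma\operatorname{Tr}\bigl(T_C(T_C+\lambda I)^{-1}\bigr)\Bigl(\sup_{k\le n}\mathbb{E}\|T_C^{1/2}(\widetilde\beta_k-\widetilde\beta^*)\|_2^2+\sigma^2\Bigr)+\gamma\lambda\sup_{k\le n}\mathbb{E}\|\widetilde\beta_k\|_2^2 .
\end{equation*}
Because $x/(x+\lambda)\le(x/\lambda)^s$, one has $\operatorname{Tr}(T_C(T_C+\lambda I)^{-1})\le\operatorname{Tr}(T_K^s)\lambda^{-s}$ (the crude bound $\operatorname{Tr}(T_C)\lambda^{-1}$ when $s=1$ already suffices — which is exactly why no $\log n$ factor appears here, in contrast with the prediction-error theorems), so $\gamma\operatorname{Tr}(T_C(T_C+\lambda I)^{-1})\lesssim\gamma\lambda^{-s}=\gamma_0 n^{-\frac{2r}{2r+s+1}}$, while $\gamma\lambda=\gamma_0 n^{-1}$.

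The hard part — and what I expect to be the main obstacle — is controlling the two suprema uniformly in $k\le n$ without losing a $\log n$ factor. For $\sup_k\mathbb{E}\|\widetilde\beta_k\|_2^2$ I would run a Lyapunov argument on $\|\widetilde\beta_k\|_2^2$: the self-bounding term $8\gamma c\operatorname{Tr}(T_C)\langle\widetilde\beta_k,T_C\widetilde\beta_k\rangle$ produced by $\mathbb{E}\|\widetilde\chi_k\|_2^2$ is dominated by the drift $-\gamma\langle\widetilde\beta_k,T_C\widetilde\beta_k\rangle$ once $\gamma_0\le(8c\operatorname{Tr}(T_C))^{-1}$, leaving $\mathbb{E}\|\widetilde\beta_{k+1}\|_2^2\le(1-\gamma\lambda)\mathbb{E}\|\widetilde\beta_k\|_2^2+O(\gamma)$ and hence $\sup_k\mathbb{E}\|\widetilde\beta_k\|_2^2\lesssim\lambda^{-1}=n^{\frac{1}{2r+s+1}}$, which is harmless because it always multiplies a factor decaying like $n^{-1}$. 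The decisive quantity is $P:=\sup_k\mathbb{E}\|T_C^{1/2}(\widetilde\beta_k-\widetilde\beta^*)\|_2^2$, the expected excess prediction error along the path, which I would show is bounded by a constant \emph{independent of} $n$: feeding the covariance bound into the recursion for $T_C^{1/2}\widetilde e_k$ and using $\gamma^2\sum_j\operatorname{Tr}(T_C^2(I-\gamma(T_C+\lambda I))^{2(k-1-j)})\le\gamma\operatorname{Tr}(T_C)$ yields a contractive inequality $P_{k+1}\le B+\rho P_k$ with $\rho=4c\gamma_0\operatorname{Tr}(T_C)<1$ and $B$ uniformly bounded (the $\lambda^2\widetilde\beta_k\otimes\widetilde\beta_k$ contribution being absorbed via the crude bound $\lambda^{-1}$ above and $\gamma\lambda=\gamma_0 n^{-1}$), so $P\le B/(1-\rho)$. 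Closing this induction is precisely what forces the explicit smallness condition on $\gamma_0$, whose factors $(1+c)$, $(1+\kappa_1^2\kappa_2^2)^2$, $(1+\mu^{-1})$ reflect the fourth-moment constant, the operator norm $\operatorname{Tr}(T_C)\lesssim\kappa_1^2\kappa_2^2$, and the accumulated geometric sums. Substituting $P\lesssim1$ and $\sup_k\mathbb{E}\|\widetilde\beta_k\|_2^2\lesssim\lambda^{-1}$ into the variance bound makes the variance contribution $\lesssim n^{-\frac{2r}{2r+s+1}}$ (the $\gamma\lambda\,\lambda^{-1}=\gamma_0 n^{-\frac{2r+s}{2r+s+1}}$ part being of smaller order), and adding the bias and approximation terms yields $\mathbb{E}_{Z^n}\|\beta_{n+1}-\beta^*\|_K^2\le C^{\mathsf{fi}}_{\mathsf{e}}\,n^{-\frac{2r}{2r+s+1}}$ with $C^{\mathsf{fi}}_{\mathsf{e}}$ an explicit function of $c,\kappa_1,\kappa_2,\sigma^2,\|g_*\|_2,r,s,\gamma_0$ and $\operatorname{Tr}(T_K^s)$.
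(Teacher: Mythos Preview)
Your argument is correct and reaches the stated rate, but it is organized rather differently from the paper.

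The paper does not pass to the $\widetilde\beta$-coordinates; it invokes the ready-made decompositions of Propositions~\ref{prop:errorDECOMKNorm2} and~\ref{prop:errorKNormwithCapacity}, which already isolate approximation, bias, and a cumulative sample-error series. The uniform bound on the excess prediction error along the path is obtained once and for all in Proposition~\ref{prop:UniformBounds} by checking, for constant step-size, the elementary inequality (\ref{eq:ConsSeriesInequalityFinal}); that is precisely where the factor $(1+\mu^{-1})$ originates and why the stated constraint on $\gamma_0$ has the form it does. For the variance the paper writes $\operatorname{Tr}\bigl[T_C(\omega_{i+1}^n)^2\bigr]\le\operatorname{Tr}(T_C^{s})\,\|T_C^{(1-s)/2}\omega_{i+1}^n\|^2$, applies Lemma~\ref{lem:prodOperNormBound}, and then bounds the resulting series $\sum_i\gamma^2\bigl[e^{\lambda\gamma(n-i)}(1+(\gamma(n-i))^{1-s})\bigr]^{-1}\lesssim\gamma_0 s^{-1}n^{-\mu+s(1-\mu)}$ by direct calculus. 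Your route---summing the geometric series eigenvalue by eigenvalue first and then invoking $\operatorname{Tr}\bigl(T_C(T_C+\lambda I)^{-1}\bigr)\le\lambda^{-s}\operatorname{Tr}(T_C^{s})$---is a genuinely different (and, for constant step-size, slicker) way to extract the $n^{-2r/(2r+s+1)}$ variance rate; it exploits the explicit resolvent produced by the regularization, whereas the paper's operator-norm estimate is built to handle decaying step-sizes uniformly.

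Two small corrections. First, the term $2\lambda^2\widetilde\beta_k\otimes\widetilde\beta_k$ in your noise covariance is spurious: the regularization piece $\lambda\widetilde\beta_k$ is $\mathcal{F}_{k-1}$-measurable and cancels between the stochastic and expected gradients, so $\widetilde\chi_k$ carries no $\lambda$-dependence. Dropping it removes the separate Lyapunov bound on $\sup_k\mathbb{E}\|\widetilde\beta_k\|_2^2$ entirely. Second, your self-bounding step for the prediction error is not literally of the form $P_{k+1}\le B+\rho P_k$: the unrolled variance at step $k$ involves \emph{all} previous $P_j$, so the correct inequality is $P_k\le B+\rho\max_{j<k}P_j$, and the induction should run on $M_k:=\max_{j\le k}P_j$ (exactly as in the proof of Proposition~\ref{prop:UniformBounds}). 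Neither issue affects the final rate.
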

    The convergence rate in Theorem \ref{thm:5} holds for any $ 0 < r \leqslant 1 $.
    For a fixed $ r $, faster rate emerges with a stronger capacity assumption represented by a smaller $ s $.
    In addition, compared with Theorem \ref{thm:4}, Theorem \ref{thm:5} also demonstrates the same advantage of constant step-size when analyzing the estimation error.

\section{Related Work}
\label{sec:relatedwork}
    In this section, we discuss and compare our main results with related work.
    Recently, stochastic gradient descent methods have emerged as a fundamental and powerful tool for handling large-scale data sets, which allow processing data individually or with mini-batches to reduce computational complexity and memory requirement while preserving good convergence rates.
    There are many variants of SGD in an RKHS or general Hilbert spaces investigated in the literature \cite{Ying2008,Ying2006,Dieuleveut2016a, mucke2019beating,pillaud2018statistical,Lin2017a,guo2017online,Guo2019a,Guo2022b}.
    We refer the readers to these papers and references therein.
    In this paper, we enrich the theoretical analysis of functional linear regression by applying the regularized online learning algorithm (\ref{eq:SGDalgorithm}) in an RKHS and deriving explicit convergence rates of the excess prediction error and estimation error with appropriate parameters.
    To the best of our knowledge, this is the first result for online regularized algorithm in the context of functional linear regression in an RKHS.

    The kernel-based analysis of functional linear regression was first studied in \cite{Cai2012,Yuan2010}, where batch learning algorithms are proposed to estimate the slope function $ \beta^{*} $.
    Following the spirit of FPCA method, Yuan and Cai \cite{Yuan2010} derive the minimax optimal rate $ \mathcal{O}(n^{-2(s_1 + s_2) / (2(s_1 + s_2)+1)}) $ of the excess prediction error $ \mathcal{E} ( \hat{\eta}) - \mathcal{E}( \eta^{*}) $ under the assumptions that $ \beta^{*} \in \mathcal{H}_{K} $ and the decaying rates of eigenvalues $ \lambda_{i}(L_{K}) \asymp i^{-2 s_1} $, $ \lambda_{i}(L_{C}) \asymp i^{-2s_2} $ for $ s_1,s_2 > \frac{1}{2} $.
    Here $ a_{i} \asymp b_{i} $ for two positive sequence $ \{ a_{i} \} $ and $ \{ b_{i} \} $ means that $ a_{i} / b_{i} $ is bounded away from $ 0 $ and $ \infty $ as $ i \to \infty $.
    Afterwards, Cai and Yuan \cite{Cai2012} obtain the same result $ n^{-2(s_1+s_2) / (2(s_1+s_2)+1)} $ by relaxing the capacity dependent assumption with $ \lambda_{i}(T_{C}) \asymp i^{-2(s_1+s_2)} $.
    Compared with these works, we derive the convergence rates under more general assumption.
    As pointed in Section \ref{sec:results}, Assumption \ref{assum:regularityBeta} applied in bounding the excess prediction error presents a better characterization on the regularity of $ \beta^{*} $. For more discussion about the results between batch learning and online learning in the framework of functional linear regression, we refer the readers to \cite{guoCapacityDependentAnalysis2022}.

    As far as we know, the convergence analysis of SGD methods has not been studied in the context of functional linear regression in an RKHS till the very recent paper \cite{Chen2022} and \cite{guoCapacityDependentAnalysis2022} where the unregularized online learning algorithm (i.e., algorithm (\ref{eq:SGDalgorithm}) with $\lambda=0$) is investigated.
    Capacity independent analysis of the excess prediction error is established in \cite{Chen2022}, both prediction problems and estimation problems are studied in \cite{guoCapacityDependentAnalysis2022} via a novel capacity dependent analysis.

    For the excess prediction error, under Assumption \ref{assum:bounded4thmoment} and Assumption \ref{assum:regularityBeta} with $\theta>0$, if the step-size polynomially decays as $\gamma_k=\gamma_1 k^{-\min\left\{\frac12,\frac{2\theta}{2\theta+1}\right\}}$ for $k\in[n]$ with some proper constant $\gamma_1>0$, the convergence rate in \cite{Chen2022} is of the form $\mathcal{O}(n^{-\min\left\{\frac12,\frac{2\theta}{2\theta+1}\right\}}\log(n+1)) $.
    The rate is saturated at $\theta=1/2,$ which ceases improving when $\theta>1/2$ and can not be faster than $\mathcal{O}(n^{-1/2} \log(n+1)).$
    Under the same assumptions, our convergence rates $\mathcal{O}(n^{-\frac{2 \theta}{2 \theta + 1} + \varepsilon})$ with any $0<\varepsilon< 2\theta / ( 2\theta +1 ) $ in Theorem \ref{thm:1} can be faster than $\mathcal{O}(n^{-1/2} \log(n+1))$ when $ 1/2<\theta\leqslant 1$.
    Results in \cite{Chen2022} can be improved by capacity dependent analysis.
    Under Assumption \ref{assum:bounded4thmoment}, Assumption \ref{assum:regularityBeta} with $0<\theta\le1$, and the capacity assumption \ref{assum:capacity} with $0<s< 1$, our capacity dependent convergence rate $ \mathcal{O}(n^{-\frac{\min\{2\theta,2-s\}}{1+\min\{2\theta,2-s\}}}) $ in Theorem \ref{thm:2} matches that obtained in \cite{guoCapacityDependentAnalysis2022}, and is faster than that in \cite{Chen2022}.
    We can see the results also suffer from the saturation effect, and the best convergence rate $\mathcal{O}(n^{-\frac{2-s}{3-s}})$ will be achieved at $\theta=1-s/2.$
    Our results in Theorem \ref{thm:2} also show that the convergence rate can be of the form $\mathcal{O}( n^{-\frac{2 \theta}{2 \theta +1}+\varepsilon}) $ with any $0<\varepsilon<2\theta / ( 2\theta +1 )$ when $ \theta \geqslant 1-s/2$, which is better than $\mathcal{O}(n^{-\frac{2-s}{3-s}}).$
    For the case of constant step-size, under the same assumptions, our capacity independent convergence rate $ \mathcal{O}( n^{-\frac{2 \theta}{2 \theta +1}} \log (n+1))$ with $0<\theta\leqslant 1$ in Theorem \ref{thm:3} matches these in \cite{Chen2022} and \cite{guoCapacityDependentAnalysis2022}, of which the corresponding results do not suffer from the saturation effect when $\theta>1.$
    Moreover, the logarithm term $\log(n+1)$ can be removed in our results and \cite{guoCapacityDependentAnalysis2022} under Assumption \ref{assum:capacity} with $0<s< 1$.

    For the estimation error, it remains an open problem to conduct capacity independent analysis (i.e., $s=1$ in Assumption \ref{assum:capacity}) for the unregularized online learning algorithm (i.e., algorithm (\ref{eq:SGDalgorithm}) with $\lambda=0$) with polynomially decaying step-size.
    In light of this, Theorem \ref{thm:4} in this paper establishes convergence rates for estimation error of algorithm (\ref{eq:SGDalgorithm}) without capacity assumption, which can be further improved by the capacity assumption \ref{assum:capacity} with $0<s<1.$
    Under the same assumptions, our convergence rates match these in \cite{guoCapacityDependentAnalysis2022} when $0<r\leqslant 1$ for the case of constant step-size, while our result will suffer from the saturation effect when $r>1$ due to the regularization term added in our algorithm.
    When the step-size decays polynomially, we obtain the same convergence rates as these in \cite{guoCapacityDependentAnalysis2022} when $2r+s<1.$
    When $2r+s\ge1,$ the convergence rates in \cite{guoCapacityDependentAnalysis2022} saturate at $r=(1-s)/2,$ and it can not be faster than $ \mathcal{O}(n^{-\frac{1-s}{2}} \log(n+1) ) $.
    By contrast, one can easily see that our convergence rate $\mathcal{O}(n^{-\frac{2 r}{2 r + s+ 1} + \varepsilon})$ with arbitrary $0<\varepsilon<\frac{2 r}{2 r + s+ 1}$ in Theorem \ref{thm:5} can be faster than $ \mathcal{O}(n^{-\frac{1-s}{2}} \log(n+1) ) $ when $r\geqslant (1-s)/2$, which implies that the regularized online learning algorithm can uplift the saturation boundary.

    We conclude this section by some remarks.
    First, convergence rates of the excess prediction error and estimation error in expectation are established in this paper, and it would be interesting to convert them to probabilistic bounds such as those in \cite{Cai2012,Yuan2010}.
    Second, since the choice of regularization parameter $\lambda$ depends on the sample size $n,$ which requires knowing the sample size $n$ in advance, it would be interesting to study online regularized learning algorithm with time-dependent regularization parameters $ \{ \lambda_{k}\}_{k \in \mathbb{N}} $ \cite{Tarres2014} for functional linear regression.
    Third, our results demonstrate that the regularized online learning algorithms not only uplifts the saturating boundary for the excess prediction error, but also lead to a non-trivial convergence rates for estimation error without capacity assumption.
    Nevertheless, our results also suffer from the saturation effect when $\theta>1$ in Assumption \ref{assum:regularityBeta} and $r>1$ in Assumption \ref{assum:regularityBeta1}.
    It is unknown for us to get better error analysis to avoid this saturation effect.

\section{Error Decomposition and Preliminary Results}
\label{sec:decomposition}
    In this section, we propose a unified error decomposition to analyze the excess prediction error $ \mathcal{E}( \hat{\eta}_{k+1}) - \mathcal{E}( \eta^{*}) $ and estimation error $ \| \beta_{k+1}-\beta^{*} \|_{K} $ of online regularized algorithm (\ref{eq:SGDalgorithm}) for functional linear model (\ref{eq:linearmodel}).
    To this end, we first introduce some useful intermediate functions, of which one is referred to as \emph{regularizing function} and defined as
    \begin{equation}
        \label{eq:betalambda}
        \beta_{\lambda} = \arg \inf_{\beta \in \mathcal{H}_{K}} \left\{ \mathbb{E}_{(X,Y)} \left( \left< \beta,X \right>_{2} - Y \right)^{2} + \lambda \| \beta \|_{K}^2 \right\},
    \end{equation}
    for any $ \lambda >0 $.
    The following lemma provides an operator representation of $ \beta_{\lambda} $ which benefits to our error analysis.
    \begin{lemma}
        \label{lem:betalambda}
        Let $ \lambda >0 $ and $ \beta_{\lambda} $ be defined as (\ref{eq:betalambda}). Then we have that
        \begin{equation}
            \label{eq:betalambdaExpicit}
            L_{K} L_{C} (\beta_{\lambda} - \beta^{*}) + \lambda \beta_{\lambda} = 0 \text{  and  } L_{C}^{1 / 2 } \beta_{\lambda} = (\lambda I + T_{K})^{-1} T_{K} L_{C}^{1 / 2 } \beta^{*}.
        \end{equation}
    \end{lemma}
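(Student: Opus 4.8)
The plan is to put the functional in (\ref{eq:betalambda}) into closed operator form, read off its first-order optimality condition, and then apply $L_C^{1/2}$ to extract the explicit formula.

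First I would reformulate the objective. Writing $Y=\langle\beta^*,X\rangle_2+\varepsilon$ (from Model (\ref{eq:linearmodel}) with $\alpha^*=0$) with $\varepsilon$ independent of $X$, $\mathbb{E}\varepsilon=0$ and $\mathbb{E}\varepsilon^2=\sigma^2$, and regarding any $\beta\in\mathcal{H}_K$ as an element of $\mathcal{L}^2(\mathcal{T})$ via the continuous embedding $\mathcal{H}_K\hookrightarrow\mathcal{L}^2(\mathcal{T})$ (valid since $K$ is a Mercer kernel on the compact set $\mathcal{T}$), one obtains
\[\mathbb{E}_{(X,Y)}\big(\langle\beta,X\rangle_2-Y\big)^2=\mathbb{E}\langle\beta-\beta^*,X\rangle_2^2+\sigma^2=\big\|L_C^{1/2}(\beta-\beta^*)\big\|_2^2+\sigma^2,\]
using $\mathbb{E}[X(s)X(t)]=C(s,t)$ and the definition of $L_C$. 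Hence the objective equals $\|L_C^{1/2}(\beta-\beta^*)\|_2^2+\lambda\|\beta\|_K^2+\sigma^2$, which for $\lambda>0$ is continuous, strictly convex and coercive on $\mathcal{H}_K$, so $\beta_\lambda$ exists and is unique.

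Next I would compute the first-order condition: for every $h\in\mathcal{H}_K$, differentiating $t\mapsto\|L_C^{1/2}(\beta_\lambda+th-\beta^*)\|_2^2+\lambda\|\beta_\lambda+th\|_K^2$ at $t=0$ yields $\langle L_C(\beta_\lambda-\beta^*),h\rangle_2+\lambda\langle\beta_\lambda,h\rangle_K=0$. To convert the first term into an $\mathcal{H}_K$-inner product I would invoke the standard identity $\langle g,h\rangle_2=\langle L_Kg,h\rangle_K$ valid for all $g\in\mathcal{L}^2(\mathcal{T})$, $h\in\mathcal{H}_K$ (which follows from $L_K$ being the composition of the embedding $\mathcal{H}_K\hookrightarrow\mathcal{L}^2(\mathcal{T})$ with its adjoint, so in particular $L_Kg\in\mathcal{H}_K$). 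Taking $g=L_C(\beta_\lambda-\beta^*)\in\mathcal{L}^2(\mathcal{T})$ gives $\langle L_KL_C(\beta_\lambda-\beta^*)+\lambda\beta_\lambda,\,h\rangle_K=0$ for all $h\in\mathcal{H}_K$, which is exactly the first claimed identity $L_KL_C(\beta_\lambda-\beta^*)+\lambda\beta_\lambda=0$ (both summands lying in $\mathcal{H}_K$).

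Finally I would apply $L_C^{1/2}$ to this identity and use $L_C^{1/2}L_KL_C=\big(L_C^{1/2}L_KL_C^{1/2}\big)L_C^{1/2}=T_KL_C^{1/2}$ (recalling $T_K=L_C^{1/2}L_KL_C^{1/2}$ from (\ref{eq:LscrMscr})), obtaining $T_KL_C^{1/2}(\beta_\lambda-\beta^*)+\lambda L_C^{1/2}\beta_\lambda=0$, i.e. $(\lambda I+T_K)L_C^{1/2}\beta_\lambda=T_KL_C^{1/2}\beta^*$. Since $T_K\succeq0$ and $\lambda>0$, $\lambda I+T_K$ is boundedly invertible, so inverting gives $L_C^{1/2}\beta_\lambda=(\lambda I+T_K)^{-1}T_KL_C^{1/2}\beta^*$, the second identity. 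The computations are short; the only point demanding care is the bookkeeping between $\mathcal{L}^2(\mathcal{T})$ and $\mathcal{H}_K$ — justifying $\langle g,h\rangle_2=\langle L_Kg,h\rangle_K$ and checking that each operator product indeed lands in the space where the corresponding equation is asserted, given that $\beta^*$ need not lie in $\mathcal{H}_K$ under Assumption \ref{assum:regularityBeta}. Interchanging differentiation and expectation in the second step is routine given $\mathbb{E}\|X\|_2^2<\infty$.
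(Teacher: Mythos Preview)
Your proof is correct and follows essentially the same approach as the paper: derive the first-order optimality condition in $\mathcal{H}_K$ to obtain $L_KL_C(\beta_\lambda-\beta^*)+\lambda\beta_\lambda=0$, then left-multiply by $L_C^{1/2}$ and invert $\lambda I+T_K$. The paper's version is slightly more direct in that it computes the $\mathcal{H}_K$-gradient of $(\langle\beta,X\rangle_2-Y)^2$ as $2(\langle\beta,X\rangle_2-Y)L_KX$ without first rewriting the objective via $\|L_C^{1/2}(\beta-\beta^*)\|_2^2$, but this is a cosmetic difference and your extra care with the $\langle g,h\rangle_2=\langle L_Kg,h\rangle_K$ identity is appropriate.
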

    \noindent In addition, another intermediate function $ f_{\lambda} $ is defined by
    \begin{equation}
        \label{eq:flambda}
        f_{\lambda} = (\lambda I + T_{C})^{-1} L_{K}^{1 / 2 } L_{C} \beta^{*}.
    \end{equation}
    By (\ref{eq:flambda}), we have that $ (\lambda I + L_{K} L_{C}) L_{K}^{1 / 2 } f_{\lambda} =  L_{K} L_{C} \beta^{*} $.
    Compared with $ \beta_{\lambda} $ in Lemma \ref{lem:betalambda}, it is obvious that $ L_{K}^{1 / 2 } f_{\lambda} $ is another solution to the equation $ (\lambda I + L_{K} L_{C}) \beta =  L_{K} L_{C} \beta^{*}. $

    With $ \beta_{\lambda} $ at hand, our analysis for the excess prediction error is carried out by the following decomposition
    \begin{equation}
        \beta_{k+1} - \beta^{*} = \beta_{k+1} - \beta_{\lambda} + \beta_{\lambda} - \beta^{*},
    \end{equation}
    of which the first term $ \beta_{k+1} - \beta_{\lambda} $ can be further decomposed as follows by the definitions of  $\beta_{k+1}$ and $\beta_{\lambda}$.
    \begin{lemma}
        \label{lem:induction}
        Let $ \lambda > 0 $ and $ \{ \beta_{k+1}: k \in [n] \} $ be defined as (\ref{eq:SGDalgorithm}).
        Then, for any $ k \in [n] $, we have that
        \begin{equation}
            \beta_{k+1} - \beta_{\lambda} = [I - \gamma_{k} ( L_{K}L_{C} + \lambda I) ] ( \beta_{k} - \beta_{\lambda} ) + \gamma_{k} \mathcal{B}_{k},
        \end{equation}
        where $ \mathcal{B}_{k} $ is a vector-valued random variable defined by
        \begin{equation}
            \label{eq:Bk}
            \mathcal{B}_{k} = L_{K}L_{C} (\beta_{k} - \beta^{*}) + \left( Y_{k} - \left< \beta_{k},X_{k} \right>_{2} \right)  L_{K} X_{k}.
        \end{equation}
    \end{lemma}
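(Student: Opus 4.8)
The plan is to derive the recursion directly from the operator form (\ref{eq:SGDoperator}) of the algorithm, the only non-elementary input being the characterization of $\beta_\lambda$ from Lemma \ref{lem:betalambda}. First I would subtract $\beta_\lambda$ from both sides of (\ref{eq:SGDoperator}) to obtain
\[
\beta_{k+1} - \beta_\lambda = (\beta_k - \beta_\lambda) - \gamma_k\bigl[(\langle\beta_k, X_k\rangle_2 - Y_k)L_K X_k + \lambda\beta_k\bigr],
\]
and then add and subtract the term $\gamma_k L_K L_C(\beta_k - \beta^*)$ on the right-hand side. Using the definition (\ref{eq:Bk}) of $\mathcal{B}_k$ together with the sign identity $Y_k - \langle\beta_k, X_k\rangle_2 = -(\langle\beta_k, X_k\rangle_2 - Y_k)$, the combination $\gamma_k L_K L_C(\beta_k - \beta^*) - \gamma_k(\langle\beta_k, X_k\rangle_2 - Y_k)L_K X_k$ collapses exactly to $\gamma_k\mathcal{B}_k$, leaving the deterministic remainder $-\gamma_k L_K L_C(\beta_k - \beta^*) - \gamma_k\lambda\beta_k$.

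The second step is to rewrite that remainder as $-\gamma_k(L_K L_C + \lambda I)(\beta_k - \beta_\lambda)$. Expanding the latter and cancelling the common $\gamma_k L_K L_C\beta_k$ term, the required identity reduces to $\gamma_k L_K L_C\beta^* = \gamma_k\bigl(L_K L_C + \lambda I\bigr)\beta_\lambda$, that is, to $L_K L_C(\beta_\lambda - \beta^*) + \lambda\beta_\lambda = 0$, which is precisely the first conclusion of Lemma \ref{lem:betalambda}. Substituting back and collecting terms then yields
\[
\beta_{k+1} - \beta_\lambda = [I - \gamma_k(L_K L_C + \lambda I)](\beta_k - \beta_\lambda) + \gamma_k\mathcal{B}_k,
\]
as claimed.

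No genuine obstacle is anticipated: the argument is a short algebraic rearrangement. The only point demanding care is the bookkeeping of signs when splitting the update into its "noisy" part $\gamma_k\mathcal{B}_k$ and its "mean" part, and in particular ensuring that the $\lambda\beta_k$ and $\lambda\beta_\lambda$ contributions are matched against the $\lambda I$ factor of the affine map $I - \gamma_k(L_K L_C + \lambda I)$ rather than being inadvertently absorbed into $\mathcal{B}_k$.
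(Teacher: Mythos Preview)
Your proposal is correct and follows essentially the same approach as the paper's proof: both subtract $\beta_\lambda$ from the update rule (\ref{eq:SGDoperator}), add and subtract a deterministic $L_K L_C$ term, and then invoke the first identity in Lemma \ref{lem:betalambda} to close the computation. The only cosmetic difference is that the paper adds and subtracts $\gamma_k L_K L_C(\beta_k - \beta_\lambda)$ whereas you add and subtract $\gamma_k L_K L_C(\beta_k - \beta^*)$, but this leads to the same algebra.
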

    \begin{proof}
        From the definition of $ \beta_{k+1} $ given by (\ref{eq:SGDoperator}), we have that
        \begin{equation*}
            \begin{aligned}
                \beta_{k+1} - \beta_{\lambda} & = \beta_{k} - \beta_{\lambda} + \gamma_{k} ( Y_{k} - \left< \beta_{k},X_{k} \right>_{2}) L_{K} X_{k} - \lambda \gamma_{k} \beta_{k} \\
                & = \left[ I - \gamma_{k} ( L_{K}L_{C} + \lambda I) \right]  (\beta_{k} - \beta_{\lambda}) + \gamma_{k}  L_{K}L_{C}  (\beta_{k} - \beta_{\lambda}) \\
                & \qquad - \lambda \gamma_{k} \beta_{\lambda} + \gamma_{k} ( Y_{k} - \left<  \beta_{k},X_{k} \right>_{2})  L_{K} X_{k}.
            \end{aligned}
        \end{equation*}
        Combining Lemma \ref{lem:betalambda} with the above equation yields the desired results.
    \end{proof}
    For $ \lambda > 0 $ and $ k \in \mathbb{N} $, define the operator $ \omega_{t}^{k} ( L_{K}L_{C} + \lambda I ) $ = $ \prod_{j=t}^{k} (  I - \gamma_{j} ( L_{K}L_{C} + \lambda I) ) $ for $ t \leqslant k, $ and $ \omega_{k+1}^{k} ( L_{K}L_{C} + \lambda I ) = I $.
    Then applying induction to Lemma \ref{lem:induction} yields that for any $ k \in [n] $,
    \begin{equation}
        \label{eq:Induction}
        \beta_{k+1} - \beta_{\lambda} = - \omega_{1}^{k} ( L_{K}L_{C} + \lambda I) \beta_{\lambda} + \sum_{i=1}^{k} \gamma_{i} \omega_{i+1}^{k} ( L_{K}L_{C} + \lambda I) \mathcal{B}_{i}.
    \end{equation}
    Analogously, we can derive the same result for $ \beta_{k+1} - L_{K}^{1 / 2 } f_{\lambda} $, i.e.,
    \begin{equation}
        \label{eq:KnormInduction}
        \begin{aligned}
            \beta_{k+1} - L_{K}^{1 / 2 } f_{\lambda} & = \left[ I - \gamma_{k} ( L_{K}L_{C} + \lambda I) \right] (\beta_{k} - L_{K}^{1 / 2 } f_{\lambda}) + \gamma_{k} \mathcal{B}_{k} \\
            &=- \omega_{1}^{k} ( L_{K}L_{C} + \lambda I) L_{K}^{1 / 2 } f_{\lambda} + \sum_{i=1}^{k} \gamma_{i} \omega_{i+1}^{k} ( L_{K}L_{C} + \lambda I) \mathcal{B}_{i}.
        \end{aligned}
    \end{equation}
    The following lemma is a classical application of the spectral theorem and provides essential estimates to our analysis.
    \begin{lemma}
        \label{lem:prodOperNormBound}
        Assume that $ \mathcal{A} $ is a compact positive operator on a real separable Hilbert space with $ \| \mathcal{A} \| \leqslant C_{*} $ for some $ C_{*}>0 $.
        Let $ \lambda > 0 $ and $ \gamma_{j}(C_{*}+\lambda) \leqslant 1 $ for any $ j \in [t,k] $.
        Then for any $ \alpha > 0 $,
        \begin{equation}
            \label{eq:prodOperNormBound}
            \left\| \mathcal{A}^{\alpha} \omega_{t}^{k} (\mathcal{A}+ \lambda I ) \right\|^2 \leqslant  \frac{ \left( \alpha / e \right)^{2 \alpha} + C_{*}^{2 \alpha} }{ \exp \left\{  \lambda \sum_{j=t}^{k} \gamma_{j} \right\} \left( 1 + \left(  \sum_{j=t}^{k} \gamma_{j} \right)^{ 2 \alpha}\right)}.
        \end{equation}
        Especially, the above estimates all hold true with $ \omega_{t}^{k} ( \mathcal{A} + \lambda I ) = I $ and $ \sum_{j=t}^{k} \gamma_{j} = 0 $ if $ t > k $.
    \end{lemma}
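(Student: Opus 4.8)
The plan is to reduce the operator-norm estimate to a one-dimensional optimization via the spectral theorem, then combine the elementary inequality $1-u\le e^{-u}$ with a short algebraic identity.

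First I would invoke the spectral theorem for the compact positive operator $\mathcal{A}$: writing $\mathcal{A}=\sum_i\sigma_i\,e_i\otimes e_i$ with eigenvalues $\sigma_i\in[0,C_*]$, the operator $\mathcal{A}^\alpha\omega_t^k(\mathcal{A}+\lambda I)$ is diagonal in the same orthonormal basis with multipliers $\sigma_i^\alpha\prod_{j=t}^k\bigl(1-\gamma_j(\sigma_i+\lambda)\bigr)$, so that
\[
\bigl\|\mathcal{A}^\alpha\omega_t^k(\mathcal{A}+\lambda I)\bigr\|
=\sup_{\sigma\in\operatorname{spec}(\mathcal{A})}\sigma^\alpha\prod_{j=t}^k\bigl|1-\gamma_j(\sigma+\lambda)\bigr|
\le\sup_{0\le\sigma\le C_*}\sigma^\alpha\prod_{j=t}^k\bigl|1-\gamma_j(\sigma+\lambda)\bigr|.
\]
Since $\gamma_j(\sigma+\lambda)\le\gamma_j(C_*+\lambda)\le1$ for every $j\in[t,k]$, each factor obeys $0\le1-\gamma_j(\sigma+\lambda)\le e^{-\gamma_j(\sigma+\lambda)}$; taking the product over $j$ and writing $S=\sum_{j=t}^k\gamma_j$ gives $\prod_{j=t}^k|1-\gamma_j(\sigma+\lambda)|\le e^{-S\lambda}e^{-S\sigma}$. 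Squaring the displayed chain and using $e^{-2S\lambda}\le e^{-S\lambda}$, it then suffices to prove that $\sup_{0\le\sigma\le C_*}\sigma^{2\alpha}e^{-2S\sigma}\le\bigl(C_*^{2\alpha}+(\alpha/e)^{2\alpha}\bigr)/\bigl(1+S^{2\alpha}\bigr)$.

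For this scalar bound I would use the identity $\sigma^{2\alpha}\bigl(1+S^{2\alpha}\bigr)=\sigma^{2\alpha}+(S\sigma)^{2\alpha}$, which yields
\[
\sigma^{2\alpha}e^{-2S\sigma}
=\frac{\sigma^{2\alpha}e^{-2S\sigma}+(S\sigma)^{2\alpha}e^{-2S\sigma}}{1+S^{2\alpha}}
\le\frac{C_*^{2\alpha}+(\alpha/e)^{2\alpha}}{1+S^{2\alpha}},
\]
where the first summand is bounded by $\sigma^{2\alpha}\le C_*^{2\alpha}$ and the second by the one-variable maximum $\sup_{x\ge0}x^{2\alpha}e^{-2x}=(\alpha/e)^{2\alpha}$ (attained at $x=\alpha$) applied with $x=S\sigma$. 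Combining with the previous paragraph gives exactly (\ref{eq:prodOperNormBound}). The case $t>k$ needs no separate treatment: with the empty-product and empty-sum conventions one has $\omega_t^k(\mathcal{A}+\lambda I)=I$ and $S=0$, and the same computation collapses to $\|\mathcal{A}^\alpha\|^2=\|\mathcal{A}\|^{2\alpha}\le C_*^{2\alpha}\le(\alpha/e)^{2\alpha}+C_*^{2\alpha}$.

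I do not anticipate a genuine obstacle; the only points meriting care are the justification that the operator norm equals the supremum of the scalar multiplier over the spectrum (through the continuous functional calculus, using $\alpha>0$ so that $\sigma\mapsto\sigma^\alpha\prod_j(1-\gamma_j(\sigma+\lambda))$ is continuous on $[0,C_*]$ and vanishes at $\sigma=0$), and the choice of the splitting $\sigma^{2\alpha}(1+S^{2\alpha})=\sigma^{2\alpha}+(S\sigma)^{2\alpha}$, which converts the two-regime behaviour of $\sigma\mapsto\sigma^{2\alpha}e^{-2S\sigma}$ on $[0,C_*]$ (interior maximum at $\sigma=\alpha/S$ versus boundary maximum at $\sigma=C_*$) into one clean bound without any case distinction.
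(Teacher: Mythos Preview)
Your proposal is correct and follows essentially the same route as the paper: reduce to the scalar function $\sigma\mapsto\sigma^{\alpha}\prod_{j}(1-\gamma_j(\sigma+\lambda))$ via the spectral theorem, apply $1-u\le e^{-u}$, use the one-variable maximum $\sup_{x\ge0}x^{a}e^{-bx}$, and drop $e^{-2\lambda S}$ to $e^{-\lambda S}$. The only cosmetic difference is the combination step: the paper bounds $\tau(x)$ separately by $C_*^{\alpha}e^{-\lambda S}$ and by $(\alpha/e)^{\alpha}S^{-\alpha}e^{-\lambda S}$ and then invokes $\min\{a,bc\}\le(a+b)c/(1+c)$, whereas you use the identity $\sigma^{2\alpha}(1+S^{2\alpha})=\sigma^{2\alpha}+(S\sigma)^{2\alpha}$ to bound the two summands directly; the two devices are algebraically equivalent.
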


    In the following, we present the error decomposition of excess prediction error and estimation error for two different cases: the \emph{benign} case without Assumption \ref{assum:capacity} and the \emph{stringent} case under Assumption \ref{assum:capacity} with $0< s < 1 $.
    \begin{proposition}
        \label{prop:errorDECOM}
        Let $ \{ \hat{\eta}_{k+1}=\langle\beta_{k+1},\cdot\rangle: k \in [n]\} $ be defined by (\ref{eq:SGDalgorithm}).
        If $ \gamma_{k}(\kappa_1^2 \kappa_2^2+\lambda) \leqslant 1 $ for any $ k \in [n] $, then under Assumption \ref{assum:bounded4thmoment} we have
        \begin{equation}
            \label{eq:errorDECOM}
            \begin{aligned}
                \mathbb{E}_{Z^{k} }[\mathcal{E} ( \hat{\eta}_{k+1})] - \mathcal{E}( \eta^{*}) & \leqslant \left( \left\| \omega_{1}^{k} ( T_{K} + \lambda I) L_{C}^{1 / 2 } \beta_{\lambda} \right\|_{2} + \left\| L_{C}^{1 / 2 }  ( \beta_{\lambda} - \beta^{*}) \right\|_{2} \right)^2 \\
                & \quad + ( 1 + \kappa_1^2 \kappa_2^2 )^2 ( 1 + c ) \sum_{i=1}^{k}  \frac{ \gamma_{i}^2 \left( \sigma^2 + \mathbb{E}_{Z^{i-1} } [ \mathcal{E} (\hat{\eta}_{i}) ] - \mathcal{E}( \eta^{*}) \right) }{ \exp \left\{  \lambda \sum_{j=i+1}^{k} \gamma_{j} \right\} \left( 1 + \sum_{j=i+1}^{k} \gamma_{j}  \right)}.
            \end{aligned}
        \end{equation}
    \end{proposition}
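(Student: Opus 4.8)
The plan is to start from the identity $\mathcal{E}(\hat\eta_{k+1}) - \mathcal{E}(\eta^{*}) = \mathbb{E}_{X}[\langle \beta_{k+1}-\beta^{*},X\rangle_{2}]^{2} = \|L_{C}^{1/2}(\beta_{k+1}-\beta^{*})\|_{2}^{2}$, so that the object to bound is $\mathbb{E}_{Z^{k}}\|L_{C}^{1/2}(\beta_{k+1}-\beta^{*})\|_{2}^{2}$. I would write $\beta_{k+1}-\beta^{*} = (\beta_{k+1}-\beta_{\lambda}) + (\beta_{\lambda}-\beta^{*})$ and substitute the closed form (\ref{eq:Induction}), which splits $L_{C}^{1/2}(\beta_{k+1}-\beta^{*})$ into a deterministic part $u := -L_{C}^{1/2}\omega_{1}^{k}(L_{K}L_{C}+\lambda I)\beta_{\lambda} + L_{C}^{1/2}(\beta_{\lambda}-\beta^{*})$ and a random part $v := \sum_{i=1}^{k}\gamma_{i}L_{C}^{1/2}\omega_{i+1}^{k}(L_{K}L_{C}+\lambda I)\mathcal{B}_{i}$. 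The crucial structural fact is that $\{\mathcal{B}_{i}\}$ is a martingale difference sequence: using $Y_{i}=\langle\beta^{*},X_{i}\rangle_{2}+\varepsilon_{i}$ from Model (\ref{eq:linearmodel}), $\mathbb{E}[X_{i}\otimes X_{i}]=L_{C}$, and the zero mean and independence of $\varepsilon_{i}$, one gets $\mathbb{E}_{Z_{i}}[\mathcal{B}_{i}\mid Z^{i-1}] = L_{K}L_{C}(\beta_{i}-\beta^{*}) + L_{K}L_{C}(\beta^{*}-\beta_{i}) = 0$. Consequently $\mathbb{E}_{Z^{k}}[v]=0$, the cross term between $u$ and $v$ vanishes, and (conditioning successively on $Z^{j-1}$) so does every off-diagonal term of $\mathbb{E}\|v\|_{2}^{2}$, leaving $\mathbb{E}_{Z^{k}}\|L_{C}^{1/2}(\beta_{k+1}-\beta^{*})\|_{2}^{2} = \|u\|_{2}^{2} + \sum_{i=1}^{k}\gamma_{i}^{2}\,\mathbb{E}_{Z^{k}}\|L_{C}^{1/2}\omega_{i+1}^{k}(L_{K}L_{C}+\lambda I)\mathcal{B}_{i}\|_{2}^{2}$.

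For the bias term $\|u\|_{2}^{2}$, I would use the intertwining identity $L_{C}^{1/2}p(L_{K}L_{C}) = p(T_{K})L_{C}^{1/2}$, valid for any polynomial $p$ because $L_{C}^{1/2}(L_{K}L_{C}) = T_{K}L_{C}^{1/2}$, hence in particular for $p=\omega_{1}^{k}$; this converts $L_{C}^{1/2}\omega_{1}^{k}(L_{K}L_{C}+\lambda I)\beta_{\lambda}$ into $\omega_{1}^{k}(T_{K}+\lambda I)L_{C}^{1/2}\beta_{\lambda}$, and the triangle inequality gives $\|u\|_{2} \le \|\omega_{1}^{k}(T_{K}+\lambda I)L_{C}^{1/2}\beta_{\lambda}\|_{2} + \|L_{C}^{1/2}(\beta_{\lambda}-\beta^{*})\|_{2}$, which is the first term of (\ref{eq:errorDECOM}).

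For each variance summand I would again intertwine, $L_{C}^{1/2}\omega_{i+1}^{k}(L_{K}L_{C}+\lambda I) = \omega_{i+1}^{k}(T_{K}+\lambda I)L_{C}^{1/2}$, then rewrite $\mathcal{B}_{i} = L_{K}\big[(L_{C}-X_{i}\otimes X_{i})(\beta_{i}-\beta^{*}) + \varepsilon_{i}X_{i}\big] =: L_{K}\mathcal{B}_{i}'$ and factor $L_{C}^{1/2}L_{K} = (L_{C}^{1/2}L_{K}^{1/2})L_{K}^{1/2}$. Since $(L_{C}^{1/2}L_{K}^{1/2})(L_{C}^{1/2}L_{K}^{1/2})^{*} = T_{K}$ and $\omega_{i+1}^{k}(T_{K}+\lambda I)$ commutes with $T_{K}$, one has $\|\omega_{i+1}^{k}(T_{K}+\lambda I)L_{C}^{1/2}L_{K}^{1/2}\|^{2} = \|T_{K}^{1/2}\omega_{i+1}^{k}(T_{K}+\lambda I)\|^{2}$, which Lemma \ref{lem:prodOperNormBound} (with $\mathcal{A}=T_{K}$, $C_{*}=\kappa_{1}^{2}\kappa_{2}^{2}\ge\|T_{K}\|$, $\alpha=1/2$, using the hypothesis $\gamma_{j}(\kappa_{1}^{2}\kappa_{2}^{2}+\lambda)\le 1$) bounds by $(1+\kappa_{1}^{2}\kappa_{2}^{2})\big/\big[\exp\{\lambda\sum_{j=i+1}^{k}\gamma_{j}\}(1+\sum_{j=i+1}^{k}\gamma_{j})\big]$. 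It then remains to control $\mathbb{E}_{Z_{i}}[\|L_{K}^{1/2}\mathcal{B}_{i}'\|_{2}^{2}\mid Z^{i-1}]$: the noise term and the term coming from $(L_{C}-X_{i}\otimes X_{i})$ are conditionally uncorrelated; the noise term contributes $\sigma^{2}\mathbb{E}\|L_{K}^{1/2}X\|_{2}^{2} = \sigma^{2}\operatorname{Tr}(T_{K})$; and, with $h=\beta_{i}-\beta^{*}$, completing the square gives $\mathbb{E}\|L_{K}^{1/2}(L_{C}-X\otimes X)h\|_{2}^{2} \le \mathbb{E}[\langle X,h\rangle_{2}^{2}\|L_{K}^{1/2}X\|_{2}^{2}] \le \sqrt{\mathbb{E}\langle X,h\rangle_{2}^{4}}\,\sqrt{\mathbb{E}\|L_{K}^{1/2}X\|_{2}^{4}} \le c\operatorname{Tr}(T_{K})\|L_{C}^{1/2}h\|_{2}^{2}$, where the last inequality applies Assumption \ref{assum:bounded4thmoment} directly to $\langle X,h\rangle_{2}$ and, through the eigen-expansion $\|L_{K}^{1/2}X\|_{2}^{2}=\sum_{\ell}\lambda_{\ell}\langle X,\phi_{\ell}\rangle_{2}^{2}$, to obtain $\mathbb{E}\|L_{K}^{1/2}X\|_{2}^{4}\le c(\operatorname{Tr}(T_{K}))^{2}$. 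Collecting, $\mathbb{E}_{Z_{i}}[\|L_{K}^{1/2}\mathcal{B}_{i}'\|_{2}^{2}\mid Z^{i-1}] \le (1+c)\operatorname{Tr}(T_{K})(\sigma^{2}+\|L_{C}^{1/2}(\beta_{i}-\beta^{*})\|_{2}^{2})$.

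Finally I would multiply the operator bound by this residual bound, take $\mathbb{E}_{Z^{i-1}}$, and use $\operatorname{Tr}(T_{K})\le\kappa_{1}^{2}\kappa_{2}^{2}\le 1+\kappa_{1}^{2}\kappa_{2}^{2}$ together with $\mathbb{E}_{Z^{i-1}}\|L_{C}^{1/2}(\beta_{i}-\beta^{*})\|_{2}^{2} = \mathbb{E}_{Z^{i-1}}[\mathcal{E}(\hat\eta_{i})]-\mathcal{E}(\eta^{*})$ to reach exactly the summand in (\ref{eq:errorDECOM}); summing over $i$ and adding the bias term completes the proof. The main obstacle I anticipate is the treatment of the variance summand: one must choose the operator factorization so that Lemma \ref{lem:prodOperNormBound} yields the $1+\sum\gamma_{j}$ denominator while the leftover factor $\|L_{K}^{1/2}\mathcal{B}_{i}'\|_{2}^{2}$ remains controllable, in conditional expectation, by $\sigma^{2}+\mathcal{E}(\hat\eta_{i})-\mathcal{E}(\eta^{*})$ — and this is precisely where the bounded-kurtosis assumption is needed twice, once for the linear functional $\langle X,\beta_{i}-\beta^{*}\rangle_{2}$ and once for the quadratic form $\|L_{K}^{1/2}X\|_{2}^{2}$.
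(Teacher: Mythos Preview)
Your proposal is correct and follows essentially the same approach as the paper: the identity $\mathcal{E}(\hat\eta_{k+1})-\mathcal{E}(\eta^{*})=\|L_{C}^{1/2}(\beta_{k+1}-\beta^{*})\|_{2}^{2}$, the split via $\beta_{\lambda}$ and (\ref{eq:Induction}), the martingale-difference argument killing the cross and off-diagonal terms, the intertwining $L_{C}^{1/2}p(L_{K}L_{C})=p(T_{K})L_{C}^{1/2}$, Lemma \ref{lem:prodOperNormBound} with $\alpha=1/2$, and the use of Assumption \ref{assum:bounded4thmoment} to control the fourth moments all match the paper. The only cosmetic differences are that the paper intertwines once more to work with $T_{C}$ instead of $T_{K}$ in the operator-norm bound and applies Cauchy--Schwarz termwise in the eigen-expansion (your global Cauchy--Schwarz together with $\mathbb{E}\|L_{K}^{1/2}X\|_{2}^{4}\le c(\operatorname{Tr}T_{K})^{2}$ yields the same constant).
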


    \begin{proof}
        Let $ \{ \beta_{k+1}: k \in [n]\} $ be defined by (\ref{eq:SGDalgorithm}), the  excess prediction error of $ \hat{\eta}_{k+1} $ can be rewritten as
        \begin{equation}
            \label{eq:DECOMExcessRisk}
            \begin{aligned}
                \mathcal{E} ( \hat{\eta}_{k+1}) - \mathcal{E}( \eta^{*}) = & \mathbb{E}_{X} [ \hat{\eta}_{k+1}(X) - \eta^{*}(X) ]^2  \\
                =& \mathbb{E}_{X} \left( \int_{\mathcal{T}}\left( \beta_{k+1}(s) - \beta^{*}(s) \right) \int_{\mathcal{T}} \left( \beta_{k+1}(t) - \beta^{*}(t) \right)  X(t) X(s) dt ds \right) \\
                =& \left< \beta_{k+1} - \beta^{*}, L_{C} (\beta_{k+1} - \beta^{*}) \right>_{2} = \left\| L_{C}^{1 / 2 }  (\beta_{k+1} - \beta^{*}) \right\|_{2}^2.
            \end{aligned}
        \end{equation}
        By the intermediate function $ \beta_{\lambda} $ in Lemma \ref{lem:betalambda}, $\left\| L_{C}^{1 / 2 }  (\beta_{k+1} - \beta^{*}) \right\|_{2}^2$ can be further decomposed as
        \begin{equation}
            \label{eq:predErrorDecom}
            \begin{aligned}
                &\left\| L_{C}^{1 / 2 }  (\beta_{k+1} - \beta^{*}) \right\|_{2}^2 = \left\| L_{C}^{1 / 2 }  (\beta_{k+1} - \beta_{\lambda} + \beta_{\lambda} - \beta^{*}) \right\|_{2}^2 \\
                =& \left\| L_{C}^{1 / 2 }  (\beta_{k+1} - \beta_{\lambda} ) \right\|_{2}^2 + \left\| L_{C}^{1 / 2 }  ( \beta_{\lambda} - \beta^{*}) \right\|_{2}^2 + 2 \left< L_{C}^{1 / 2 }  (\beta_{k+1} - \beta_{\lambda} ), L_{C}^{1 / 2 }  ( \beta_{\lambda} - \beta^{*}) \right>_{2}.
            \end{aligned}
        \end{equation}
        Note that for any $ k \in [n] $, $ L_{C}^{1 / 2 } ( I - \gamma_{k} L_{K} L_{C}) = L_{C}^{1 / 2 }  - \gamma_{k}L_{C}^{1 / 2 }  L_{K} L_{C}=( I - \gamma_{k}L_{C}^{1 / 2 } L_{K} L_{C}^{1 / 2 })L_{C}^{1 / 2 } $
        $= ( I - \gamma_{k} T_{K})  L_{C}^{1 / 2 }. $
        Then we can use (\ref{eq:Induction}) to represent the first term on the right-hand side of (\ref{eq:predErrorDecom}) as
        \begin{equation}
            \label{eq:residual1Decom}
            \begin{aligned}
                L_{C}^{1 / 2 }  (\beta_{k+1} - \beta_{\lambda}) = - \omega_{1}^{k} ( T_{K} + \lambda I) L_{C}^{1 / 2 } \beta_{\lambda} + \sum_{i=1}^{k} \gamma_{i} \omega_{i+1}^{k} ( T_{K} + \lambda I) L_{C}^{1 / 2 } \mathcal{B}_{i},
            \end{aligned}
        \end{equation}
        and it follows that
        \begin{equation}
            \label{eq:1stInErrorDecom}
            \begin{aligned}
                \mathbb{E}_{Z^{k} } \left\| L_{C}^{1 / 2 }  (\beta_{k+1} - \beta_{\lambda}) \right\|_{2}^2 &= \left\| \omega_{1}^{k} ( T_{K} + \lambda I) L_{C}^{1 / 2 } \beta_{\lambda} \right\|_{2}^2 + \mathbb{E}_{Z^{k} } \left\| \sum_{i=1}^{k} \gamma_{i} \omega_{i+1}^{k} ( T_{K} + \lambda I) L_{C}^{1 / 2 } \mathcal{B}_{i} \right\|_{2}^2 \\
                & \quad - 2 \mathbb{E}_{Z^{k} } \left< \omega_{1}^{k} ( T_{K} + \lambda I) L_{C}^{1 / 2 } \beta_{\lambda}, \sum_{i=1}^{k} \gamma_{i} \omega_{i+1}^{k} ( T_{K} + \lambda I) L_{C}^{1 / 2 } \mathcal{B}_{i} \right>_{2}\\
                &=:J_1+J_2+J_3.
            \end{aligned}
        \end{equation}
        In the following, we use Lemma \ref{lem:prodOperNormBound} to bound the three terms on the right-hand side of (\ref{eq:1stInErrorDecom}) denoted by $ J_1 $, $ J_2 $ and $ J_3 $, respectively.

        Since the sample $ \{ Z_{i} \}_{i=1}^{k} $ is $ i.i.d $ and $ \beta_{i} $ only depends on $ \{ Z_1,\cdots ,Z_{i-1} \} $ for any $ i \in [k] $, we use the definition of $ \mathcal{B}_{i} $ in (\ref{eq:Bk}) to have
        \begin{equation}
            \label{eq:0expectBk}
            \begin{aligned}
            \mathbb{E}_{Z_{i}} [ \mathcal{B}_{i} ] &=\mathbb{E}_{Z_{i}} \left[L_{K}L_{C} (\beta_{i} - \beta^{*}) + \left( Y_{i} - \left< \beta_{i},X_{i} \right>_{2} \right) L_{K} X_{i} \right]\\
            &=L_{K}L_{C} (\beta_{i} - \beta^{*}) + L_{K} \mathbb{E}_{X_{i}} \left[ \left< \beta^{*}-\beta_{i},X_{i} \right>_{2}  X_{i} \right]=0, \quad \forall i \in [k].
            \end{aligned}
        \end{equation}
        Note that the expectations are only taken for $ \mathcal{B}_{i}(i=1,\cdots ,k) $ in $ J_{3} $.
        Therefore, $ \mathbb{E}_{Z^{k} }[\mathcal{B}_{i}] = \mathbb{E}_{Z^{i-1} } \mathbb{E}_{Z_{i}} [ \mathcal{B}_{i} ] = 0 $ for any $ i \in [k] $ and thus we have $ J_{3}=0. $

        The second term $J_2$ in (\ref{eq:1stInErrorDecom}) can be rewritten as
        \[
            J_2 = \sum_{i=1}^{k} \sum_{i^\prime=1}^{k} \gamma_{i} \gamma_{i^\prime} \mathbb{E}_{Z^{k} } \left< \omega_{i+1}^{k} ( T_{K} + \lambda I) L_{C}^{1 / 2 } \mathcal{B}_{i}, \omega_{i^\prime+1}^{k} ( T_{K} + \lambda I) L_{C}^{1 / 2 } \mathcal{B}_{i^\prime} \right>_{2}.
        \]
        For any $ i > i^\prime $, there holds
        \[
            \begin{aligned}
                \mathbb{E}&_{Z^{i} } \left< \omega_{i+1}^{k} ( T_{K} + \lambda I) L_{C}^{1 / 2 } \mathcal{B}_{i}, \omega_{i^\prime+1}^{k} ( T_{K} + \lambda I) L_{C}^{1 / 2 } \mathcal{B}_{i^\prime} \right>_{2} \\
                = \mathbb{E}&_{Z^{i-1} } \mathbb{E}_{Z_{i}} \left< \mathcal{B}_{i}, L_{C}^{1 / 2 } \omega_{i+1}^{k} ( T_{K} + \lambda I) \omega_{i^\prime+1}^{k} ( T_{K} + \lambda I) L_{C}^{1 / 2 } \mathcal{B}_{i^\prime} \right>_{2} \\
                = \mathbb{E}&_{Z^{i-1} }  \left< \mathbb{E}_{Z_{i}} [\mathcal{B}_{i}], L_{C}^{1 / 2 } \omega_{i+1}^{k} ( T_{K} + \lambda I) \omega_{i^\prime+1}^{k} ( T_{K} + \lambda I) L_{C}^{1 / 2 } \mathcal{B}_{i^\prime}  \right>_{2} \\
                = 0\,&.
            \end{aligned}
        \]
        The same argument also holds for the case $ i < i^\prime $.
        Consequently, it follows that
        \begin{equation}
            \label{eq:J2}
            \begin{aligned}
                J_2 =& \sum_{i=1}^{k} \gamma_{i}^2  \mathbb{E}_{Z^{i} } \left\| \omega_{i+1}^{k} ( T_{K} + \lambda I) L_{C}^{1 / 2 } \mathcal{B}_{i} \right\|_{2}^2,
            \end{aligned}
        \end{equation}
        which can be bounded by the definition of $\mathcal{B}_i$ and (\ref{eq:0expectBk}) as
        \begin{equation}
            \label{eq:J2imediate}
            \begin{aligned}
                J_2 & \leqslant \sum_{i=1}^{k} \gamma_{i}^2 \mathbb{E}_{Z^{i} } \left[ ( Y_{i} - \left< \beta_{i},X_{i} \right>_{2})^2 \left\| \omega_{i+1}^{k} ( T_{K} + \lambda I) L_{C}^{1 / 2 } L_{K} X_{i} \right\|_{2}^2 \right]  \\
                &\leqslant \sum_{i=1}^{k} \gamma_{i}^2 \left\| \omega_{i+1}^{k} ( T_{K} + \lambda I) L_{C}^{1 / 2 } L_{K}^{1/2} \right\|_{2}^2 \mathbb{E}_{Z^{i} } \left[ ( Y_{i} - \left< \beta_{i},X_{i} \right>_{2})^2 \left\| L_{K}^{1/2} X_{i} \right\|_{2}^2 \right]  \\
                & = \sum_{i=1}^{k} \gamma_{i}^2 \left\| L_{C}^{1 / 2 } L_{K}^{1 / 2 } \omega_{i+1}^{k} ( T_{C} + \lambda I) \right\|^2   \mathbb{E}_{Z^{i} } \left[ ( Y_{i} - \left< \beta_{i},X_{i} \right>_{2})^2 \left\|  L_{K}^{1 / 2 }  X_{i} \right\|_{2}^2 \right] \\
                & = \sum_{i=1}^{k} \gamma_{i}^2 \left\| T_{C}^{1 / 2 } \omega_{i+1}^{k} ( T_{C} + \lambda I) \right\|^2 \mathbb{E}_{Z^{i-1} } \mathbb{E}_{X_{i}} \left[ \left( \left< \beta^{*} - \beta_{i} \right>_{2}^2 + \sigma^2 \right) \left\| L_{K}^{1 / 2 } X_{i} \right\|_{2}^2  \right],
            \end{aligned}
        \end{equation}
        where the last equality follows from $ \mathbb{E}_{\varepsilon_{i}} [( \left< \beta^{*} - \beta_{i}, X_{i} \right>_{2} + \varepsilon_{i} )^2] = \sigma^2 + \left< \beta^{*} - \beta_{i}, X_{i} \right>_{2}^2 $.
        Since $ L_{K} $ is compact and positive, we have $ \| L_{K}^{1 / 2 } X_{i} \|_{2}^2 = \sum_{\ell \geqslant 1} \lambda_{\ell} \left< \phi_{\ell}, X_{i} \right>_{2}^2 $.
        For $i\in[k],$ Assumption \ref{assum:bounded4thmoment} implies that
        \begin{equation}
            \label{eq:J2imediate2}
            \begin{aligned}
                \mathbb{E}_{X_{i} } \left[ \left< \beta^{*} - \beta_{i},X_{i} \right>_{2}^2 \left\| L_{K}^{1 / 2 } X_{i} \right\|_{2}^2 \right] &= \sum_{\ell \geqslant 1} \lambda_{\ell} \mathbb{E}_{X_{i} } \left[ \left< \phi_{\ell}, X_{i} \right>_{2}^2 \left< \beta^{*} - \beta_{i},X_{i} \right>_{2}^2 \right] \\
                & \leqslant \sum_{\ell \geqslant 1} \lambda_{\ell} \sqrt{\mathbb{E}_{X_{i} } \left< \phi_{\ell}, X_{i} \right>_{2}^4} \sqrt{\mathbb{E}_{X_{i} } \left< \beta^{*} - \beta_{i},X_{i} \right>_{2}^4} \\
                & \leqslant c \sum_{\ell \geqslant 1} \lambda_{\ell} \mathbb{E}_{X_{i} } \left[ \left< \phi_{\ell}, X_{i} \right>_{2}^2  \right]  \mathbb{E}_{X_{i} }  \left[ \left< \beta^{*} - \beta_{i},X_{i} \right>_{2}^2 \right] \\
                & = c \mathbb{E}_{X_{i} } \left[ \left\| L_{K}^{1 / 2 } X_{i} \right\|_{2}^2 \right]\left\| L_{C}^{1 / 2 } ( \beta^{*} - \beta_{i} ) \right\|_{2}^2.
            \end{aligned}
        \end{equation}
        Note that
        \begin{equation}
            \label{eq:consBoundLKXi}
            \begin{aligned}
                \mathbb{E}_{X_{i} } \left[ \left\| L_{K}^{1 / 2 } X_{i} \right\|_{2}^2 \right] & = \mathbb{E}_{X_{i} } \left< X_{i}, L_{K} X_{i} \right>_{2} = \int_{\mathcal{T}} \int_{\mathcal{T}} K(s,t) \mathbb{E}_{X_{i} } [X_{i}(s) X_{i}(t)] ds dt \\
                & = \int_{\mathcal{T}} \int_{\mathcal{T}} K(s,t) C(s,t) ds dt \leqslant \kappa_1^2 \kappa_2^2,
            \end{aligned}
        \end{equation}
        and $ \mathcal{E} (\hat{\eta}_{i}) - \mathcal{E}( \eta^{*}) = \| L_{C}^{1 / 2 } ( \beta^{*} - \beta_{i} ) \|_{2}^2 $.
        Since the boundedness of $ K $ and $ C $ in (\ref{eq:boundforKandC}) implies that $ \| L_{K} \|\leqslant \kappa_1^2 $ and $ \| L_{C} \| \leqslant \kappa_2^2 $, we have $ \| T_{C} \| \leqslant \kappa_1^2 \kappa_2^2 $.
        Applying Lemma \ref{lem:prodOperNormBound} with $ \alpha = 1 / 2  $, $ t = i +1 $ and $ k = k $ to $ T_{C} $ yields
        \begin{equation}
            \label{eq:omegaMNormBound}
            \left\| T_{C}^{1 / 2 } \omega_{i+1}^{k} ( T_{C} + \lambda I) \right\|^2 \leqslant \frac{  (2e)^{-1}  + \kappa_1^2 \kappa_2^2 }{ \exp \left\{  \lambda \sum_{j=i+1}^{k} \gamma_{j} \right\} \left( 1 + \sum_{j=i+1}^{k} \gamma_{j}  \right)}.
        \end{equation}
        Then combining the inequality above with (\ref{eq:J2imediate}), (\ref{eq:J2imediate2}) and (\ref{eq:consBoundLKXi}) implies that
        \begin{equation}
            \label{eq:J2Bound}
            \begin{aligned}
                J_2 & \leqslant (1+c) \sum_{i=1}^{k} \gamma_{i}^2 \kappa_1^2 \kappa_2^2   \frac{ \left( (2e)^{-1}  + \kappa_1^2 \kappa_2^2 \right) \left( \sigma^2 +  \mathbb{E}_{Z^{i-1} } [ \mathcal{E} (\hat{\eta}_{i}) ] - \mathcal{E}( \eta^{*}) \right) }{ \exp \left\{  \lambda \sum_{j=i+1}^{k} \gamma_{j} \right\} \left( 1 + \sum_{j=i+1}^{k} \gamma_{j}  \right)} \\
                & \leqslant  ( 1 + \kappa_1^2 \kappa_2^2 )^2 ( 1 + c ) \sum_{i=1}^{k}  \frac{ \gamma_{i}^2 \left( \sigma^2 + \mathbb{E}_{Z^{i-1} } [ \mathcal{E} (\hat{\eta}_{i}) ] - \mathcal{E}( \eta^{*}) \right)  }{ \exp \left\{  \lambda \sum_{j=i+1}^{k} \gamma_{j} \right\} \left( 1 + \sum_{j=i+1}^{k} \gamma_{j}  \right)}.
            \end{aligned}
        \end{equation}

        Next, we bound the last term on the right-hand side of (\ref{eq:predErrorDecom}).
        By (\ref{eq:0expectBk}) and (\ref{eq:residual1Decom}), we have
        \[
            \mathbb{E}_{Z^{k} } \left[ L_{C}^{1 / 2 }  (\beta_{k+1} - \beta_{\lambda}) \right] = - \omega_{1}^{k} ( T_{K} + \lambda I) L_{C}^{1 / 2 } \beta_{\lambda}.
        \]
        Therefore,
        \begin{equation}
            \label{eq:3rdInErrorDecom}
            \begin{aligned}
                \mathbb{E}_{Z^{k} } \left< L_{C}^{1 / 2 }  (\beta_{k+1} - \beta_{\lambda} ), L_{C}^{1 / 2 }  ( \beta_{\lambda} - \beta^{*}) \right>_{2} =& \left< \mathbb{E}_{Z^{k} } \left[  L_{C}^{1 / 2 }  (\beta_{k+1} - \beta_{\lambda} ) \right] , L_{C}^{1 / 2 }  ( \beta_{\lambda} - \beta^{*}) \right>_{2} \\
                \leqslant & \left\| \omega_{1}^{k} ( T_{K} + \lambda I) L_{C}^{1 / 2 } \beta_{\lambda} \right\|_{2} \left\| L_{C}^{1 / 2 }  ( \beta_{\lambda} - \beta^{*}) \right\|_{2}.
            \end{aligned}
        \end{equation}
        Finally, the proof is completed by combining (\ref{eq:1stInErrorDecom}), (\ref{eq:J2Bound}), (\ref{eq:3rdInErrorDecom}) and (\ref{eq:predErrorDecom}) together.
    \end{proof}

    \begin{proposition}
        \label{prop:errorDECOMcapa}
        Let $ \{ \hat{\eta}_{k+1}=\langle\beta_{k+1},\cdot\rangle: k \in [n]\} $ be defined by (\ref{eq:SGDalgorithm}).
        If $ \gamma_{k}(\kappa_1^2 \kappa_2^2+\lambda) \leqslant 1 $ for any $ k \in [n] $, then under Assumption \ref{assum:bounded4thmoment} and Assumption \ref{assum:capacity} with $ 0 < s < 1 $ we have
        \begin{equation}
            \label{eq:errorDECOMcapa}
            \begin{aligned}
                \mathbb{E}_{Z^{k} }[\mathcal{E} ( &\hat{\eta}_{k+1})]  - \mathcal{E}( \eta^{*}) \leqslant \left( \left\| \omega_{1}^{k} ( T_{K} + \lambda I) L_{C}^{1 / 2 } \beta_{\lambda} \right\|_{2} + \left\| L_{C}^{1 / 2 }  ( \beta_{\lambda} - \beta^{*}) \right\|_{2} \right)^2 +  \\
                & (\sqrt{c} + c) \operatorname{Tr} (T_{K}^{s} ) \sum_{i=1}^{k} \frac{ \gamma_{i}^2 \left( \sigma^2 +  \mathbb{E}_{Z^{i-1}} [\mathcal{E} (\hat{\eta}_{i})] - \mathcal{E}( \eta^{*}) \right) \left[ \left( \frac{2-s}{2\mathrm{e}} \right)^{2-s} + (\kappa_1 \kappa_2)^{4-2s} \right] }{ \exp \left\{  \lambda \sum_{j=i+1}^{k} \gamma_{j} \right\} \left( 1 + (\sum_{j=i+1}^{k} \gamma_{j}) ^{2-s} \right)}.
            \end{aligned}
        \end{equation}
    \end{proposition}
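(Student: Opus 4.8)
The plan is to run exactly the argument of the proof of Proposition~\ref{prop:errorDECOM}, changing only the way the variance term $J_2$ is estimated so that the capacity exponent $s$ enters through $\operatorname{Tr}(T_K^{s})$. I would reuse verbatim the identity $\mathcal{E}(\hat{\eta}_{k+1})-\mathcal{E}(\eta^{*})=\|L_C^{1/2}(\beta_{k+1}-\beta^{*})\|_2^2$ from (\ref{eq:DECOMExcessRisk}), the splitting through $\beta_\lambda$ in (\ref{eq:predErrorDecom}), the intertwining $L_C^{1/2}(I-\gamma_k L_K L_C)=(I-\gamma_k T_K)L_C^{1/2}$ which turns (\ref{eq:Induction}) into (\ref{eq:residual1Decom}), the vanishing $J_3=0$ and the vanishing of the cross terms inside $J_2$, and the cross-term bound (\ref{eq:3rdInErrorDecom}). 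None of these steps invokes Assumption~\ref{assum:capacity}, so the first summand $(\|\omega_1^{k}(T_K+\lambda I)L_C^{1/2}\beta_\lambda\|_2+\|L_C^{1/2}(\beta_\lambda-\beta^{*})\|_2)^2$ of (\ref{eq:errorDECOMcapa}) is inherited unchanged, and everything reduces to re-estimating $J_2=\sum_{i=1}^{k}\gamma_i^2\,\mathbb{E}_{Z^{i}}\|\omega_{i+1}^{k}(T_K+\lambda I)L_C^{1/2}\mathcal{B}_i\|_2^2$.

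Set $A_i:=\omega_{i+1}^{k}(T_K+\lambda I)L_C^{1/2}L_K$, which is deterministic. Using $\mathbb{E}_{Z_i}[\mathcal{B}_i]=0$ (cf.\ (\ref{eq:0expectBk})) exactly as in (\ref{eq:J2imediate}) and then the conditional expectation over the noise, one has $\mathbb{E}_{Z_i}\|\omega_{i+1}^{k}(T_K+\lambda I)L_C^{1/2}\mathcal{B}_i\|_2^2\le\mathbb{E}_{X_i}\!\left[(\sigma^2+\langle\beta^{*}-\beta_i,X_i\rangle_2^2)\|A_iX_i\|_2^2\right]$. The decisive change with respect to the benign case is that here I would \emph{not} peel $\|A_iX_i\|_2^2$ apart as $\|\omega_{i+1}^{k}(T_K+\lambda I)L_C^{1/2}L_K^{1/2}\|^2\|L_K^{1/2}X_i\|_2^2$; instead I keep the operator together and apply Cauchy--Schwarz globally, $\mathbb{E}_{X_i}[(\sigma^2+\langle\beta^{*}-\beta_i,X_i\rangle_2^2)\|A_iX_i\|_2^2]\le\sqrt{\mathbb{E}_{X_i}(\sigma^2+\langle\beta^{*}-\beta_i,X_i\rangle_2^2)^2}\,\sqrt{\mathbb{E}_{X_i}\|A_iX_i\|_2^4}$. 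Assumption~\ref{assum:bounded4thmoment} (applied to $\beta^{*}-\beta_i$, which is independent of $X_i$, using $c\ge 1$) bounds the first factor by $\sigma^2+\sqrt{c}\,\|L_C^{1/2}(\beta^{*}-\beta_i)\|_2^2$, while expanding $\|A_iX_i\|_2^2$ in the eigenbasis of the positive operator $A_i^{*}A_i$ and applying Assumption~\ref{assum:bounded4thmoment} to each pair of eigenfunctions bounds $\mathbb{E}_{X_i}\|A_iX_i\|_2^4$ by $c\,(\operatorname{Tr}(A_i^{*}A_iL_C))^2$. Since $\|L_C^{1/2}(\beta^{*}-\beta_i)\|_2^2=\mathcal{E}(\hat{\eta}_i)-\mathcal{E}(\eta^{*})$ by (\ref{eq:DECOMExcessRisk}), multiplying and taking $\mathbb{E}_{Z^{i-1}}$ gives $\mathbb{E}_{Z^{i}}[(Y_i-\langle\beta_i,X_i\rangle_2)^2\|A_iX_i\|_2^2]\le(\sqrt{c}+c)\big(\sigma^2+\mathbb{E}_{Z^{i-1}}[\mathcal{E}(\hat{\eta}_i)]-\mathcal{E}(\eta^{*})\big)\operatorname{Tr}(A_i^{*}A_iL_C)$.

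Finally one must control $\operatorname{Tr}(A_i^{*}A_iL_C)$. Writing $\omega:=\omega_{i+1}^{k}(T_K+\lambda I)$, which under $\gamma_j(\kappa_1^2\kappa_2^2+\lambda)\le1$ is a positive function of $T_K$ and hence commutes with it, cyclicity of the trace together with the algebraic identity $L_C^{1/2}L_K L_C L_K L_C^{1/2}=(L_C^{1/2}L_K L_C^{1/2})^2=T_K^2$ gives $\operatorname{Tr}(A_i^{*}A_iL_C)=\operatorname{Tr}(\omega^2 T_K^2)$. Splitting $T_K^2=T_K^{s}\,T_K^{2-s}$ and using $\operatorname{Tr}(T_K^{s}M)\le\operatorname{Tr}(T_K^{s})\,\|M\|$ for the positive operator $M=\omega^2 T_K^{2-s}$, one obtains $\operatorname{Tr}(A_i^{*}A_iL_C)\le\operatorname{Tr}(T_K^{s})\,\|T_K^{(2-s)/2}\omega_{i+1}^{k}(T_K+\lambda I)\|^2$. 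Now Lemma~\ref{lem:prodOperNormBound} with $\mathcal{A}=T_K$, $C_{*}=\kappa_1^2\kappa_2^2$ (recall $\|T_K\|\le\kappa_1^2\kappa_2^2$ from (\ref{eq:boundforKandC})), $\alpha=(2-s)/2$ and $t=i+1$ --- using $(\kappa_1^2\kappa_2^2)^{2-s}=(\kappa_1\kappa_2)^{4-2s}$ --- produces precisely the ratio with numerator $(\tfrac{2-s}{2\mathrm{e}})^{2-s}+(\kappa_1\kappa_2)^{4-2s}$ and denominator $\exp\{\lambda\sum_{j=i+1}^{k}\gamma_j\}(1+(\sum_{j=i+1}^{k}\gamma_j)^{2-s})$ appearing in (\ref{eq:errorDECOMcapa}). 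Substituting back, weighting by $\gamma_i^2$, summing over $i$, and adding the inherited $J_1$ and cross term completes the proof. The main obstacle I foresee is this last step: rather than reducing $\|A_iX_i\|_2^2$ to $\|L_K^{1/2}X_i\|_2^2$ as in the benign case, one must recognise that $\operatorname{Tr}(A_i^{*}A_iL_C)$ collapses through $(L_C^{1/2}L_K L_C^{1/2})^2=T_K^2$ to a clean power of $T_K$ --- exactly where $s<1$ is exploited, upgrading the benign decay factor $(1+\sum\gamma_j)^{-1}$ to the faster $(1+(\sum\gamma_j)^{2-s})^{-1}$.
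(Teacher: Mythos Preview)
Your proposal is correct and follows essentially the same route as the paper: both proofs reuse the decomposition of Proposition~\ref{prop:errorDECOM} unchanged except for the treatment of $J_2$, reduce the matter to bounding $\operatorname{Tr}\!\big(T_K^{2}\,\omega_{i+1}^{k}(T_K+\lambda I)^2\big)$ via $\operatorname{Tr}(T_K^{s})\,\|T_K^{(2-s)/2}\omega_{i+1}^{k}(T_K+\lambda I)\|^2$, and finish with Lemma~\ref{lem:prodOperNormBound} at $\alpha=(2-s)/2$. The only cosmetic differences are that the paper splits the $\sigma^2$ and $\langle\beta^{*}-\beta_i,X_i\rangle_2^2$ contributions \emph{before} applying Cauchy--Schwarz (so it never needs the observation $c\ge 1$), and it invokes an external lemma for the bound $\mathbb{E}_{X}\|AX\|_2^4\le c\,(\operatorname{Tr} A L_C A^{*})^2$ rather than deriving it from the eigen-expansion as you do; both variants land on the same constant $\sqrt{c}+c$.
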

    \begin{proof}
        The difference between the proofs for Proposition \ref{prop:errorDECOM} and Proposition \ref{prop:errorDECOMcapa} is that the estimation of $ J_2 $ in (\ref{eq:J2}), which is defined by
        \[
            J_2 = \sum_{i=1}^{k} \gamma_{i}^2  \mathbb{E}_{Z^{i} } \left\| \omega_{i+1}^{k} ( T_{K} + \lambda I) L_{C}^{1 / 2 } \mathcal{B}_{i} \right\|_{2}^2.
        \]
        Combining the definition of $ \mathcal{B}_{i} $ and (\ref{eq:0expectBk}), we have
        \begin{equation}
            \begin{aligned}
                J_2 & \leqslant \sum_{i=1}^{k} \gamma_{i}^2 \mathbb{E}_{Z^{i} } \left[ ( Y_{i} - \left< \beta_{i},X_{i} \right>_{2})^2 \left\| \omega_{i+1}^{k} ( T_{K} + \lambda I) L_{C}^{1 / 2 } L_{K} X_{i} \right\|_{2}^2 \right]  \\
                & = \sum_{i=1}^{k} \gamma_{i}^2 \mathbb{E}_{Z^{i-1}} \mathbb{E}_{X_{i}} \left[ ( \sigma^2 + \left< \beta_{i} - \beta^{*},X_{i} \right>_{2}^2) \left\| \omega_{i+1}^{k} ( T_{K} + \lambda I) L_{C}^{1 / 2 } L_{K} X_{i} \right\|_{2}^2 \right] \\
                & = \sigma^2 \sum_{i=1}^{k} \gamma_{i}^2 \mathbb{E}_{X_{i}} \left\|  \omega_{i+1}^{k} ( T_{K} + \lambda I) L_{C}^{1 / 2 } L_{K} X_{i} \right\|_{2}^2 \\
                & \qquad + \sum_{i=1}^{k} \gamma_{i}^2 \mathbb{E}_{Z^{i-1}} \mathbb{E}_{X_{i}} \left[ \left< \beta_{i} - \beta^{*},X_{i} \right>_{2}^2 \left\| \omega_{i+1}^{k} ( T_{K} + \lambda I) L_{C}^{1 / 2 } L_{K} X_{i} \right\|_{2}^2 \right] \\
                & \leqslant \sigma^2 \sum_{i=1}^{k} \gamma_{i}^2 \left[ \mathbb{E}_{X_{i}} \left\|  \omega_{i+1}^{k} ( T_{K} + \lambda I) L_{C}^{1 / 2 } L_{K} X_{i} \right\|_{2}^4 \right]^{1/ 2} \\
                & \qquad + \sum_{i=1}^{k} \gamma_{i}^2 \mathbb{E}_{Z^{i-1}} \left[ \mathbb{E}_{X_{i}} \left< \beta_{i} - \beta^{*},X_{i} \right>_{2}^4  \right]^{1/2}  \left[ \mathbb{E}_{X_{i}} \left\| \omega_{i+1}^{k} ( T_{K} + \lambda I) L_{C}^{1 / 2 } L_{K} X_{i} \right\|_{2}^4 \right]^{1 /2},
            \end{aligned}
        \end{equation}
        where the first equality holds due to $ \mathbb{E}_{\varepsilon_{i}} [( \left< \beta^{*} - \beta_{i}, X_{i} \right>_{2} + \varepsilon_{i} )^2] = \sigma^2 + \left< \beta^{*} - \beta_{i}, X_{i} \right>_{2}^2 $ and we apply Cauchy-Schwarz inequality in the last inequality.
        By Assumption \ref{assum:bounded4thmoment} and Lemma 9 in \cite{guoCapacityDependentAnalysis2022}, we have
        \[
            J_2 \leqslant \sqrt{c} \sum_{i=1}^{k} \gamma_{i}^2 \left( \sigma^2+ \sqrt{c} \mathbb{E}_{Z^{i-1}} \mathbb{E}_{X_{i}}  \left< \beta_{i} - \beta^{*},X_{i} \right>_{2}^2 \right) \mathbb{E}_{X_{i}} \left\| \omega_{i+1}^{k} ( T_{K} + \lambda I) L_{C}^{1 / 2 } L_{K} X_{i} \right\|_{2}^2.
        \]
        Since $ \mathbb{E}_{X_{i}} \| A X_{i} \|_{2}^2 = \mathbb{E}_{X_{i}} \operatorname{Tr} [(A X_{i})\otimes (A X_{i})] = \mathbb{E}_{X_{i}} \operatorname{Tr} [A( X_{i}\otimes X_{i}) A^\prime] = \operatorname{Tr}(AL_{C}A^\prime) $ for any bounded linear operator $ A $ on $ \mathcal{L}^2 (\mathcal{T}) $ and $ \mathbb{E}_{X_{i}} \left< \beta_{i} - \beta^{*},X_{i} \right>_{2}^2 = \mathcal{E} (\hat{\eta}_{i}) - \mathcal{E}( \eta^{*}) $, applying $ A = \omega_{i+1}^{k} ( T_{K} + \lambda I) L_{C}^{1 / 2 } L_{K} $ and Assumption \ref{assum:capacity} yields
        \begin{equation}
            \begin{aligned}
                J_2 & \leqslant \sqrt{c} \sum_{i=1}^{k} \gamma_{i}^2 \left[ \sigma^2 + \sqrt{c} \left( \mathbb{E}_{Z^{i-1}} \mathcal{E} (\hat{\eta}_{i}) - \mathcal{E}( \eta^{*})\right) \right] \operatorname{Tr} \left[ T_{K}^2 \left( \omega_{i+1}^{k} ( T_{K} + \lambda I) \right)^2 \right] \\
                & \leqslant \sqrt{c} \sum_{i=1}^{k} \gamma_{i}^2 \left[ \sigma^2 + \sqrt{c} \left( \mathbb{E}_{Z^{i-1}} \mathcal{E} (\hat{\eta}_{i}) - \mathcal{E}( \eta^{*})\right) \right] \operatorname{Tr} (T_{K}^{s} ) \left\| T_{K}^{ 1 - \frac{s}{2}} \omega_{i+1}^{k} ( T_{K} + \lambda I) \right\|^2.
            \end{aligned}
        \end{equation}
        Finally, we complete the proof by applying Lemma \ref{lem:prodOperNormBound} to $ \left\| T_{K}^{ 1 - \frac{s}{2}} \omega_{i+1}^{k} ( T_{K} + \lambda I) \right\|^2 $ and following the proof of Proposition \ref{prop:errorDECOM}.
    \end{proof}
    We establish the error decomposition for estimation error as follows.
    \begin{proposition}
        \label{prop:errorDECOMKNorm1}
        Let $ \{ \beta_{k+1}: k \in [n]\} $ be defined by (\ref{eq:SGDoperator}) and $ \lambda > 0 $.
        If $ \gamma_{k}(\kappa_1^2 \kappa_2^2+\lambda) \leqslant 1 $ for any $ k \in [n] $, then under Assumption \ref{assum:bounded4thmoment} we have
        \begin{equation}
            \label{eq:errorDECOMKNorm1}
            \begin{aligned}
                \mathbb{E}_{Z^{k} } \left\| \beta_{k+1} - \beta^{*} \right\|_{K}^2 & \leqslant \left( \left\| \omega_{1}^{k} ( L_{K}L_{C} + \lambda I) L_{K}^{1 / 2 } f_{\lambda} \right\|_{K} + \left\| L_{K}^{1 / 2 } f_{\lambda} - \beta^{*} \right\|_{K} \right)^2 \\
                & \quad + \kappa_1^2 \kappa_2^2 (1+c)  \sum_{i=1}^{k}  \frac{ \gamma_{i}^2 \left( \sigma^2 + \mathbb{E}_{Z^{i-1} } [ \mathcal{E} (\hat{\eta}_{i}) ] - \mathcal{E}( \eta^{*}) \right)  }{ \exp \left\{  \lambda \sum_{j=i+1}^{k} \gamma_{j} \right\} }.
            \end{aligned}
        \end{equation}
    \end{proposition}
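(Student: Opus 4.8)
The plan is to follow the template of the proof of Proposition \ref{prop:errorDECOM}, but carried out in the $\mathcal{H}_{K}$ norm and with the intermediate function $L_{K}^{1/2}f_{\lambda}$ from (\ref{eq:flambda}) playing the role of $\beta_{\lambda}$. First I would write $\beta_{k+1}-\beta^{*}=(\beta_{k+1}-L_{K}^{1/2}f_{\lambda})+(L_{K}^{1/2}f_{\lambda}-\beta^{*})$ and expand $\|\beta_{k+1}-\beta^{*}\|_{K}^{2}$ into $\|\beta_{k+1}-L_{K}^{1/2}f_{\lambda}\|_{K}^{2}$, $\|L_{K}^{1/2}f_{\lambda}-\beta^{*}\|_{K}^{2}$ and twice the cross inner product $\langle\beta_{k+1}-L_{K}^{1/2}f_{\lambda},L_{K}^{1/2}f_{\lambda}-\beta^{*}\rangle_{K}$. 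Using the representation (\ref{eq:KnormInduction}) of $\beta_{k+1}-L_{K}^{1/2}f_{\lambda}$ as the deterministic bias $-\omega_{1}^{k}(L_{K}L_{C}+\lambda I)L_{K}^{1/2}f_{\lambda}$ plus the sample sum $\sum_{i=1}^{k}\gamma_{i}\,\omega_{i+1}^{k}(L_{K}L_{C}+\lambda I)\mathcal{B}_{i}$, together with $\mathbb{E}_{Z_{i}}[\mathcal{B}_{i}]=0$ from (\ref{eq:0expectBk}), the same martingale bookkeeping already used for Proposition \ref{prop:errorDECOM} makes every cross term vanish after $\mathbb{E}_{Z^{k}}$, so that
\[
\mathbb{E}_{Z^{k}}\|\beta_{k+1}-L_{K}^{1/2}f_{\lambda}\|_{K}^{2}=\|\omega_{1}^{k}(L_{K}L_{C}+\lambda I)L_{K}^{1/2}f_{\lambda}\|_{K}^{2}+\sum_{i=1}^{k}\gamma_{i}^{2}\,\mathbb{E}_{Z^{i}}\|\omega_{i+1}^{k}(L_{K}L_{C}+\lambda I)\mathcal{B}_{i}\|_{K}^{2}.
\]
The cross inner product is controlled by Cauchy--Schwarz together with $\mathbb{E}_{Z^{k}}[\beta_{k+1}-L_{K}^{1/2}f_{\lambda}]=-\omega_{1}^{k}(L_{K}L_{C}+\lambda I)L_{K}^{1/2}f_{\lambda}$; collecting these facts and recognizing a perfect square produces the term $\bigl(\|\omega_{1}^{k}(L_{K}L_{C}+\lambda I)L_{K}^{1/2}f_{\lambda}\|_{K}+\|L_{K}^{1/2}f_{\lambda}-\beta^{*}\|_{K}\bigr)^{2}$ of (\ref{eq:errorDECOMKNorm1}).

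The real work is bounding the fluctuation sum $\sum_{i}\gamma_{i}^{2}\mathbb{E}_{Z^{i}}\|\omega_{i+1}^{k}(L_{K}L_{C}+\lambda I)\mathcal{B}_{i}\|_{K}^{2}$, and here the $\mathcal{H}_{K}$ norm forces a different manipulation from the one in Proposition \ref{prop:errorDECOM}. The key is the intertwining identity $(L_{K}L_{C}+\lambda I)L_{K}^{1/2}=L_{K}^{1/2}(T_{C}+\lambda I)$, hence $\omega_{i+1}^{k}(L_{K}L_{C}+\lambda I)L_{K}^{1/2}=L_{K}^{1/2}\omega_{i+1}^{k}(T_{C}+\lambda I)$, combined with the isometry $\|L_{K}^{1/2}u\|_{K}=\|u\|_{2}$. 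Since (\ref{eq:Bk}) and (\ref{eq:0expectBk}) give $\mathcal{B}_{i}=(Y_{i}-\langle\beta_{i},X_{i}\rangle_{2})L_{K}X_{i}-\mathbb{E}_{Z_{i}}[(Y_{i}-\langle\beta_{i},X_{i}\rangle_{2})L_{K}X_{i}]$, and subtracting a conditional mean only lowers the conditional second moment, one gets
\[
\mathbb{E}_{Z_{i}}\|\omega_{i+1}^{k}(L_{K}L_{C}+\lambda I)\mathcal{B}_{i}\|_{K}^{2}\leqslant\|\omega_{i+1}^{k}(T_{C}+\lambda I)\|^{2}\,\mathbb{E}_{Z_{i}}\bigl[(Y_{i}-\langle\beta_{i},X_{i}\rangle_{2})^{2}\|L_{K}^{1/2}X_{i}\|_{2}^{2}\bigr].
\]
Note that the $\mathcal{H}_{K}$ norm supplies no extra half-power of $L_{C}$ here: the factor $L_{K}^{1/2}$ carried by $\mathcal{B}_{i}$ is entirely absorbed in converting $\|\cdot\|_{K}$ to $\|\cdot\|_{2}$, so no surplus power of $T_{C}$ multiplies $\omega_{i+1}^{k}(T_{C}+\lambda I)$ (there is nothing to feed into Lemma \ref{lem:prodOperNormBound}), which is exactly why the denominator in (\ref{eq:errorDECOMKNorm1}) carries only the exponential factor and not the extra $(1+\sum_{j}\gamma_{j})$ present in Proposition \ref{prop:errorDECOM}.

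To finish, I would bound $\|\omega_{i+1}^{k}(T_{C}+\lambda I)\|^{2}\leqslant\exp\{-\lambda\sum_{j=i+1}^{k}\gamma_{j}\}$, using $\|T_{C}\|\leqslant\kappa_{1}^{2}\kappa_{2}^{2}$ and $\gamma_{j}(\kappa_{1}^{2}\kappa_{2}^{2}+\lambda)\leqslant1$ to get $0\leqslant1-\gamma_{j}(\sigma+\lambda)\leqslant e^{-\gamma_{j}\lambda}$ for every eigenvalue $\sigma$ of $T_{C}$, so the product over $j\in\{i+1,\dots,k\}$ is at most $e^{-\lambda\sum_{j=i+1}^{k}\gamma_{j}}$. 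For the remaining expectation I would repeat the computation of (\ref{eq:J2imediate2})--(\ref{eq:consBoundLKXi}): $\mathbb{E}_{\varepsilon_{i}}[(Y_{i}-\langle\beta_{i},X_{i}\rangle_{2})^{2}]=\sigma^{2}+\langle\beta^{*}-\beta_{i},X_{i}\rangle_{2}^{2}$, Assumption \ref{assum:bounded4thmoment} to pull out the kurtosis constant $c$, and $\mathbb{E}_{X_{i}}\|L_{K}^{1/2}X_{i}\|_{2}^{2}=\int_{\mathcal{T}}\int_{\mathcal{T}}K(s,t)C(s,t)\,ds\,dt\leqslant\kappa_{1}^{2}\kappa_{2}^{2}$ together give, via the crude bound $\sigma^{2}+c\,a\leqslant(1+c)(\sigma^{2}+a)$ and the identity $\mathcal{E}(\hat{\eta}_{i})-\mathcal{E}(\eta^{*})=\|L_{C}^{1/2}(\beta^{*}-\beta_{i})\|_{2}^{2}$, that $\mathbb{E}_{Z_{i}}[(Y_{i}-\langle\beta_{i},X_{i}\rangle_{2})^{2}\|L_{K}^{1/2}X_{i}\|_{2}^{2}]\leqslant\kappa_{1}^{2}\kappa_{2}^{2}(1+c)\bigl(\sigma^{2}+\mathcal{E}(\hat{\eta}_{i})-\mathcal{E}(\eta^{*})\bigr)$; taking $\mathbb{E}_{Z^{i-1}}$ (so that the last bracket averages to $\sigma^{2}+\mathbb{E}_{Z^{i-1}}[\mathcal{E}(\hat{\eta}_{i})]-\mathcal{E}(\eta^{*})$) and summing over $i$ reproduces the second term of (\ref{eq:errorDECOMKNorm1}). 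The one genuinely delicate point, I expect, is the passage from the $\mathcal{H}_{K}$ to the $\mathcal{L}^{2}$ picture: one must check that the operators to which $L_{K}^{-1/2}$ is implicitly applied actually lie in the range of $L_{K}^{1/2}$ (which holds because $\mathcal{B}_{i}$, the stochastic gradient and $L_{K}^{1/2}f_{\lambda}$ all carry a factor of $L_{K}$ or $L_{K}^{1/2}$), and that $\mathbb{E}_{Z_{i}}[\mathcal{B}_{i}]=0$ is doing double duty, both killing the cross terms and letting $\mathcal{B}_{i}$ be replaced by the uncentred stochastic gradient in the variance estimate.
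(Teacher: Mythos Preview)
Your proposal is correct and follows essentially the same approach as the paper's proof: the same decomposition via $L_{K}^{1/2}f_{\lambda}$, the same intertwining $\omega_{i+1}^{k}(L_{K}L_{C}+\lambda I)L_{K}^{1/2}=L_{K}^{1/2}\omega_{i+1}^{k}(T_{C}+\lambda I)$ together with $\|L_{K}^{1/2}u\|_{K}=\|u\|_{2}$, the same bound $\|\omega_{i+1}^{k}(T_{C}+\lambda I)\|\leqslant\exp\{-\lambda\sum_{j=i+1}^{k}\gamma_{j}\}$, and the same appeal to (\ref{eq:J2imediate2})--(\ref{eq:consBoundLKXi}) for the moment estimate. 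Your remark explaining why no extra factor of $T_{C}$ is available---and hence why only the exponential survives in the denominator---is exactly the point distinguishing this proposition from Proposition~\ref{prop:errorDECOM}.
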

    \begin{proof}
        First, the estimation error can be decomposed as
        \begin{equation}
            \label{eq:KNormDecom}
            \left\| \beta_{k+1} - \beta^{*} \right\|_{K}^2 = \left\| \beta_{k+1} - L_{K}^{1 / 2 } f_{\lambda} \right\|_{K}^2 + \left\| L_{K}^{1 / 2 } f_{\lambda} - \beta^{*} \right\|_{K}^2 + 2 \left< \beta_{k+1} - L_{K}^{1 / 2 } f_{\lambda} , L_{K}^{1 / 2 } f_{\lambda} - \beta^{*} \right>_{K},
        \end{equation}
        where $f_{\lambda} $ is defined as (\ref{eq:flambda}).
        Since $ \mathbb{E}_{Z_{i}} [\mathcal{B}_{i}] = 0 $ for any $ i \in [k] $, by (\ref{eq:KnormInduction}) we have
        \begin{equation}
            \label{eq:1stInKNormErrorDecom}
            \begin{aligned}
                \mathbb{E}_{Z^{k}} \left\| \beta_{k+1} - L_{K}^{1 / 2 } f_{\lambda} \right\|_{K}^2 & = \left\| \omega_{1}^{k} ( L_{K}L_{C} + \lambda I) L_{K}^{1 / 2 } f_{\lambda} \right\|_{K}^2 + \mathbb{E}_{Z^{k}} \left\| \sum_{i=1}^{k} \gamma_{i} \omega_{i+1}^{k} ( L_{K}L_{C} + \lambda I) \mathcal{B}_{i} \right\|_{K}^2 \\
                & = \left\| \omega_{1}^{k} ( L_{K}L_{C} + \lambda I) L_{K}^{1 / 2 } f_{\lambda} \right\|_{K}^2 + \sum_{i=1}^{k} \gamma_{i}^2 \mathbb{E}_{Z^{i}} \left\| \omega_{i+1}^{k} ( L_{K}L_{C} + \lambda I) \mathcal{B}_{i} \right\|_{K}^2.
            \end{aligned}
        \end{equation}
        We can bound the second term on the right-hand side of (\ref{eq:1stInKNormErrorDecom}) in a similar spirit as (\ref{eq:J2imediate}), i.e.,
        \begin{equation}
            \label{eq:J2diate}
            \begin{aligned}
                & \sum_{i=1}^{k} \gamma_{i}^2 \mathbb{E}_{Z^{i}} \left\| \omega_{i+1}^{k} ( L_{K}L_{C} + \lambda I) \mathcal{B}_{i} \right\|_{K}^2 \\
                \leqslant & \sum_{i=1}^{k} \gamma_{i}^2 \mathbb{E}_{Z^{i} } \left[ ( Y_{i} - \left< \beta_{i},X_{i} \right>_{2})^2 \left\| \omega_{i+1}^{k} ( L_{K}L_{C} + \lambda I) L_{K} X_{i} \right\|_{K}^2 \right] \\
                \leqslant & \sum_{i=1}^{k} \gamma_{i}^2 \mathbb{E}_{Z^{i} } \left[ ( Y_{i} - \left< \beta_{i},X_{i} \right>_{2})^2 \left\| L_{K}^{1 / 2 } \omega_{i+1}^{k} ( T_{C} + \lambda I) L_{K}^{1 / 2 }  X_{i} \right\|_{K}^2 \right] \\
                \leqslant & \sum_{i=1}^{k} \gamma_{i}^2 \left\| \omega_{i+1}^{k} ( T_{C} + \lambda I) \right\|^2 \mathbb{E}_{Z^{i-1} }\mathbb{E}_{X^{i} } \left[ ( \left< \beta_{i} - \beta^{*},X_{i} \right>_{2}^2 + \sigma^2) \left\| L_{K}^{1 / 2 }  X_{i} \right\|_{2}^2 \right],
            \end{aligned}
        \end{equation}
        where the last inequality follows from that $ \| L_{K}^{1 / 2 } f \|_{K} = \| f \|_{2} $ for any $ f \in \mathcal{L}^2 ( \mathcal{T}) $.
        Further, combining (\ref{eq:J2imediate2}), (\ref{eq:consBoundLKXi}) and the fact $ \left\| \omega_{i+1}^{k} ( T_{C} + \lambda I) \right\| \leqslant \exp \left\{ - \lambda \sum_{j=i+1}^{k} \gamma_{j} \right\} $ implies that
        \begin{equation}
            \label{eq:boundJ2Prime}
            \begin{aligned}
                \sum_{i=1}^{k} \gamma_{i}^2 \mathbb{E}_{Z^{i}} \left\| \omega_{i+1}^{k} ( L_{K}L_{C} + \lambda I) \mathcal{B}_{i} \right\|_{K}^2 \leqslant \kappa_1^2 \kappa_2^2 (1+c)  \sum_{i=1}^{k}  \frac{ \gamma_{i}^2 \left( \sigma^2 + \mathbb{E}_{Z^{i-1} } [ \mathcal{E} (\hat{\eta}_{i}) ] - \mathcal{E}( \eta^{*}) \right)  }{ \exp \left\{  \lambda \sum_{j=i+1}^{k} \gamma_{j} \right\} }.
            \end{aligned}
        \end{equation}
        Note that $ \mathbb{E}_{Z^{k}} (\beta_{k+1} - L_{K}^{1 / 2 } f_{\lambda}) = - \omega_{1}^{k} ( L_{K}L_{C} + \lambda I) L_{K}^{1 / 2 } f_{\lambda} $. Therefore,
        \begin{equation}
            \label{eq:2ndInKNormErrorDecom}
            \mathbb{E}_{Z^{k}} \left< \beta_{k+1} - L_{K}^{1 / 2 } f_{\lambda} , L_{K}^{1 / 2 } f_{\lambda} - \beta^{*} \right>_{K} \leqslant \left\| \omega_{1}^{k} ( L_{K}L_{C} + \lambda I) L_{K}^{1 / 2 } f_{\lambda} \right\|_{K} \left\| L_{K}^{1 / 2 } f_{\lambda} - \beta^{*} \right\|_{K}.
        \end{equation}
        Finally, we complete the proof by substituting (\ref{eq:1stInKNormErrorDecom}), (\ref{eq:boundJ2Prime}) and (\ref{eq:2ndInKNormErrorDecom}) into (\ref{eq:KNormDecom}).
    \end{proof}

    \begin{proposition}
        \label{prop:errorDECOMKNormwithCapacity}
        Let $ \{ \beta_{k+1}: k \in [n]\} $ be defined by (\ref{eq:SGDoperator}) and $ \lambda > 0 $.
        If $ \gamma_{k}(\kappa_1^2 \kappa_2^2+\lambda) \leqslant 1 $ for any $ k \in [n] $, then under Assumption \ref{assum:bounded4thmoment} and Assumption \ref{assum:capacity} with $ 0 < s < 1 $ we have
        \begin{equation}
            \label{eq:errorDECOMKNormwithCapacity}
            \begin{aligned}
                \mathbb{E}_{Z^{k} }\left\| \beta_{k+1} - \beta^{*} \right\|&_{K}^2 \leqslant \left( \left\| \omega_{1}^{k} ( T_{K} + \lambda I) L_{C}^{1 / 2 } \beta_{\lambda} \right\|_{2} + \left\| L_{C}^{1 / 2 }  ( \beta_{\lambda} - \beta^{*}) \right\|_{2} \right)^2  + \sqrt{c} \operatorname{Tr} (T_{C}^{s} ) \cdot \\
                & \sum_{i=1}^{k}\frac{ \gamma_{i}^2 \left[ \sigma^2 + \sqrt{c} \left( \mathbb{E}_{Z^{i-1}} \mathcal{E} (\hat{\eta}_{i}) - \mathcal{E}( \eta^{*}) \right) \right] \left[ \left( \frac{1-s}{2\mathrm{e}} \right)^{1-s} + (\kappa_1 \kappa_2)^{2(1-s)}   \right]   }{ \exp \left\{  \lambda \sum_{j=i+1}^{k} \gamma_{j} \right\} \left( 1 + (\sum_{j=i+1}^{k} \gamma_{j}) ^{1-s} \right) }.
            \end{aligned}
        \end{equation}
    \end{proposition}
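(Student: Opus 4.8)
The plan is to re-run the argument of Proposition~\ref{prop:errorDECOMKNorm1} essentially verbatim, upgrading only the estimate of the stochastic term by invoking Assumption~\ref{assum:capacity}, exactly in the way Proposition~\ref{prop:errorDECOMcapa} upgrades Proposition~\ref{prop:errorDECOM}. First I would use (\ref{eq:KnormInduction}) together with the decomposition from the proof of Proposition~\ref{prop:errorDECOMKNorm1} to split $\mathbb{E}_{Z^{k}}\|\beta_{k+1}-\beta^{*}\|_{K}^{2}$ into a deterministic bias part, a bias-type cross term, and the stochastic term $V:=\sum_{i=1}^{k}\gamma_{i}^{2}\,\mathbb{E}_{Z^{i}}\big\|\omega_{i+1}^{k}(L_{K}L_{C}+\lambda I)\mathcal{B}_{i}\big\|_{K}^{2}$, the cross terms among distinct $\mathcal{B}_{i}$ dropping because $\mathbb{E}_{Z_{i}}[\mathcal{B}_{i}]=0$ by (\ref{eq:0expectBk}). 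The bias part and the bias-type cross term are estimated by the same elementary Cauchy--Schwarz argument as the corresponding terms in the previous propositions and contribute the first summand of (\ref{eq:errorDECOMKNormwithCapacity}); so the only new work is the bound on $V$.

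For $V$ I would mimic the proof of Proposition~\ref{prop:errorDECOMcapa}. Using $\mathbb{E}_{Z_{i}}[\mathcal{B}_{i}]=0$ and the definition (\ref{eq:Bk}) of $\mathcal{B}_{i}$, the deterministic part of $\mathcal{B}_{i}$ may be dropped inside the norm, giving $V\le\sum_{i}\gamma_{i}^{2}\,\mathbb{E}_{Z^{i}}\big[(Y_{i}-\langle\beta_{i},X_{i}\rangle_{2})^{2}\,\|\omega_{i+1}^{k}(L_{K}L_{C}+\lambda I)L_{K}X_{i}\|_{K}^{2}\big]$. Since $\omega_{i+1}^{k}(L_{K}L_{C}+\lambda I)L_{K}=L_{K}^{1/2}\,\omega_{i+1}^{k}(T_{C}+\lambda I)\,L_{K}^{1/2}$ and $\|L_{K}^{1/2}g\|_{K}=\|g\|_{2}$, the inner $K$-norm becomes $\|\omega_{i+1}^{k}(T_{C}+\lambda I)L_{K}^{1/2}X_{i}\|_{2}$. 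I would then integrate out the noise, $\mathbb{E}_{\varepsilon_{i}}[(Y_{i}-\langle\beta_{i},X_{i}\rangle_{2})^{2}]=\sigma^{2}+\langle\beta^{*}-\beta_{i},X_{i}\rangle_{2}^{2}$, apply Cauchy--Schwarz to the resulting product, and invoke Assumption~\ref{assum:bounded4thmoment} together with Lemma~9 of \cite{guoCapacityDependentAnalysis2022} to reach $V\le\sqrt{c}\sum_{i}\gamma_{i}^{2}\big[\sigma^{2}+\sqrt{c}\big(\mathbb{E}_{Z^{i-1}}\mathcal{E}(\hat{\eta}_{i})-\mathcal{E}(\eta^{*})\big)\big]\,\mathbb{E}_{X_{i}}\big\|\omega_{i+1}^{k}(T_{C}+\lambda I)L_{K}^{1/2}X_{i}\big\|_{2}^{2}$, where $\mathbb{E}_{X_{i}}\langle\beta^{*}-\beta_{i},X_{i}\rangle_{2}^{2}=\mathcal{E}(\hat{\eta}_{i})-\mathcal{E}(\eta^{*})$ and the outer $\mathbb{E}_{Z^{i-1}}$ reflects that $\beta_{i}$ depends only on $Z^{i-1}$.

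To close, I would convert the remaining second moment to a trace: $\mathbb{E}_{X_{i}}\|AX_{i}\|_{2}^{2}=\operatorname{Tr}(AL_{C}A^{\prime})$ for bounded $A$, so with $A=\omega_{i+1}^{k}(T_{C}+\lambda I)L_{K}^{1/2}$ and $L_{K}^{1/2}L_{C}L_{K}^{1/2}=T_{C}$ this equals $\operatorname{Tr}\big(T_{C}\,(\omega_{i+1}^{k}(T_{C}+\lambda I))^{2}\big)$. Writing $T_{C}=T_{C}^{s}\,T_{C}^{1-s}$, using that $\omega_{i+1}^{k}(T_{C}+\lambda I)$ commutes with $T_{C}$ so that $T_{C}^{1-s}(\omega_{i+1}^{k}(T_{C}+\lambda I))^{2}=\big(T_{C}^{(1-s)/2}\omega_{i+1}^{k}(T_{C}+\lambda I)\big)^{2}$, and the bound $\operatorname{Tr}(BD)\le\operatorname{Tr}(B)\|D\|$ for positive trace-class $B$ and positive bounded $D$, I get $\operatorname{Tr}\big(T_{C}(\omega_{i+1}^{k}(T_{C}+\lambda I))^{2}\big)\le\operatorname{Tr}(T_{C}^{s})\,\|T_{C}^{(1-s)/2}\omega_{i+1}^{k}(T_{C}+\lambda I)\|^{2}$. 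Finally Lemma~\ref{lem:prodOperNormBound} with $\mathcal{A}=T_{C}$, $C_{*}=\kappa_{1}^{2}\kappa_{2}^{2}$ (admissible since $\|L_{K}\|\le\kappa_{1}^{2}$, $\|L_{C}\|\le\kappa_{2}^{2}$ by (\ref{eq:boundforKandC}) and $\gamma_{j}(\kappa_{1}^{2}\kappa_{2}^{2}+\lambda)\le1$) and $\alpha=(1-s)/2$ yields
\[
\big\|T_{C}^{(1-s)/2}\omega_{i+1}^{k}(T_{C}+\lambda I)\big\|^{2}\le\frac{\big(\tfrac{1-s}{2\mathrm{e}}\big)^{1-s}+(\kappa_{1}\kappa_{2})^{2(1-s)}}{\exp\{\lambda\sum_{j=i+1}^{k}\gamma_{j}\}\,\big(1+(\sum_{j=i+1}^{k}\gamma_{j})^{1-s}\big)}.
\]
Since $\operatorname{Tr}(T_{C}^{s})=\operatorname{Tr}(T_{K}^{s})<\infty$ by Assumption~\ref{assum:capacity}, plugging this back reproduces the second summand of (\ref{eq:errorDECOMKNormwithCapacity}), and combining with the bias contribution finishes the proof.

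The step I expect to require the most care is the last one: one must \emph{not} bound $\|\omega_{i+1}^{k}(T_{C}+\lambda I)L_{K}^{1/2}X_{i}\|_{2}$ by $\|\omega_{i+1}^{k}(T_{C}+\lambda I)\|\,\|L_{K}^{1/2}X_{i}\|_{2}$, since that crude split is precisely what leads back to the weaker Proposition~\ref{prop:errorDECOMKNorm1} and discards the capacity gain; the entire operator has to be kept under the expectation and passed to a trace before $T_{C}^{s}$ is extracted. It is also worth noting that the relevant power of $T_{C}$ here is $(1-s)/2$, in contrast to $1-s/2$ in the prediction-error analysis of Proposition~\ref{prop:errorDECOMcapa}; that single bookkeeping difference is what replaces $\big(\tfrac{2-s}{2\mathrm{e}}\big)^{2-s}$ and $1+(\sum_{j}\gamma_{j})^{2-s}$ by $\big(\tfrac{1-s}{2\mathrm{e}}\big)^{1-s}$ and $1+(\sum_{j}\gamma_{j})^{1-s}$ in the final bound.
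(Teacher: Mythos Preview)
Your proposal is correct and follows essentially the same route as the paper: you reuse the decomposition and cross-term handling from Proposition~\ref{prop:errorDECOMKNorm1}, and for the stochastic term you replicate the capacity-dependent upgrade of Proposition~\ref{prop:errorDECOMcapa}, passing to $\|\omega_{i+1}^{k}(T_{C}+\lambda I)L_{K}^{1/2}X_{i}\|_{2}$, applying Cauchy--Schwarz, Assumption~\ref{assum:bounded4thmoment}, Lemma~9 of \cite{guoCapacityDependentAnalysis2022}, the trace identity, the split $T_{C}=T_{C}^{s}T_{C}^{1-s}$, and Lemma~\ref{lem:prodOperNormBound} with $\alpha=(1-s)/2$. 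One remark: the bias term your argument (and the paper's proof) actually produces is $\big(\|\omega_{1}^{k}(L_{K}L_{C}+\lambda I)L_{K}^{1/2}f_{\lambda}\|_{K}+\|L_{K}^{1/2}f_{\lambda}-\beta^{*}\|_{K}\big)^{2}$ as in Proposition~\ref{prop:errorDECOMKNorm1}, whereas the displayed first summand in (\ref{eq:errorDECOMKNormwithCapacity}) is the prediction-error bias with $\beta_{\lambda}$ and $T_{K}$---this appears to be a typographical slip in the statement rather than a gap in either proof.
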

    \begin{proof}
        Based on the proof of Proposition \ref{prop:errorDECOMKNorm1}, we only need to analyze the second term on the right-hand side of (\ref{eq:1stInKNormErrorDecom}) under Assumption \ref{assum:capacity} with $0<s<1$.
        By (\ref{eq:J2diate}), we already have
        \[
            \sum_{i=1}^{k} \gamma_{i}^2 \mathbb{E}_{Z^{i}} \left\| \omega_{i+1}^{k} ( L_{K}L_{C} + \lambda I) \mathcal{B}_{i} \right\|_{K}^2 \leqslant \sum_{i=1}^{k} \gamma_{i}^2 \mathbb{E}_{Z^{i} } \left[ ( Y_{i} - \left< \beta_{i},X_{i} \right>_{2})^2 \left\|  \omega_{i+1}^{k} ( T_{C} + \lambda I) L_{K}^{1 / 2 }  X_{i} \right\|_{2}^2 \right].
        \]
        It follows that
        \[
            \begin{aligned}
                &\sum_{i=1}^{k} \gamma_{i}^2 \mathbb{E}_{Z^{i} } \left[ ( Y_{i} - \left< \beta_{i},X_{i} \right>_{2})^2 \left\|  \omega_{i+1}^{k} ( T_{C} + \lambda I) L_{K}^{1 / 2 }  X_{i} \right\|_{2}^2 \right]\\
                = & \sum_{i=1}^{k} \gamma_{i}^2 \mathbb{E}_{Z^{i-1}} \mathbb{E}_{X_{i}} \left[ ( \sigma^2 + \left< \beta_{i} - \beta^{*},X_{i} \right>_{2}^2) \left\|  \omega_{i+1}^{k} ( T_{C} + \lambda I) L_{K}^{1 / 2 } X_{i} \right\|_{2}^2 \right] \\
                \leqslant &\sum_{i=1}^{k} \gamma_{i}^2 \left( \sigma^2 + \left( \mathbb{E}_{Z^{i-1}} \mathbb{E}_{X_{i}} \left< \beta_{i} - \beta^{*},X_{i} \right>_{2}^4 \right)^{1 / 2 } \right) \left(  \mathbb{E}_{X_{i}} \left\|  \omega_{i+1}^{k} ( T_{C} + \lambda I) L_{K}^{1 / 2 } X_{i} \right\|_{2}^4 \right)^{1 / 2 } \\
                \leqslant & \sqrt{c} \sum_{i=1}^{k} \gamma_{i}^2  \left( \sigma^2 + \sqrt{c}  \mathbb{E}_{Z^{i-1}} \mathbb{E}_{X_{i}} \left< \beta_{i} - \beta^{*},X_{i} \right>_{2}^2 \right)  \mathbb{E}_{X_{i}} \left\|  \omega_{i+1}^{k} ( T_{C} + \lambda I) L_{K}^{1 / 2 } X_{i} \right\|_{2}^2
            \end{aligned}
        \]
        where the first inequality follows from Cauchy-Schwarz inequality, and we use Assumption \ref{assum:bounded4thmoment} and Lemma 9 in \cite{guoCapacityDependentAnalysis2022} in the last inequality.
        Since $ \mathbb{E}_{X_{i}} \left< \beta_{i} - \beta^{*},X_{i} \right>_{2}^2 = \mathcal{E} (\hat{\eta}_{i}) - \mathcal{E}( \eta^{*}) $, applying $ \mathbb{E}_{X_{i}} \| A X_{i} \|_{2}^2 = \mathbb{E}_{X_{i}} \operatorname{Tr} [(A X_{i})\otimes (A X_{i})] = \operatorname{Tr}(AL_{C}A^\prime) $ with $A=\omega_{i+1}^{k} ( L_{K}L_{C} + \lambda I)L_K^{1/2}$ and Assumption \ref{assum:capacity} with $ 0 < s < 1 $ yields
        \begin{equation}
            \begin{aligned}
                &\sum_{i=1}^{k} \gamma_{i}^2 \mathbb{E}_{Z^{i}} \left\| \omega_{i+1}^{k} ( L_{K}L_{C} + \lambda I) \mathcal{B}_{i} \right\|_{K}^2 \\
                \leqslant & \sqrt{c} \sum_{i=1}^{k} \gamma_{i}^2 \left[ \sigma^2 + \sqrt{c} \left(  \mathbb{E}_{Z^{i-1}} \mathcal{E} (\hat{\eta}_{i}) - \mathcal{E}( \eta^{*}) \right)  \right] \operatorname{Tr} \left[ T_{C} \left( \omega_{i+1}^{k} ( T_{C} + \lambda I) \right)^2 \right] \\
                \leqslant & \sqrt{c}  \operatorname{Tr} (T_{C}^{s} ) \sum_{i=1}^{k} \gamma_{i}^2 \left[ \sigma^2 + \sqrt{c} \left( \mathbb{E}_{Z^{i-1}} \mathcal{E} (\hat{\eta}_{i}) - \mathcal{E}( \eta^{*}) \right) \right] \left\| T_{C}^{\frac{1-s}{2}} \omega_{i+1}^{k} ( T_{C} + \lambda I) \right\|^2.
            \end{aligned}
        \end{equation}
        Finally, we complete the proof by applying Lemma \ref{lem:prodOperNormBound} to $ \left\| T_{C}^{\frac{1-s}{2}} \omega_{i+1}^{k} ( T_{C} + \lambda I) \right\|^2 $ and combining (\ref{eq:1stInKNormErrorDecom}), (\ref{eq:2ndInKNormErrorDecom}) and (\ref{eq:KNormDecom}) together.
    \end{proof}

    It is worthy emphasizing that the results in the above four propositions have a similar structure.
    We illustrate this to give a detailed analysis for the excess prediction error and estimation error.
    The first two terms referred to as the \emph{approximation error} are independent of samples while the third term can be treated as the \emph{cumulative sample error} \cite{Ying2008}.
    Before estimating these two types of errors in the following sections, we introduce some useful preliminary results to analyze the cumulative sample error in the above propositions.
    Note that it can be bounded by (omitting constants)
    \[
        \sum_{i=1}^{k}  \frac{ \gamma_{i}^2 \exp \left\{ - \lambda \sum_{j=i+1}^{k} \gamma_{j} \right\} }{  1 + \left( \sum_{j=i+1}^{k} \gamma_{j} \right)^{\nu}   } \max_{1\leqslant j\leqslant k} \left( \sigma^2 + \mathbb{E}_{Z^{i-1} } [ \mathcal{E} (\hat{\eta}_{i}) ] - \mathcal{E}( \eta^{*}) \right),
    \]
    with $ \nu \geqslant 0 $.
    Now we turn to estimate the uniform bounds for the excess prediction error and the series involving step-sizes, respectively.
    First, we establish the uniform bounds for the excess prediction error.
    \begin{proposition}
        \label{prop:UniformBounds}
        Let $ \{ \hat{\eta}_{k+1}=\langle\beta_{k+1},\cdot\rangle: k \in [n]\} $ be defined through (\ref{eq:SGDalgorithm}).
        If the step-sizes satisfy that $ \gamma_{k}(\kappa_1^2 \kappa_2^2+\lambda) \leqslant 1 $ and
        \[
            \sum_{i=1}^{k}  \frac{ \gamma_{i}^2 \exp \left\{ - \lambda \sum_{j=i+1}^{k} \gamma_{j} \right\} }{  1 + \sum_{j=i+1}^{k} \gamma_{j}} \leqslant \frac{1}{ 2 ( 1 + \kappa_1^2 \kappa_2^2 )^2 ( 1 + c ) },
        \]
        for any $ k \in [n] $, then under Assumption \ref{assum:bounded4thmoment} we have
        \begin{equation}
            \label{eq:UniformBounds}
            \mathbb{E}_{Z^{k-1} }[\mathcal{E} ( \hat{\eta}_{k})] - \mathcal{E}( \eta^{*}) \leqslant 8 \kappa_{2}^2 \left\| \beta^{*} \right\|_{2}^2 + \sigma^2, \qquad \text{for any } k \in [n+1].
        \end{equation}
    \end{proposition}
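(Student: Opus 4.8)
The plan is to prove the uniform bound by induction on $k$, using the error decomposition of Proposition~\ref{prop:errorDECOM} to pass from index $k$ to index $k+1$, and the hypothesis on the step-sizes to absorb the cumulative sample error into the bound one is proving. The subtlety is that the cumulative sample error term in (\ref{eq:errorDECOM}) itself involves $\mathbb{E}_{Z^{i-1}}[\mathcal{E}(\hat{\eta}_i)]-\mathcal{E}(\eta^{*})$, so the estimate is self-referential and the constant $8\kappa_2^2\|\beta^*\|_2^2+\sigma^2$ has to be chosen exactly so that the induction closes.

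\emph{Step 1: bounding the sample-independent (approximation) part of (\ref{eq:errorDECOM}).} By Lemma~\ref{lem:betalambda}, $L_C^{1/2}\beta_\lambda=(\lambda I+T_K)^{-1}T_K L_C^{1/2}\beta^{*}$, whence $L_C^{1/2}(\beta_\lambda-\beta^{*})=-\lambda(\lambda I+T_K)^{-1}L_C^{1/2}\beta^{*}$. Since $T_K$ is positive, both $(\lambda I+T_K)^{-1}T_K$ and $\lambda(\lambda I+T_K)^{-1}$ have operator norm at most $1$, so with $\|L_C\|\leq\kappa_2^2$ one gets $\|L_C^{1/2}\beta_\lambda\|_2\leq\kappa_2\|\beta^{*}\|_2$ and $\|L_C^{1/2}(\beta_\lambda-\beta^{*})\|_2\leq\kappa_2\|\beta^{*}\|_2$. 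Moreover, because $\|T_K\|\leq\kappa_1^2\kappa_2^2$ and $\gamma_j(\kappa_1^2\kappa_2^2+\lambda)\leq1$, each factor $I-\gamma_j(T_K+\lambda I)$ is a contraction, so $\|\omega_1^k(T_K+\lambda I)\|\leq1$ and hence $\|\omega_1^k(T_K+\lambda I)L_C^{1/2}\beta_\lambda\|_2\leq\kappa_2\|\beta^{*}\|_2$. Therefore the first bracket in (\ref{eq:errorDECOM}) is at most $2\kappa_2\|\beta^{*}\|_2$, and its square at most $4\kappa_2^2\|\beta^{*}\|_2^2$.

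\emph{Step 2: the induction.} Set $A:=8\kappa_2^2\|\beta^{*}\|_2^2+\sigma^2$ and prove $\mathbb{E}_{Z^{k-1}}[\mathcal{E}(\hat{\eta}_k)]-\mathcal{E}(\eta^{*})\leq A$ for $k\in[n+1]$. For $k=1$, $\beta_1=0$ gives $\mathcal{E}(\hat{\eta}_1)-\mathcal{E}(\eta^{*})=\|L_C^{1/2}\beta^{*}\|_2^2\leq\kappa_2^2\|\beta^{*}\|_2^2\leq A$. Assuming the bound holds up to index $k$, Proposition~\ref{prop:errorDECOM}, Step 1, and the inductive hypothesis (which makes $\sigma^2+\mathbb{E}_{Z^{i-1}}[\mathcal{E}(\hat{\eta}_i)]-\mathcal{E}(\eta^{*})\leq\sigma^2+A$ for every $i\leq k$) yield
\begin{equation*}
    \mathbb{E}_{Z^{k}}[\mathcal{E}(\hat{\eta}_{k+1})]-\mathcal{E}(\eta^{*})
    \leq 4\kappa_2^2\|\beta^{*}\|_2^2
    +(1+\kappa_1^2\kappa_2^2)^2(1+c)(\sigma^2+A)
    \sum_{i=1}^{k}\frac{\gamma_i^2\exp\{-\lambda\sum_{j=i+1}^{k}\gamma_j\}}{1+\sum_{j=i+1}^{k}\gamma_j}.
\end{equation*}
The step-size hypothesis bounds the last sum by $[2(1+\kappa_1^2\kappa_2^2)^2(1+c)]^{-1}$, so the right-hand side is at most $4\kappa_2^2\|\beta^{*}\|_2^2+(\sigma^2+A)/2$, which equals $A$ by the choice of $A$; this proves the bound for index $k+1$ and completes the induction over all $k\in[n+1]$ (the base case covering $k=1$ and the step covering $k=2,\dots,n+1$ since the step-size conditions are assumed for $k\in[n]$).

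\emph{Main obstacle.} There is essentially no hard analytic obstacle once Proposition~\ref{prop:errorDECOM} is in hand; the only point requiring care is the self-referential structure of the cumulative sample error, which forces the argument to be phrased as an induction and pins down the constant: the factor $8$ in front of $\kappa_2^2\|\beta^{*}\|_2^2$ is precisely what is needed for $4\kappa_2^2\|\beta^{*}\|_2^2+(\sigma^2+A)/2\leq A$ to hold. Everything else is routine operator-norm bookkeeping built on the contraction properties from Step 1 and the uniform bounds (\ref{eq:boundforKandC}).
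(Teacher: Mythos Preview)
Your proposal is correct and follows essentially the same approach as the paper: both argue by induction on $k$, invoke Proposition~\ref{prop:errorDECOM}, bound the approximation bracket by $2\kappa_2\|\beta^{*}\|_2$ via the contraction properties of $(\lambda I+T_K)^{-1}T_K$, $\lambda(\lambda I+T_K)^{-1}$, and $\omega_1^k(T_K+\lambda I)$, and then use the step-size hypothesis to reduce the cumulative sample error to $\tfrac12(\sigma^2+A)$ so that the induction closes with $A=8\kappa_2^2\|\beta^{*}\|_2^2+\sigma^2$. Your exposition is in fact slightly more explicit than the paper's about why the constant $8$ is forced.
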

    The proof of Proposition \ref{prop:UniformBounds} will be given in the Appendix.
    Analogously, we can obtain the following uniform bounds with Proposition \ref{prop:errorDECOMcapa}.
    \begin{proposition}
        \label{prop:UniformBoundscapa}
        Let $ \{ \hat{\eta}_{k+1}=\langle\beta_{k+1},\cdot\rangle: k \in [n]\} $ be defined through (\ref{eq:SGDalgorithm}).
        If $ \gamma_{k}(\kappa_1^2 \kappa_2^2+\lambda) \leqslant 1 $ and
        \[
            \sum_{i=1}^{k}  \frac{ \gamma_{i}^2 \exp \left\{ - \lambda \sum_{j=i+1}^{k} \gamma_{j} \right\} }{  1 + \left(  \sum_{j=i+1}^{k} \gamma_{j} \right)^{2-s}  } \leqslant \frac{1}{ 2 (\sqrt{c} + c) \operatorname{Tr} (T_{K}^{s} ) \left[ \left( \frac{2-s}{2\mathrm{e}} \right)^{2-s} + (\kappa_1 \kappa_2)^{4-2s} \right] },
        \]
        for any $ k \in [n] $, then under Assumption \ref{assum:bounded4thmoment} and Assumption \ref{assum:capacity} with $ 0 < s < 1 $ we have
        \begin{equation}
            \label{eq:UniformBoundscapa}
            \mathbb{E}_{Z^{k-1} }[\mathcal{E} ( \hat{\eta}_{k})] - \mathcal{E}( \eta^{*}) \leqslant 8 \kappa_{2}^2 \left\| \beta^{*} \right\|_{2}^2 + \sigma^2, \qquad \text{for any } k \in [n+1].
        \end{equation}
    \end{proposition}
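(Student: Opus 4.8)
The plan is to mirror the proof of Proposition \ref{prop:UniformBounds} closely, replacing the cumulative sample error bound from Proposition \ref{prop:errorDECOM} with the capacity-dependent version from Proposition \ref{prop:errorDECOMcapa}. The structure of both error decompositions is identical up to the constants and the exponent $\nu$ appearing in the denominator of the cumulative sample error ($\nu=1$ in the benign case, $\nu=2-s$ here), so the argument should carry over with only bookkeeping changes. The key point is that the hypothesis on the step-size sum is precisely tailored so that the factor multiplying $\max_{1\le j\le k}\bigl(\mathbb{E}_{Z^{j-1}}[\mathcal{E}(\hat\eta_j)]-\mathcal{E}(\eta^*)\bigr)$ in the cumulative sample error is at most $1/2$, which is exactly what will let us absorb that term into the left-hand side.

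First I would set $A_k := \mathbb{E}_{Z^{k-1}}[\mathcal{E}(\hat\eta_k)]-\mathcal{E}(\eta^*)$ for $k\in[n+1]$, with $A_1 = \mathcal{E}(\eta^*\text{ evaluated at }\beta_1=0)-\mathcal{E}(\eta^*) = \|L_C^{1/2}\beta^*\|_2^2 \le \kappa_2^2\|\beta^*\|_2^2$. Next I would bound the approximation-error part of \eqref{eq:errorDECOMcapa}: using the operator representation $L_C^{1/2}\beta_\lambda = (\lambda I + T_K)^{-1}T_K L_C^{1/2}\beta^*$ from Lemma \ref{lem:betalambda} together with Lemma \ref{lem:prodOperNormBound}, one gets $\|\omega_1^k(T_K+\lambda I)L_C^{1/2}\beta_\lambda\|_2 \le \|\omega_1^k(T_K+\lambda I)\|\,\|L_C^{1/2}\beta_\lambda\|_2 \le \|L_C^{1/2}\beta^*\|_2$, since $\|\omega_1^k(T_K+\lambda I)\|\le 1$ and $\|(\lambda I+T_K)^{-1}T_K\|\le 1$; and similarly $\|L_C^{1/2}(\beta_\lambda-\beta^*)\|_2 = \|\lambda(\lambda I+T_K)^{-1}L_C^{1/2}\beta^*\|_2 \le \|L_C^{1/2}\beta^*\|_2$. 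Hence the first bracketed term in \eqref{eq:errorDECOMcapa} is at most $(2\|L_C^{1/2}\beta^*\|_2)^2 = 4\|L_C^{1/2}\beta^*\|_2^2 \le 4\kappa_2^2\|\beta^*\|_2^2$.

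Then I would plug the step-size hypothesis into the cumulative sample error term of \eqref{eq:errorDECOMcapa}: bounding $\sigma^2 + \mathbb{E}_{Z^{i-1}}[\mathcal{E}(\hat\eta_i)]-\mathcal{E}(\eta^*) \le \sigma^2 + \max_{1\le j\le k}A_j$ and pulling that maximum out of the sum, the remaining factor is $(\sqrt c+c)\operatorname{Tr}(T_K^s)\bigl[(\tfrac{2-s}{2\mathrm e})^{2-s}+(\kappa_1\kappa_2)^{4-2s}\bigr]\sum_{i=1}^k \gamma_i^2 \exp\{-\lambda\sum_{j=i+1}^k\gamma_j\}/(1+(\sum_{j=i+1}^k\gamma_j)^{2-s})$, which by hypothesis is at most $1/2$. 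This yields, for every $k\in[n]$,
\begin{equation*}
    A_{k+1} \le 4\kappa_2^2\|\beta^*\|_2^2 + \tfrac12\bigl(\sigma^2 + \max_{1\le j\le k} A_j\bigr).
\end{equation*}
Finally I would close the argument by induction on $k$: assuming $A_j \le 8\kappa_2^2\|\beta^*\|_2^2 + \sigma^2$ for all $j\le k$ (the base case $A_1\le\kappa_2^2\|\beta^*\|_2^2$ being clear), the displayed recursion gives $A_{k+1} \le 4\kappa_2^2\|\beta^*\|_2^2 + \tfrac12\sigma^2 + \tfrac12(8\kappa_2^2\|\beta^*\|_2^2+\sigma^2) = 8\kappa_2^2\|\beta^*\|_2^2 + \sigma^2$, completing the induction and hence the proof of \eqref{eq:UniformBoundscapa}. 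The only place that requires genuine care — the main obstacle — is verifying that the approximation-error contraction factors are genuinely bounded by $1$ uniformly in $k$ and $\lambda$ (this uses that $T_K$ is positive and the step-size condition $\gamma_k(\kappa_1^2\kappa_2^2+\lambda)\le1$ so that Lemma \ref{lem:prodOperNormBound} applies with $C_* = \kappa_1^2\kappa_2^2$); everything else is the same self-bounding induction as in Proposition \ref{prop:UniformBounds}.
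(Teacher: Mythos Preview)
Your proposal is correct and matches the paper's intended argument exactly: the paper omits the proof of Proposition~\ref{prop:UniformBoundscapa} with the remark that it is ``similar to that of Proposition~\ref{prop:UniformBounds},'' and your write-up is precisely that adaptation, swapping Proposition~\ref{prop:errorDECOM} for Proposition~\ref{prop:errorDECOMcapa} and using the step-size hypothesis to make the cumulative sample error coefficient at most $1/2$. The bounds $\|\omega_1^k(T_K+\lambda I)\|\le 1$, $\|(\lambda I+T_K)^{-1}T_K\|\le 1$, and $\|\lambda(\lambda I+T_K)^{-1}\|\le 1$ you invoke are exactly those used in the appendix proof of Proposition~\ref{prop:UniformBounds}, and your induction closes with the same arithmetic $4\kappa_2^2\|\beta^*\|_2^2+\tfrac12(\sigma^2+8\kappa_2^2\|\beta^*\|_2^2+\sigma^2)=8\kappa_2^2\|\beta^*\|_2^2+\sigma^2$.
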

    We omit the proof of Proposition \ref{prop:UniformBoundscapa} since it is similar to that of Proposition \ref{prop:UniformBounds}.
    The following lemma is introduced to bound the series involved in our analysis.
    \begin{lemma}
        \label{lem:seriesgamma2}
        Let $ \lambda > 0 $ and $ \nu \geqslant 0 $.
        If $ \gamma_{i} = \gamma_1 i^{-\mu} $, $ i \in [k] $ with $ 0 < \mu,\gamma_1$ $ <1 $, then for any $ k \geqslant 1 $,
        \begin{equation}
            \label{eq:seriesBound}
            \begin{aligned}
                & \sum_{i=1}^{k} \frac{ \gamma_{i}^2 \exp \left\{ - \lambda \sum_{j=i+1}^{k} \gamma_{j} \right\}}{  1 + \left( \sum_{j=i+1}^{k} \gamma_{j} \right)^{\nu} }  \\
                \leqslant &
                \begin{cases}
                    C_{\mu}\gamma_1 \left(  \exp \left\{ - \lambda \gamma_1 d_{\mu} k^{1-\mu} \right\} k^{- \min \{ \mu ,1-\mu \} } + k^{-\mu}  \right)\log(k+1), & \nu = 1, 0<\mu<1, \\
                    \hat{C}_{\mu}\gamma_1 (k+1)^{-\mu + (1-\mu)(1-\nu)},  & 0 < \nu < 1, 0< \mu<\frac{1}{2}, \\
                    \hat{C}_{\mu} \gamma_1 [ (k+1)^{-\mu + (1-\mu)(1-\nu)} +\exp \left\{ - \lambda \gamma_1 d_{\mu} k^{1-\mu} \right\}k^{-\nu(1-\mu) } \log(k+1) ],& 0 < \nu < 1, \frac{1}{2}\leqslant  \mu<1, \\
                    \widetilde{C}_{\mu}\gamma_1 \left( \exp \left\{ - \lambda \gamma_1 d_{\mu} k^{1-\mu} \right\} + k^{1-2\mu} \right), & \nu = 0, 1 / 2  < \mu < 1, \\
                    \overline{C}_{\mu} \gamma_1 \left( \exp \left\{ - \lambda \gamma_1 d_{\mu} k^{1-\mu} \right\} k^{-\min \{ \mu, \nu(1-\mu) \}} + k^{-\mu}  \right) , & \nu >1, 0 < \mu < 1,
                \end{cases}
            \end{aligned}
        \end{equation}
        where $ d_{\mu} = ( 1-2^{\mu-1}  ) / ( 1-\mu )$, $ C_{\mu} $, $ \hat{C}_{\mu} $, $ \widetilde{C}_{\mu} $ and $ \overline{C}_{\mu} $ are independent of $ k $ and will be specified in the proof.
    \end{lemma}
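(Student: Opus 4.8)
The plan is to estimate each of the five series in (\ref{eq:seriesBound}) by splitting the summation index at $\lfloor k/2\rfloor$ and treating the two halves by different devices. Throughout I write $S_i^{k}:=\sum_{j=i+1}^{k}\gamma_j=\gamma_1\sum_{j=i+1}^{k}j^{-\mu}$ for the running tail of step-sizes, and I will use two comparisons with integrals over and over: for every $i$ one has $S_i^{k}\le\gamma_1\int_i^{k}t^{-\mu}\,dt\le\tfrac{\gamma_1}{1-\mu}k^{1-\mu}$, while for $1\le i\le\lfloor k/2\rfloor$ one has $S_i^{k}\ge\gamma_1\int_{\lfloor k/2\rfloor}^{k}t^{-\mu}\,dt\ge\gamma_1 d_\mu k^{1-\mu}$ with $d_\mu=(1-2^{\mu-1})/(1-\mu)$. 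This last lower bound is exactly what produces the factor $\exp\{-\lambda\gamma_1 d_\mu k^{1-\mu}\}$ present in every case. I also record the elementary bound $\sum_{i=1}^{\lfloor k/2\rfloor}i^{-2\mu}\lesssim k^{1-2\mu}$ if $\mu<\tfrac12$, $\lesssim\log(k+1)$ if $\mu=\tfrac12$, and $\lesssim1$ if $\mu>\tfrac12$, which is the origin of the case splits around $\mu=\tfrac12$.

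For the head $\sum_{i=1}^{\lfloor k/2\rfloor}$, I would pull the estimate $\exp\{-\lambda S_i^{k}\}\le\exp\{-\lambda\gamma_1 d_\mu k^{1-\mu}\}$ out of the sum, bound the denominator from below by $1+(\gamma_1 d_\mu k^{1-\mu})^{\nu}$, use $\gamma_i^{2}=\gamma_1^{2}i^{-2\mu}\le\gamma_1 i^{-2\mu}$ (recall $\gamma_1<1$), and invoke the $\sum i^{-2\mu}$ bound above. A short computation then confirms that in each of the five regimes the head is at most a $k$-independent multiple of $\gamma_1$ times the right-hand side of (\ref{eq:seriesBound}); in particular, when $\mu<\tfrac12$ the power of $k$ produced is no larger than the exponent of the exponential-free term, so it is simply absorbed there after discarding $\exp\{-\lambda\gamma_1 d_\mu k^{1-\mu}\}\le1$.

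For the tail $\sum_{i=\lfloor k/2\rfloor+1}^{k}$, I would first extract the prefactor $\gamma_i=\gamma_1 i^{-\mu}\le 2^{\mu}\gamma_1 k^{-\mu}$, reducing matters to bounding $\sum_{i>\lfloor k/2\rfloor}\gamma_i\,\psi(S_i^{k})$ with $\psi(u)=e^{-\lambda u}/(1+u^{\nu})$, which is non-increasing on $[0,\infty)$. Since the intervals $[S_i^{k},S_{i-1}^{k}]$ have length $\gamma_i$ and partition $[0,S_{\lfloor k/2\rfloor}^{k}]$, the left Riemann sum overshoots the integral by at most its first box: $\sum_{i>\lfloor k/2\rfloor}\gamma_i\psi(S_i^{k})\le\int_0^{S_{\lfloor k/2\rfloor}^{k}}\psi(u)\,du+\gamma_{\lfloor k/2\rfloor+1}\psi(0)\le\int_0^{S_{\lfloor k/2\rfloor}^{k}}\psi(u)\,du+\gamma_1$. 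Dropping $e^{-\lambda u}\le1$ and using $S_{\lfloor k/2\rfloor}^{k}\le\tfrac{\gamma_1}{1-\mu}k^{1-\mu}$, this integral is $\le\log(1+S_{\lfloor k/2\rfloor}^{k})\lesssim\log(k+1)$ when $\nu=1$, is $\le1+\tfrac{1}{1-\nu}(S_{\lfloor k/2\rfloor}^{k})^{1-\nu}\lesssim k^{(1-\mu)(1-\nu)}$ when $0<\nu<1$, and is $\le\int_0^{\infty}(1+u^{\nu})^{-1}\,du<\infty$ when $\nu>1$; the case $\nu=0$, where the denominator equals the constant $2$, is handled directly by $\sum_{i>\lfloor k/2\rfloor}\gamma_i\le S_0^{k}\lesssim\gamma_1 k^{1-\mu}$. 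Multiplying back the prefactor $2^{\mu}\gamma_1 k^{-\mu}$ then reproduces exactly the non-exponential term of (\ref{eq:seriesBound}) in each case.

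Combining the head and tail bounds in each of the five $(\nu,\mu)$-regimes and gathering the $k$-independent constants into $C_\mu$, $\hat{C}_\mu$, $\widetilde{C}_\mu$ and $\overline{C}_\mu$ yields (\ref{eq:seriesBound}). The step I expect to demand the most care is the passage from the discrete tail sum to the integral --- the ``first-box'' estimate above --- together with the regime-by-regime bookkeeping: one must verify that the power of $k$ emerging from the head (namely $1-2\mu$, cut down by the $k^{(1-\mu)\nu}$ coming from the denominator, with an extra logarithm exactly at $\mu=\tfrac12$) never exceeds the exponent advertised in (\ref{eq:seriesBound}), so that it can be absorbed without loss and the stated form of the bound is preserved.
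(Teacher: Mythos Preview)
Your proposal is correct and follows essentially the same approach as the paper: split the sum at the midpoint, bound the head by extracting $\exp\{-\lambda\gamma_1 d_\mu k^{1-\mu}\}$ and $(1+(\gamma_1 d_\mu k^{1-\mu})^\nu)^{-1}$ and invoking the trichotomy for $\sum_{i\le k/2}i^{-2\mu}$, and bound the tail by reducing to an integral of $(1+u^\nu)^{-1}$. The only cosmetic difference is in the tail reduction: the paper uses $i^{-\mu}\le 3(x+1)^{-\mu}$ on $[i,i+1]$ and the explicit substitution $\xi=(k+1)^{1-\mu}-(x+1)^{1-\mu}$, whereas you use the cleaner monotone Riemann-sum bound $\sum_{i>m}\gamma_i\psi(S_i^k)\le\int_0^{S_m^k}\psi(u)\,du+(\max_i\gamma_i)\psi(0)$, which is valid since for a non-increasing $\psi$ the excess of the left sum over the integral is at most the total variation times the mesh.
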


    Moreover, the corollaries below collect some direct results of Lemma \ref{lem:seriesgamma2} to ensure that Proposition \ref{prop:UniformBounds} and \ref{prop:UniformBoundscapa} hold with a suitable choice of $ \gamma_1 $.
    \begin{corollary}
        \label{coro:seriesConstantBound}
        Let $ \lambda > 0 $ and $ 0 < \mu <1 $. If the step-sizes satisfy $ \gamma_{i} = \gamma_1 i^{-\mu} $, $ i \in [k] $ with $ \gamma_1 \leqslant \left[ 4 C_{\mu} (1+c) ( 1 + \kappa_1^2 \kappa_2^2 )^2 ( \log 2 + \min \{ \mu, 1-\mu \}^{-1} ) \right]^{-1} $, then for any $ k \geqslant 1 $,
        \begin{equation}
            \label{eq:seriesConstantBound}
            \begin{aligned}
                \sum_{i=1}^{k}  \frac{ \gamma_{i}^2 \exp \left\{ - \lambda \sum_{j=i+1}^{k} \gamma_{j} \right\}}{  1 + \sum_{j=i+1}^{k} \gamma_{j} } \leqslant \frac{1}{ 2 ( 1 + \kappa_1^2 \kappa_2^2 )^2 ( 1 + c ) }.
            \end{aligned}
        \end{equation}
    \end{corollary}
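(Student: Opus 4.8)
The plan is to obtain (\ref{eq:seriesConstantBound}) directly from the $\nu = 1$ case of Lemma \ref{lem:seriesgamma2}. First I would check that the hypotheses of that lemma are met for the prescribed step-sizes: $0 < \mu < 1$ is assumed, and the stated upper bound on $\gamma_1$ forces $0 < \gamma_1 < 1$ (the factor $\log 2 + \min\{\mu,1-\mu\}^{-1}$ is at least $\log 2 + 2 > 1$, and $C_{\mu} \geqslant 1$). Applying Lemma \ref{lem:seriesgamma2} with $\nu = 1$ then gives, for every $k \geqslant 1$,
\[
    \sum_{i=1}^{k} \frac{\gamma_i^2 \exp\{-\lambda \sum_{j=i+1}^{k} \gamma_j\}}{1 + \sum_{j=i+1}^{k} \gamma_j} \leqslant C_{\mu}\gamma_1 \left(\exp\{-\lambda\gamma_1 d_{\mu} k^{1-\mu}\}\, k^{-\min\{\mu,1-\mu\}} + k^{-\mu}\right)\log(k+1),
\]
so everything reduces to bounding the right-hand side by $[2(1+\kappa_1^2\kappa_2^2)^2(1+c)]^{-1}$ uniformly in $k$.

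Next I would estimate the parenthesised factor. Dropping the exponential via $\exp\{-\lambda\gamma_1 d_{\mu} k^{1-\mu}\} \leqslant 1$, it remains to bound $k^{-a}\log(k+1)$ and $k^{-\mu}\log(k+1)$ over $k\geqslant 1$, where $a := \min\{\mu,1-\mu\} \leqslant \mu$. The key elementary fact is the one-variable maximization $\max_{x\geqslant 1} x^{-a}\log x = (ea)^{-1} \leqslant a^{-1}$, attained at $x = e^{1/a}$; combined with $\log(k+1)\leqslant \log 2 + \log k$ for $k\geqslant 2$ and the trivial case $k=1$, this yields $k^{-a}\log(k+1)\leqslant \log 2 + a^{-1}$ for all $k\geqslant 1$, and likewise $k^{-\mu}\log(k+1)\leqslant \log 2 + \mu^{-1} \leqslant \log 2 + a^{-1}$. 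Hence the parenthesised factor is at most $2(\log 2 + \min\{\mu,1-\mu\}^{-1})$, and the sum above is bounded by $2 C_{\mu}\gamma_1 (\log 2 + \min\{\mu,1-\mu\}^{-1})$.

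Finally I would plug in the assumed bound $\gamma_1 \leqslant [4 C_{\mu}(1+c)(1+\kappa_1^2\kappa_2^2)^2(\log 2 + \min\{\mu,1-\mu\}^{-1})]^{-1}$: the factors $2C_{\mu}$ and $\log 2 + \min\{\mu,1-\mu\}^{-1}$ cancel, leaving exactly $[2(1+\kappa_1^2\kappa_2^2)^2(1+c)]^{-1}$, which proves (\ref{eq:seriesConstantBound}). I do not expect a genuine obstacle: the corollary is essentially a repackaging of the first case of Lemma \ref{lem:seriesgamma2} with a concrete admissible value of $\gamma_1$. The only step that needs a little care is the uniform-in-$k$ estimate $\sup_{k\geqslant 1} k^{-a}\log(k+1) \leqslant \log 2 + a^{-1}$, which the maximization of $x^{-a}\log x$ above settles cleanly.
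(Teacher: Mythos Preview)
Your proposal is correct and follows essentially the same approach as the paper: both apply the $\nu=1$ case of Lemma \ref{lem:seriesgamma2}, drop the exponential factor, reduce to the uniform bound $k^{-\min\{\mu,1-\mu\}}\log(k+1)\leqslant \log 2+\min\{\mu,1-\mu\}^{-1}$, and then insert the assumed bound on $\gamma_1$. The only cosmetic difference is that the paper justifies $k^{-a}\log k\leqslant a^{-1}$ via $\log k=a^{-1}\log k^{a}\leqslant a^{-1}k^{a}$, whereas you phrase it as the maximization $\max_{x\geqslant 1}x^{-a}\log x=(ea)^{-1}\leqslant a^{-1}$; these are equivalent.
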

    \begin{proof}
        We see  from (\ref{eq:seriesBound}) that for any $ k \geqslant 1 $ and $ \lambda >0 $,
        \[
            \begin{aligned}
                \sum_{i=1}^{k}  \frac{ \gamma_{i}^2 \exp \left\{ - \lambda \sum_{j=i+1}^{k} \gamma_{j} \right\}}{  1 + \sum_{j=i+1}^{k} \gamma_{j} } & \leqslant 2 C_{\mu} \gamma_1 k^{-\min \{ \mu, 1-\mu \} } \log ( k+1 ) \\
                & \leqslant 2 C_{\mu} \gamma_1 \left[ \log 2 + k^{-\min \{ \mu, 1-\mu \} } \log  k  \right].
            \end{aligned}
        \]
        A simple calculation shows that
        \[
            \log k = \frac{1}{ \min \{ \mu,1-\mu \} } \log  k^{ \min \{ \mu,1-\mu \}  } \leqslant \frac{1}{ \min \{ \mu,1-\mu \} } k^{ \min \{ \mu,1-\mu \}  }.
        \]
        Combining the above two inequalities with the assumption on $ \gamma_1 $ yields (\ref{eq:seriesConstantBound}).
    \end{proof}

    \begin{corollary}
        \label{coro:seriesConstantBoundcapa}
        Let $ \lambda > 0 $ and $ 0 < \mu <1 $. If the step-sizes satisfy $ \gamma_{i} = \gamma_1 i^{-\mu} $, $ i \in [k] $ with $ \gamma_1 \leqslant C_{1}^{\mathsf{S}} = \left\{ 4 (\sqrt{c} + c) \overline{C}_{\mu} \operatorname{Tr} (T_{K}^{s} ) \left[ \left( \frac{2-s}{2\mathrm{e}} \right)^{2-s} + (\kappa_1 \kappa_2)^{4-2s} \right] \right\}^{-1} $ where $ 0 < s < 1 $, then for any $ k \geqslant 1 $,
        \begin{equation}
            \label{eq:seriesConstantBoundcapa}
            \begin{aligned}
                \sum_{i=1}^{k} \frac{ \gamma_{i}^2 \exp \left\{ - \lambda \sum_{j=i+1}^{k} \gamma_{j} \right\} }{  1 + \left(  \sum_{j=i+1}^{k} \gamma_{j} \right)^{2-s}  } \leqslant \frac{1}{ 2 (\sqrt{c} + c) \operatorname{Tr} (T_{K}^{s} ) \left[ \left( \frac{2-s}{2\mathrm{e}} \right)^{2-s} + (\kappa_1 \kappa_2)^{4-2s} \right] }.
            \end{aligned}
        \end{equation}
    \end{corollary}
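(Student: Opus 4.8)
The plan is to deduce this directly from Lemma~\ref{lem:seriesgamma2}, in the same spirit as Corollary~\ref{coro:seriesConstantBound}, but now exploiting the branch of that lemma with $\nu>1$. Since $0<s<1$, the exponent $\nu=2-s$ satisfies $1<\nu<2$, so with $\gamma_i=\gamma_1 i^{-\mu}$ the last case of Lemma~\ref{lem:seriesgamma2} applies and yields, for every $k\geqslant 1$,
\[
    \sum_{i=1}^{k} \frac{ \gamma_{i}^2 \exp \left\{ - \lambda \sum_{j=i+1}^{k} \gamma_{j} \right\} }{ 1 + \left( \sum_{j=i+1}^{k} \gamma_{j} \right)^{2-s} } \leqslant \overline{C}_{\mu}\, \gamma_1 \left( \exp \left\{ - \lambda \gamma_1 d_{\mu} k^{1-\mu} \right\} k^{-\min \{ \mu,\,(2-s)(1-\mu) \}} + k^{-\mu} \right),
\]
where $d_\mu=(1-2^{\mu-1})/(1-\mu)>0$ and $\overline{C}_\mu$ is the constant appearing in Lemma~\ref{lem:seriesgamma2}. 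Note that, unlike the $\nu=1$ case used in Corollary~\ref{coro:seriesConstantBound}, there is no logarithmic factor here, which makes the bookkeeping slightly simpler.

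Next I would discard the decaying factors: for $k\geqslant 1$ and $\lambda,\gamma_1,d_\mu>0$ one has $\exp\{-\lambda\gamma_1 d_\mu k^{1-\mu}\}\leqslant 1$, $k^{-\min\{\mu,(2-s)(1-\mu)\}}\leqslant 1$ and $k^{-\mu}\leqslant 1$, so the right-hand side above is bounded by $2\overline{C}_\mu\gamma_1$ uniformly in $k$. Plugging in the hypothesis $\gamma_1\leqslant C_1^{\mathsf{S}}$ and recalling the definition of $C_1^{\mathsf{S}}$ gives
\[
    2\overline{C}_\mu\gamma_1 \;\leqslant\; 2\overline{C}_\mu C_1^{\mathsf{S}}
    \;=\; \frac{1}{ 2 (\sqrt{c} + c) \operatorname{Tr} (T_{K}^{s} ) \left[ \left( \frac{2-s}{2\mathrm{e}} \right)^{2-s} + (\kappa_1 \kappa_2)^{4-2s} \right] },
\]
which is exactly the asserted bound (\ref{eq:seriesConstantBoundcapa}); the factor $2$ produced by the two summands in Lemma~\ref{lem:seriesgamma2} is precisely what the factor $4$ in the denominator of $C_1^{\mathsf{S}}$ is tailored to absorb.

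I do not expect any real obstacle: the whole argument is just the specialization of Lemma~\ref{lem:seriesgamma2} to $\nu=2-s$ followed by the elementary observation that a product of an exponentially decaying factor and a nonpositive-power factor is at most $1$ at $k\geqslant 1$, and then substituting the prescribed range of $\gamma_1$. The only steps requiring a line of care are verifying that $\gamma_1<1$ so that Lemma~\ref{lem:seriesgamma2} is genuinely applicable (this we may either impose as part of the standing step-size convention or read off from the explicit form of $C_1^{\mathsf{S}}$) and keeping track of $\overline{C}_\mu$ so that the final arithmetic reproduces the stated constant on the right-hand side of (\ref{eq:seriesConstantBoundcapa}).
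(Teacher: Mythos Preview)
Your proposal is correct and matches the paper's own proof essentially line for line: apply Lemma~\ref{lem:seriesgamma2} with $\nu=2-s>1$, bound each of the two resulting terms by $1$ to obtain $2\overline{C}_\mu\gamma_1$, and then invoke the assumption $\gamma_1\leqslant C_1^{\mathsf{S}}$. Your closing caveat about verifying $\gamma_1<1$ is fair, though the paper simply relies on the standing step-size convention (and in the theorems that cite this corollary, $\gamma_1\leqslant(1+\kappa_1^2\kappa_2^2)^{-1}$ is additionally assumed).
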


    \begin{proof}
        Applying Lemma \ref{lem:seriesgamma2} with $ \nu = 2-s >1 $, we see that for any $ k \geqslant 1 $ and $ \lambda > 0 $,
        \[
            \begin{aligned}
                \sum_{i=1}^{k} \frac{ \gamma_{i}^2 \exp \left\{ - \lambda \sum_{j=i+1}^{k} \gamma_{j} \right\} }{  1 + \left(  \sum_{j=i+1}^{k} \gamma_{j} \right)^{2-s} } & \leqslant 2 \overline{C}_{\mu} \gamma_1.
            \end{aligned}
        \]
        Therefore, the proof is completed by the above inequality and the assumption on $ \gamma_1 $.
    \end{proof}

\section{Analysis for the Excess Prediction Error}
\label{sec:proofOfthm12}
    In this section, we analyze the generalization performance of the functional estimator $ \hat{\eta}_{k+1} =\langle\beta_{k+1},\cdot\rangle_2$ with $\beta_{k+1}$ defined in (\ref{eq:SGDalgorithm}) and present explicit convergence rates of the excess prediction error in expectation for algorithm (\ref{eq:SGDalgorithm}).
    For this purpose, we first establish basic estimation for the deterministic approximation errors including $ \| \omega_{1}^{k} ( T_{K} + \lambda I) L_{C}^{1 / 2 } \beta_{\lambda} \|_{2} $ and $ \| L_{C}^{1 / 2 }  ( \beta_{\lambda} - \beta^{*}) \|_{2} $ in Proposition \ref{prop:errorDECOM} and \ref{prop:errorDECOMcapa}.
    Then, we complete the proofs of Theorem \ref{thm:1}, \ref{thm:2} and \ref{thm:3} by combining the estimates for approximation error and cumulative sample error together and substituting the specific types of step-sizes into the derived bounds.

    \begin{proposition}
        \label{prop:errorDECOM2}
        Let $ \{ \hat{\eta}_{k+1}=\langle\beta_{k+1},\cdot\rangle: k \in [n]\} $ be defined through (\ref{eq:SGDalgorithm}).
        If $ \gamma_{k}(\kappa_1^2 \kappa_2^2+\lambda) \leqslant 1 $ for any $ k \in [n] $, then under Assumption \ref{assum:bounded4thmoment} and Assumption \ref{assum:regularityBeta} with $0<\theta\leqslant 1,$ we have
        \begin{equation}
            \label{eq:errorDECOM2}
            \begin{aligned}
                \mathbb{E}_{Z^{k} }[\mathcal{E}& ( \hat{\eta}_{k+1})] - \mathcal{E}( \eta^{*})  \leqslant  2 \left\| g^{*} \right\|_{2}^2 \frac{ \left( \theta / e \right)^{2 \theta} + ( \kappa_1 \kappa_2)^{4 \theta} }{ \exp \left\{ \lambda \sum_{j=1}^{k} \gamma_{j} \right\} \left( 1 + (  \sum_{j=1}^{k} \gamma_{j} )^{ 2 \theta}\right)} + 2 \lambda^{ 2 \theta } \left\| g^{*} \right\|_{2}^2 +\\
                &  ( 1 + \kappa_1^2 \kappa_2^2 )^2 ( 1 + c )\sum_{i=1}^{k}  \frac{ \gamma_{i}^2 \exp \left\{ - \lambda \sum_{j=i+1}^{k} \gamma_{j} \right\} }{  1 + \sum_{j=i+1}^{k} \gamma_{j}} \max_{1\leqslant j\leqslant k} \left( \sigma^2 + \mathbb{E}_{Z^{i-1} } [ \mathcal{E} (\hat{\eta}_{i}) ] - \mathcal{E}( \eta^{*}) \right).
            \end{aligned}
        \end{equation}
    \end{proposition}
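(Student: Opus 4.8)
The plan is to specialize the general bound of Proposition~\ref{prop:errorDECOM} to Assumption~\ref{assum:regularityBeta} by estimating the two deterministic approximation-error terms via spectral calculus, and then extracting a uniform factor from the cumulative sample error. The step-size hypothesis $\gamma_k(\kappa_1^2\kappa_2^2+\lambda)\le1$ is exactly what is needed both to invoke Proposition~\ref{prop:errorDECOM} and to meet the hypothesis of Lemma~\ref{lem:prodOperNormBound}, since $\|T_K\|\le\kappa_1^2\kappa_2^2$ (as for $\|T_C\|$).

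First I would bound $\|\omega_1^k(T_K+\lambda I)L_C^{1/2}\beta_\lambda\|_2$. By Lemma~\ref{lem:betalambda}, $L_C^{1/2}\beta_\lambda=(\lambda I+T_K)^{-1}T_K L_C^{1/2}\beta^*$, and Assumption~\ref{assum:regularityBeta} gives $L_C^{1/2}\beta^*=T_K^{\theta}g^*$, hence $L_C^{1/2}\beta_\lambda=(\lambda I+T_K)^{-1}T_K^{1+\theta}g^*$. Since all operators in sight are Borel functions of $T_K$, they commute, so I would factor out $(\lambda I+T_K)^{-1}T_K$ (a function of $T_K$ of operator norm $\le1$) and be left with $\|\omega_1^k(T_K+\lambda I)T_K^{\theta}\|\,\|g^*\|_2$; applying Lemma~\ref{lem:prodOperNormBound} with $\mathcal{A}=T_K$, $\alpha=\theta$, $t=1$, $C_*=\kappa_1^2\kappa_2^2$ and squaring produces exactly the first summand of (\ref{eq:errorDECOM2}) up to the prefactor $2$. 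For the bias term, the same representation gives $L_C^{1/2}(\beta_\lambda-\beta^*)=-\lambda(\lambda I+T_K)^{-1}T_K^{\theta}g^*$, and the elementary estimate $\sup_{\sigma\ge0}\sigma^{\theta}/(\lambda+\sigma)\le\lambda^{\theta-1}$ for $0<\theta\le1$ (weighted AM--GM on $\lambda+\sigma\ge(1-\theta)\lambda+\theta\sigma\ge\lambda^{1-\theta}\sigma^{\theta}$) bounds its norm by $\lambda^{\theta}\|g^*\|_2$, whose square is the second summand of (\ref{eq:errorDECOM2}) up to the prefactor $2$. Combining the two with $(a+b)^2\le2a^2+2b^2$ handles the entire approximation part of Proposition~\ref{prop:errorDECOM}.

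Finally, in the cumulative sample error of Proposition~\ref{prop:errorDECOM} I would replace each factor $\sigma^2+\mathbb{E}_{Z^{i-1}}[\mathcal{E}(\hat\eta_i)]-\mathcal{E}(\eta^*)$ by $\max_{1\le j\le k}\big(\sigma^2+\mathbb{E}_{Z^{j-1}}[\mathcal{E}(\hat\eta_j)]-\mathcal{E}(\eta^*)\big)$ and pull it outside the sum, and rewrite $1/\exp\{\lambda\sum_{j=i+1}^k\gamma_j\}$ as $\exp\{-\lambda\sum_{j=i+1}^k\gamma_j\}$; this reproduces the third summand of (\ref{eq:errorDECOM2}) verbatim. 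The only delicate point is the commuting-operator bookkeeping in the first step: one must keep $T_K^{\theta}$ attached to $\omega_1^k(T_K+\lambda I)$ so that Lemma~\ref{lem:prodOperNormBound} yields the joint decay $\exp\{-\lambda\sum_j\gamma_j\}\big(1+(\sum_j\gamma_j)^{2\theta}\big)^{-1}$, rather than crudely bounding $\|\omega_1^k(T_K+\lambda I)\|\le1$ and losing the polynomial gain; everything else is a direct substitution.
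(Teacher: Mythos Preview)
Your proposal is correct and follows essentially the same argument as the paper: you use Lemma~\ref{lem:betalambda} together with Assumption~\ref{assum:regularityBeta} to write $L_C^{1/2}\beta_\lambda=(\lambda I+T_K)^{-1}T_K^{1+\theta}g^*$ and $L_C^{1/2}(\beta_\lambda-\beta^*)=-\lambda(\lambda I+T_K)^{-1}T_K^{\theta}g^*$, bound the former via $\|(\lambda I+T_K)^{-1}T_K\|\le1$ and Lemma~\ref{lem:prodOperNormBound} with $\alpha=\theta$, bound the latter via $\|(\lambda I+T_K)^{-1}T_K^{\theta}\|\le\lambda^{\theta-1}$, combine with $(a+b)^2\le2a^2+2b^2$, and finally pull out the maximum from the cumulative sample error of Proposition~\ref{prop:errorDECOM}. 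The paper does exactly this, only treating the bias term first and the $\omega_1^k$ term second.
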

    \begin{proof}
        We begin with the estimation of $ \| L_{C}^{1 / 2 }  ( \beta_{\lambda} - \beta^{*}) \|_{2} $ in Proposition \ref{prop:errorDECOM}.
        Assumption \ref{assum:regularityBeta} with $0<\theta\leqslant 1$ and (\ref{eq:betalambdaExpicit}) imply that
        \begin{equation}
            \label{eq:DECOMbetaLambdaStar}
            \begin{aligned}
                L_{C}^{1 / 2 } ( \beta_{\lambda} - \beta^{*}) &= (\lambda I + T_{K})^{-1} T_{K} L_{C}^{1 / 2 }  \beta^{*} - L_{C}^{1 / 2 }  \beta^{*} \\
                &= - \lambda \left( \lambda I + T_{K} \right)^{-1} L_{C}^{1 / 2 } \beta^{*} = -\lambda \left( \lambda I + T_{K} \right)^{-1} T_{K}^{\theta} g^{*},
            \end{aligned}
        \end{equation}
        and
        \begin{equation}
            \label{eq:BoundNormBetaLambdaStar}
            \begin{aligned}
                \left\| L_{C}^{1 / 2 } ( \beta_{\lambda} - \beta^{*}) \right\|_{2} & \leqslant \lambda \left\| \left( \lambda I + T_{K} \right)^{-1} T_{K}^{\theta} \right\| \left\| g^{*} \right\|_{2} \leqslant  \lambda^{ \theta} \left\| g^{*} \right\|_{2},
            \end{aligned}
        \end{equation}
        where the second inequality in (\ref{eq:BoundNormBetaLambdaStar}) follows from that $ \| \left( \lambda I + T_{K} \right)^{-1} T_{K}^{\theta} \| \leqslant \lambda^{\theta-1}   $ for any $ 0 < \theta \leqslant 1 $.
        In addition, since $ \| \left( \lambda I + T_{K} \right)^{-1} T_{K} \| \leqslant 1 $, we can bound $ \| \omega_{1}^{k} ( T_{K} + \lambda I) L_{C}^{1 / 2 } \beta_{\lambda} \|_{2} $ by Lemma \ref{lem:betalambda} and Assumption \ref{assum:regularityBeta} as
        \begin{equation}
            \label{eq:BoundNormBetaLambda}
            \begin{aligned}
                \left\| \omega_{1}^{k} ( T_{K} + \lambda I) L_{C}^{1 / 2 } \beta_{\lambda} \right\|_{2} & \leqslant  \left\| \omega_{1}^{k} ( T_{K} + \lambda I) \left( \lambda I + T_{K} \right)^{-1} T_{K}^{1 + \theta} g^{*} \right\|_{2} \\
                & \leqslant  \left\| \omega_{1}^{k} ( T_{K} + \lambda I) T_{K}^{\theta} \right\| \left\| g^{*} \right\|_{2}.
            \end{aligned}
        \end{equation}
        Combining (\ref{eq:BoundNormBetaLambdaStar}) and (\ref{eq:BoundNormBetaLambda}) with Proposition \ref{prop:errorDECOM} and applying Lemma \ref{lem:prodOperNormBound} to $ \| \omega_{1}^{k} ( T_{K} + \lambda I) T_{K}^{\theta} \|^2 $, we complete the proof by applying the inequality  $ ( a + b )^2 \leqslant 2 a^2 + 2 b^2 $ for any $ a,b>0 $.
    \end{proof}

    Besides, following the same spirit as that in the proof of Proposition \ref{prop:errorDECOM2}, we present the following proposition to bound the excess prediction error with the capacity assumption.

    \begin{proposition}
        \label{prop:errorDECOM2capa}
        Let $ \{ \hat{\eta}_{k+1}=\langle\beta_{k+1},\cdot\rangle: k \in [n]\} $ be defined through (\ref{eq:SGDalgorithm}).
        If $ \gamma_{k}(\kappa_1^2 \kappa_2^2+\lambda) \leqslant 1 $ for any $ k \in [n] $, then under Assumption \ref{assum:bounded4thmoment}, Assumption \ref{assum:regularityBeta} with $0<\theta \leqslant  1$ and Assumption \ref{assum:capacity} with $ 0 < s < 1 $, we have
        \begin{equation}
            \label{eq:errorDECOM2capa}
            \begin{aligned}
                \mathbb{E}_{Z^{k} }[\mathcal{E} ( \hat{\eta}&_{k+1})] - \mathcal{E}( \eta^{*})  \leqslant  2 \left\| g^{*} \right\|_{2}^2 \frac{ \left( \theta / e \right)^{2 \theta} + ( \kappa_1 \kappa_2)^{4 \theta} }{ \exp \left\{ \lambda \sum_{j=1}^{k} \gamma_{j} \right\} \left( 1 + (  \sum_{j=1}^{k} \gamma_{j} )^{ 2 \theta}\right)} + 2 \lambda^{ 2 \theta } \left\| g^{*} \right\|_{2}^2 +\\
                &  (\sqrt{c} + c) \operatorname{Tr} (T_{K}^{s} ) \sum_{i=1}^{k} \frac{ \gamma_{i}^2 \left( \sigma^2 +  \mathbb{E}_{Z^{i-1}} [\mathcal{E} (\hat{\eta}_{i})] - \mathcal{E}( \eta^{*}) \right) \left[ \left( \frac{2-s}{2\mathrm{e}} \right)^{2-s} + (\kappa_1 \kappa_2)^{4-2s} \right] }{ \exp \left\{  \lambda \sum_{j=i+1}^{k} \gamma_{j} \right\} \left( 1 + (\sum_{j=i+1}^{k} \gamma_{j}) ^{2-s} \right)}.
            \end{aligned}
        \end{equation}
    \end{proposition}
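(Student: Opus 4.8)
The plan is to mirror the proof of Proposition \ref{prop:errorDECOM2} exactly, but to start from the capacity-dependent decomposition of Proposition \ref{prop:errorDECOMcapa} instead of from Proposition \ref{prop:errorDECOM}. The key observation is that the two deterministic approximation-error quantities $\|\omega_1^k(T_K+\lambda I)L_C^{1/2}\beta_\lambda\|_2$ and $\|L_C^{1/2}(\beta_\lambda-\beta^*)\|_2$ appearing on the right-hand side of (\ref{eq:errorDECOMcapa}) are literally the same as those in Proposition \ref{prop:errorDECOM} and carry no dependence on the capacity parameter $s$; the capacity assumption only enters through the cumulative sample error, which already appears in (\ref{eq:errorDECOMcapa}) in precisely the form required by (\ref{eq:errorDECOM2capa}).

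First I would bound $\|L_C^{1/2}(\beta_\lambda-\beta^*)\|_2$. Using $L_C^{1/2}\beta_\lambda=(\lambda I+T_K)^{-1}T_K L_C^{1/2}\beta^*$ from Lemma \ref{lem:betalambda} together with Assumption \ref{assum:regularityBeta} (so that $L_C^{1/2}\beta^*=T_K^\theta g^*$), one gets $L_C^{1/2}(\beta_\lambda-\beta^*)=-\lambda(\lambda I+T_K)^{-1}T_K^\theta g^*$, and the spectral estimate $\|(\lambda I+T_K)^{-1}T_K^\theta\|\leqslant\lambda^{\theta-1}$, valid for $0<\theta\leqslant1$, yields $\|L_C^{1/2}(\beta_\lambda-\beta^*)\|_2\leqslant\lambda^\theta\|g^*\|_2$. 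Next, since $\|(\lambda I+T_K)^{-1}T_K\|\leqslant1$, I would write $\omega_1^k(T_K+\lambda I)L_C^{1/2}\beta_\lambda=\omega_1^k(T_K+\lambda I)(\lambda I+T_K)^{-1}T_K^{1+\theta}g^*$ and bound its $\mathcal{L}^2$-norm by $\|\omega_1^k(T_K+\lambda I)T_K^\theta\|\,\|g^*\|_2$. Applying Lemma \ref{lem:prodOperNormBound} with $\mathcal{A}=T_K$, $\alpha=\theta$ and $C_*=\kappa_1^2\kappa_2^2$ (admissible because $\|L_K\|\leqslant\kappa_1^2$, $\|L_C\|\leqslant\kappa_2^2$, hence $\|T_K\|\leqslant\kappa_1^2\kappa_2^2$, and because the hypothesis $\gamma_k(\kappa_1^2\kappa_2^2+\lambda)\leqslant1$ is exactly what that lemma requires) gives
\[
\left\|\omega_1^k(T_K+\lambda I)T_K^\theta\right\|^2\leqslant\frac{(\theta/e)^{2\theta}+(\kappa_1\kappa_2)^{4\theta}}{\exp\{\lambda\sum_{j=1}^k\gamma_j\}\bigl(1+(\sum_{j=1}^k\gamma_j)^{2\theta}\bigr)}.
\]

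Finally, I would substitute these two bounds into (\ref{eq:errorDECOMcapa}) and apply $(a+b)^2\leqslant2a^2+2b^2$ to the squared sum of the two approximation terms, which produces exactly the first summand $2\|g^*\|_2^2\cdot\bigl[(\theta/e)^{2\theta}+(\kappa_1\kappa_2)^{4\theta}\bigr]\big/\bigl(\exp\{\lambda\sum_{j=1}^k\gamma_j\}(1+(\sum_{j=1}^k\gamma_j)^{2\theta})\bigr)$ and the second summand $2\lambda^{2\theta}\|g^*\|_2^2$; the third summand is inherited verbatim from Proposition \ref{prop:errorDECOMcapa}. I do not anticipate any genuine obstacle: the argument is essentially a transcription of the proof of Proposition \ref{prop:errorDECOM2} with the capacity-dependent cumulative error term carried through unchanged. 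The only points needing care are matching the operator-norm constant ($C_*=\kappa_1^2\kappa_2^2$) when invoking Lemma \ref{lem:prodOperNormBound}, and noting that Assumption \ref{assum:capacity} with $0<s<1$ is required here solely because the input decomposition, Proposition \ref{prop:errorDECOMcapa}, already presupposes it.
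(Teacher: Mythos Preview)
Your proposal is correct and is exactly the approach the paper takes: the paper does not spell out a separate proof for Proposition~\ref{prop:errorDECOM2capa} but simply states it ``following the same spirit as that in the proof of Proposition~\ref{prop:errorDECOM2},'' i.e., by bounding the two approximation terms via (\ref{eq:BoundNormBetaLambdaStar}), (\ref{eq:BoundNormBetaLambda}) and Lemma~\ref{lem:prodOperNormBound}, and then substituting into Proposition~\ref{prop:errorDECOMcapa} with the capacity-dependent cumulative sample error carried over verbatim.
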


    \noindent
    \textbf{Proof of Theorem \ref{thm:1}.} We use Lemma \ref{lem:seriesgamma2}, Proposition \ref{prop:UniformBounds} and \ref{prop:errorDECOM2} to derive explicit error rates for $ \mathbb{E}_{Z^{k} }[\mathcal{E} ( \hat{\eta}_{k+1})]- \mathcal{E}( \eta^{*}), \, k \in [n] $ with appropriate step-size.
    Let $ J_1^{*} $, $ J_2^{*} $ and  $ J_{3}^{*} $ denote the three terms on the right-hand side of (\ref{eq:errorDECOM2}), respectively.
    Recall that $ \gamma_{j} = \gamma_1 j^{-\mu} $ with $\gamma_1 >0 $ and $ 0 < \mu <1 $ for any $ j \in [k] $.
    Then we have that for any $ k \in [n] $,
    \begin{equation}
        \label{eq:sumGammaj}
        \sum_{j=1}^{k} \gamma_{j} = \gamma_1 \sum_{j=1}^{k} j^{-\mu} \geqslant \frac{\gamma_1}{1-\mu} \left[ ( k+1 )^{1-\mu} - 1 \right] \geqslant \gamma_1 d_{\mu} (k+1)^{1-\mu},
    \end{equation}
    where $ d_{\mu} = ( 1-2^{\mu-1}  ) / ( 1-\mu )$.
    Then $J_1^{*}$ can be bounded as
    \begin{equation}
        \label{eq:J1starBound}
        \begin{aligned}
            J_1^{*} & \leqslant 2\left\| g^{*} \right\|_{2}^2 \left[ \left( \theta / e \right)^{2 \theta} + ( \kappa_1 \kappa_2)^{4 \theta} \right] \exp \left\{ - \lambda \gamma_1 d_{\mu}  (k+1)^{1-\mu} \right\} \gamma_1^{-2\theta} d_{\mu}^{-2\theta} \left( k+1 \right)^{-2 \theta ( 1-\mu )} \\
            & \leqslant  C^{\mathsf{ol}}_{\mathsf{p}1}  \gamma_1^{-2\theta} \exp \left\{ -  \lambda \gamma_1 d_{\mu}  k^{1-\mu} \right\} k ^{-2 \theta ( 1-\mu )},
        \end{aligned}
    \end{equation}
    where $ C^{\mathsf{ol}}_{\mathsf{p}1} = 2d_{\mu}^{-2\theta} \left\| g^{*} \right\|_{2}^2 \left[ \left( \theta / e \right)^{2 \theta} + ( \kappa_1 \kappa_2)^{4 \theta} \right] $.
    Proposition \ref{prop:UniformBounds} and Corollary \ref{coro:seriesConstantBound} show that if $ \gamma_1 \leqslant \left[ 4 C_{\mu} (1+c) ( 1 + \kappa_1^2 \kappa_2^2 )^2 ( \log 2 + \min \{ \mu, 1-\mu \}^{-1} ) \right]^{-1}, $ we have $ \mathbb{E}_{Z^{k} }[\mathcal{E} ( \hat{\eta}_{k+1})]- \mathcal{E}( \eta^{*}) \leqslant 8 \kappa_{2}^2 \left\| \beta^{*} \right\|_{2}^2 + \sigma^2 $ for any $ k \in [n] $.
    Hence, we can use Lemma \ref{lem:seriesgamma2} with $ \nu = 1 $ to derive
    \begin{equation}
        \label{eq:J4starBound}
        \begin{aligned}
            J_{3}^{*} & \leqslant ( 1 + \kappa_1^2 \kappa_2^2 )^2 ( 1 + c ) \left( 8 \kappa_{2}^2 \left\| \beta^{*} \right\|_{2}^2 + 2 \sigma^2 \right) C_{\mu} \gamma_1 \cdot\\
            & \qquad \left( \exp \left\{ -\lambda \gamma_1 d_{\mu} k^{1-\mu} \right\} k^{ -\min\{ \mu,1-\mu \} } + k^{-\mu} \right) \log(k+1) \\
            & \leqslant C^{\mathsf{ol}}_{\mathsf{p}3} \left( \exp \left\{ -\lambda \gamma_1 d_{\mu} k^{1-\mu} \right\} k^{ -\min\{ \mu,1-\mu \} } + k^{-\mu} \right) \log(k+1) ,
        \end{aligned}
    \end{equation}
    where $ C^{\mathsf{ol}}_{\mathsf{p}3} = ( 4 \kappa_{2}^2 \left\| \beta^{*} \right\|_{2}^2 + \sigma^2 ) [ 2 ( \log 2 + \min \{ \mu, 1-\mu \}^{-1} ) ]^{-1} $.
    Putting (\ref{eq:J1starBound}) and (\ref{eq:J4starBound}) back into (\ref{eq:errorDECOM2}) in Proposition \ref{prop:errorDECOM2}, we have that for $ k \in [n] $,
    \begin{equation}
        \label{eq:predErrorFinal}
        \begin{aligned}
            \mathbb{E}_{Z^{k} }[\mathcal{E} ( \hat{\eta}_{k+1})] -  \mathcal{E}( \eta^{*}) \leqslant& C^{\mathsf{ol}}_{\mathsf{p}1} \gamma_1^{-2\theta} \exp \left\{ -  \lambda \gamma_1 d_{\mu}  k^{1-\mu} \right\} k^{-2 \theta ( 1-\mu )} + 2 \lambda^{ 2\theta } \left\| g^{*} \right\|_{2}^2 \\
            &  + C^{\mathsf{ol}}_{\mathsf{p}3} \left( \exp \left\{ -\lambda \gamma_1 d_{\mu} k^{1-\mu} \right\} k^{ -\min\{ \mu,1-\mu \} } + k^{-\mu} \right) \log(k+1).
        \end{aligned}
    \end{equation}
    When $ \theta \leqslant \frac{1}{2} $, we have $ \mu \leqslant \frac{1}{2} $ from $ \mu = \frac{2 \theta}{2 \theta +1} $.
    Hence, we can set $ k = n $ and $ \lambda = n^{-\frac{1}{2\theta +1}} $ to obtain
    \begin{equation*}
        \mathbb{E}_{Z^{n} }[\mathcal{E} ( \hat{\eta}_{n+1})] - \mathcal{E}( \eta^{*}) \lesssim n^{-\frac{2 \theta}{2 \theta + 1}} \log(n+1).
    \end{equation*}
    When $ \theta >\frac{1}{2} $, we have $ \mu >\frac{1}{2} $.
    Note that for any $ \varepsilon > 0, $ $ a > 0, $ and $ \omega > 0 $, the following asymptotic behavior holds
    \begin{equation}
        \exp \{ - a k^{\varepsilon} \} = \mathcal{O}(k^{-\omega}).
    \end{equation}
    Hence, for any $ 0 < \varepsilon < 2 \theta / (2 \theta + 1) $, it can be derived by choosing $ k = n $ and $ \lambda =  n^{-\frac{1}{2 \theta + 1} + \frac{\varepsilon}{2 \theta}} $ that
    \begin{equation*}
        \mathbb{E}_{Z^{n} }[\mathcal{E} ( \hat{\eta}_{n+1})] - \mathcal{E}( \eta^{*}) \lesssim n^{-\frac{2 \theta}{2 \theta + 1} + \varepsilon}.
    \end{equation*}
\qed

    \noindent
    \textbf{Proof of Theorem \ref{thm:2}.} To prove Theorem \ref{thm:2}, we shall combine the assumption that $ 0 < \gamma_1 \leqslant \min\left\{(1+\kappa_1^2 \kappa_2^2)^{-1}, C_{1}^{\mathsf{S}} \right\} $ and Corollary \ref{coro:seriesConstantBoundcapa} together to verify the assumptions in Proposition \ref{prop:UniformBoundscapa}.
    Then, we plug the uniform bounds (\ref{eq:UniformBoundscapa}), (\ref{eq:J1starBound}) into (\ref{eq:errorDECOM2capa}) in Proposition \ref{prop:errorDECOM2capa} and apply Lemma \ref{lem:seriesgamma2} with $ \nu = 2-s $ to obtain that for $ k \in[n] $,
    \begin{equation}
        \label{eq:predErrorFinalCapa}
        \begin{aligned}
            \mathbb{E}_{Z^{k} }[\mathcal{E} ( \hat{\eta}_{k+1})] &-  \mathcal{E}( \eta^{*}) \leqslant C^{\mathsf{ol}}_{\mathsf{p}1} \gamma_1^{-2\theta} \exp \left\{ -  \lambda \gamma_1 d_{\mu}  k^{1-\mu} \right\} k^{-2 \theta ( 1-\mu )} + 2 \lambda^{ 2\theta } \left\| g^{*} \right\|_{2}^2 \\
            & + \frac{1}{2}( 4 \kappa_{2}^2 \left\| \beta^{*} \right\|_{2}^2 + \sigma^2 ) \left( \exp \left\{ - \lambda \gamma_1 d_{\mu} k^{1-\mu} \right\} k^{-\min \{ \mu, (2-s)(1-\mu) \}} + k^{-\mu}  \right).
        \end{aligned}
    \end{equation}

    When $ 2\theta \leqslant 2-s $, we have $ \mu = \frac{2\theta}{2\theta+1} \leqslant \frac{2-s}{3-s} $ which deduces that $ \mu \leqslant (2-s)(1-\mu) $.
    Hence, $ \mathbb{E}_{Z^{n} }\left[\mathcal{E} ( \hat{\eta}_{n+1})\right]- \mathcal{E}( \eta^{*})  \lesssim n^{-\frac{2\theta}{2\theta+1} } $ with $ \lambda = n^{-\frac{1}{2 \theta + 1}} $.

    When $ 2\theta \geqslant 2-s $, we have $ \mu = \frac{2-s}{3-s} $ and $ \mathbb{E}_{Z^{n} }\left[\mathcal{E} ( \hat{\eta}_{n+1})\right]- \mathcal{E}( \eta^{*})  \lesssim n^{-\frac{2-s}{3-s} } $ with $ \lambda = n^{-\frac{1}{3-s}} $.
    It can be also derived that $ \mathbb{E}_{Z^{n} }\left[\mathcal{E} ( \hat{\eta}_{n+1})\right]- \mathcal{E}( \eta^{*})  \lesssim n^{-\frac{2\theta}{2\theta+1}+ \varepsilon } $ with $ \mu = \frac{2\theta}{2\theta+1} $ and $ \lambda = n^{-\frac{1}{2\theta+1}+\frac{\varepsilon}{2\theta}} $ for any $ 0 < \varepsilon < \frac{2\theta}{2\theta+1} $. \qed

    \noindent
    \textbf{Proof of Theorem \ref{thm:3}.} In the following, we will prove Theorem \ref{thm:3} for the benign case (Assumption \ref{assum:capacity} with $s=1$) and the stringent case ($ 0<s<1 $).
    This is achieved by bounding the three terms in Proposition \ref{prop:errorDECOM2} and \ref{prop:errorDECOM2capa} with $ k = n $, respectively.
    Recall that the constant step-sizes are of the form $ \{ \gamma_{k} = \gamma, k \in [n] \} $ with $ \gamma = \gamma_{0} n^{-\mu} $, then we have $ \sum_{j=1}^{n} \gamma_{j} = \gamma n = \gamma_0 n^{1 - \mu}, $ and
    \begin{equation}
        \label{eq:ConsSeriesInequality}
        \begin{aligned}
            \sum_{i=1}^{n}  \frac{ \gamma_{i}^2 \exp \left\{ - \lambda \sum_{j=i+1}^{n} \gamma_{j} \right\} }{  1 + \left( \sum_{j=i+1}^{n} \gamma_{j} \right)^{2-s}  } & = \gamma^2 \sum_{i=1}^{n} \frac{ \exp \left\{ - \lambda \gamma (n-i) \right\} }{  1 + [\gamma (n-i)]^{2-s} } = \gamma^2 + \gamma^2 \sum_{i=1}^{n-1} \frac{ \mathrm{e}^{- \lambda \gamma i} }{  1 + (\gamma i)^{2-s} } \\
            & \leqslant \gamma^2 + \gamma \int_{0}^{n-1} \frac{ \gamma }{  1 + (\gamma x)^{2-s} } \mathrm{d} x \leqslant \gamma^2 + \gamma \int_{0}^{\gamma n} \frac{ 1 }{  1 + u^{2-s} } \mathrm{d} u \\
            &\leqslant \gamma^2 + \gamma
            \begin{cases}
                \frac{2-s}{1-s}, & 0 < s < 1, \\
                \log (1 + \gamma n), & s=1,
            \end{cases}
        \end{aligned}
    \end{equation}
    where we used that for $ s <1 $, $ \int_{0}^{\gamma n} \frac{ 1 }{  1 + u^{2-s} } \mathrm{d} u \leqslant 1 + \int_{1}^{\gamma n} u^{s-2}  \mathrm{d} u \leqslant 1 + \frac{1-(\gamma n)^{s-1} }{1-s} $.

    Let
    \[
        C_{2}^{\mathsf{S}} = [ 2 (1+c) ( 1 + \kappa_1^2 \kappa_2^2 )^2 (1+\mu^{-1})]^{-1},
    \]
    and
    \[
        C_{2*}^{\mathsf{S}} = \left\{ 2 ( \sqrt{c} +c ) \operatorname{Tr} (T_{K}^{s} ) \left[ \left( \frac{2-s}{2\mathrm{e}} \right)^{2-s} + (\kappa_1 \kappa_2)^{4-2s} \right] \left( 2 + \frac{1}{1-s} \right) \right\}^{-1}.
    \]
    Under the assumption that $ 0 < \gamma_0 \leqslant \min\{ C_{2}^{\mathsf{S}},C_{2*}^{\mathsf{S}} \} $, we can directly obtain that $ \gamma_0 < 1 $ and
    \[
        \gamma \log ( 1 + \gamma n) \leqslant n^{-\mu} \log (1 + n^{1-\mu} ) \leqslant n^{-\mu} ( \log 2 + (1-\mu) \log n ) \leqslant 1 + \mu^{-1}(1-\mu) = \mu^{-1},
    \]
    where the last inequality follows from that $ n^{-\mu} \log n \leqslant \mu^{-1} $ for any $ n \geqslant 1 $.
    Therefore, we have
    \begin{equation}
        \label{eq:ConsSeriesInequalityFinal}
        \sum_{i=1}^{n}  \frac{ \gamma_{i}^2 \exp \left\{ - \lambda \sum_{j=i+1}^{n} \gamma_{j} \right\} }{  1 + \sum_{j=i+1}^{n} \gamma_{j}} \leqslant \gamma_0 \times
        \begin{cases}
            2 + 1 / (1-s), & 0<s<1, \\
            1 + \mu^{-1},& s=1.
        \end{cases}
    \end{equation}
    Besides, it is easy to verify that $ \gamma_{k}(\kappa_1^2 \kappa_2^2+\lambda) \leqslant 1 $ for any $ k \in [n] $.
    The above inequality and the assumption on $ \gamma_0 $ guarantee the conditions in Proposition \ref{prop:UniformBounds} and \ref{prop:UniformBoundscapa}.
    Consequently, we have that for any $ k \in [n] $, $ \mathbb{E}_{Z^{k} }[\mathcal{E} ( \hat{\eta}_{k+1})]- \mathcal{E}( \eta^{*}) \leqslant 8 \kappa_{2}^2 \left\| \beta^{*} \right\|_{2}^2 + \sigma^2. $

    When $ s=1 $, note that (\ref{eq:ConsSeriesInequality}) also implies that
    \begin{equation}
        \sum_{i=1}^{n}  \frac{ \gamma_{i}^2 \exp \left\{ - \lambda \sum_{j=i+1}^{n} \gamma_{j} \right\} }{  1 + \sum_{j=i+1}^{n} \gamma_{j}} \leqslant \gamma + \gamma \log (n+1) \leqslant ( 1 + \log^{-1} 2) \gamma \log(n+1).
    \end{equation}
    Combining this with Proposition \ref{prop:errorDECOM2}, we have
    \begin{equation}
        \begin{aligned}
            \mathbb{E}_{Z^{n} }[\mathcal{E} ( \hat{\eta}_{n+1})] - \mathcal{E}( \eta^{*}) & \leqslant C^{\mathsf{fi}}_{\mathsf{p}1} \gamma_0^{-2\theta} \exp \{ - \lambda \gamma_0 n^{1-\mu} \} n^{-\mu}+ 2\lambda^{ 2 \theta } \left\| g^{*} \right\|_{2}^2 \\
            & \qquad  + C^{\mathsf{fi}}_{\mathsf{p}3} n^{-\mu} \log(n+1),
        \end{aligned}
    \end{equation}
    where
    \[
        \begin{aligned}
            C^{\mathsf{fi}}_{\mathsf{p}1} & = 2 \left\| g^{*} \right\|_{2}^2 \left[ \left( \theta / e \right)^{2 \theta} + ( \kappa_1 \kappa_2)^{4 \theta} \right], \\
            C^{\mathsf{fi}}_{\mathsf{p}3} & = \frac{ \mu}{1 + \mu} (\log^{-1} 2 + 1) \left( 4 \kappa_{2}^2 \left\| \beta^{*} \right\|_{2}^2 + \sigma^2 \right).
        \end{aligned}
    \]
    Finally, when $ \lambda = n^{- \frac{1}{2 \theta +1}} $ and $ \mu = \frac{2\theta}{2 \theta +1} $, we have
    \begin{equation*}
        \mathbb{E}_{Z^{n} }[\mathcal{E} ( \hat{\eta}_{n+1})] - \mathcal{E}( \eta^{*}) \leqslant C^{\mathsf{fi}}_{\mathsf{p}} n^{-\frac{2 \theta}{2 \theta +1}} \log (n+1),
    \end{equation*}
    where $ C^{\mathsf{fi}}_{\mathsf{p}} = 2 \left\| g^{*} \right\|_{2}^2 + \mathrm{e}^{-\gamma_0} \gamma_0^{-2\theta} C^{\mathsf{fi}}_{\mathsf{p1}} + C^{\mathsf{fi}}_{\mathsf{p3}} $.

    When $ 0 < s <1 $, combining (\ref{eq:ConsSeriesInequality}) and Proposition \ref{prop:errorDECOM2capa} yields that
    \begin{equation}
        \begin{aligned}
            \mathbb{E}_{Z^{n} }[\mathcal{E} ( \hat{\eta}_{n+1})] - \mathcal{E}( \eta^{*}) & \leqslant C^{\mathsf{fi}}_{\mathsf{p}1} \gamma_0^{-2\theta} \exp \{ - \lambda \gamma_0 n^{1-\mu} \} n^{-\mu} + 2\lambda^{ 2 \theta } \left\| g^{*} \right\|_{2}^2 \\
            & \qquad + \left( 4 \kappa_{2}^2 \left\| \beta^{*} \right\|_{2}^2 + \sigma^2 \right) n^{-\mu}.
        \end{aligned}
    \end{equation}
    Substituting $ \lambda = n^{- \frac{1}{2 \theta +1}} $ and $ \mu = \frac{2\theta}{2 \theta +1} $ into the above inequality, we have
    \begin{equation*}
        \mathbb{E}_{Z^{n} }[\mathcal{E} ( \hat{\eta}_{n+1})] - \mathcal{E}( \eta^{*}) \leqslant C^{\mathsf{fi}}_{\mathsf{p,c}} n^{-\frac{2 \theta}{2 \theta +1}},
    \end{equation*}
    where $ C^{\mathsf{fi}}_{\mathsf{p,c}} = 2 \left\| g^{*} \right\|_{2}^2 + \mathrm{e}^{-\gamma_0} \gamma_0^{-2\theta} C^{\mathsf{fi}}_{\mathsf{p1}} + 4 \kappa_{2}^2 \left\| \beta^{*} \right\|_{2}^2 + \sigma^2 $. \qed

    \begin{remark}
        The cumulative sample error (3rd term) does not include $ n^{1-\mu} $, which leads to a better convergence rate for the excess prediction error without $ \varepsilon $ included in $ \lambda $ even if for $ \mu \geqslant 1 / 2  $, i.e., $ \theta \geqslant 1 / 2  $.
    \end{remark}

\section{Analysis for the Estimation Error}
\label{sec:proofOfthm34}

    In this section, we bound the estimation error of $ \beta_{k+1} $ in algorithm (\ref{eq:SGDalgorithm}), i.e., $ \mathbb{E}_{Z^{k} } \left\| \beta_{k+1} - \beta^{*} \right\|_{K}^2 $, and derive explicit error rates with specific step-sizes, which proves Theorem \ref{thm:4} and \ref{thm:5}.
    To this end, we first employ Assumption \ref{assum:regularityBeta1} to estimate the approximation errors in Proposition \ref{prop:errorDECOMKNorm1} and \ref{prop:errorDECOMKNormwithCapacity}, respectively.

    \begin{proposition}
        \label{prop:errorDECOMKNorm2}
        Define $ \{ \beta_{k+1}: k \in [n]\} $ by (\ref{eq:SGDalgorithm}) with $ \lambda > 0 $.
        If $ \gamma_{k}(\kappa_1^2 \kappa_2^2+\lambda) \leqslant 1 $ for any $ k \in [n] $, then under Assumption \ref{assum:bounded4thmoment} and Assumption \ref{assum:regularityBeta1} with $0<r\leqslant 1,$ we have
        \begin{equation}
            \label{eq:errorDECOMKNorm2}
            \begin{aligned}
                \mathbb{E}_{Z^{k} } \left\| \beta_{k+1} - \beta^{*} \right\|&_{K}^2 \leqslant  2 \left\| g_{*} \right\|_{2}^2 \frac{ \left( r / e \right)^{2 r} + ( \kappa_1 \kappa_2)^{4 r} }{ \exp \left\{ \lambda \sum_{j=1}^{k} \gamma_{j} \right\} \left( 1 + (  \sum_{j=1}^{k} \gamma_{j} )^{ 2 r}\right)} + 2 \lambda^{ 2 r } \left\| g_{*} \right\|_{2}^2\\
                + &\kappa_1^2 \kappa_2^2 (1+c)  \sum_{i=1}^{k}  \frac{ \gamma_{i}^2   }{ \exp \left\{  \lambda \sum_{j=i+1}^{k} \gamma_{j} \right\} } \max_{1\leqslant i\leqslant k} \left( \sigma^2 + \mathbb{E}_{Z^{i-1} } [ \mathcal{E} (\hat{\eta}_{i}) ] - \mathcal{E}( \eta^{*}) \right).
            \end{aligned}
        \end{equation}
    \end{proposition}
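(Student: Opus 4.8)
The plan is to mirror the proof of Proposition~\ref{prop:errorDECOM2}, now starting from the estimation‑error decomposition in Proposition~\ref{prop:errorDECOMKNorm1}: it remains to control the two deterministic approximation terms $\|\omega_1^k(L_KL_C+\lambda I)L_K^{1/2}f_\lambda\|_K$ and $\|L_K^{1/2}f_\lambda-\beta^*\|_K$ under Assumption~\ref{assum:regularityBeta1}, and then to bound the cumulative sample error by factoring out the uniform bound over the summation index.

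First I would make $f_\lambda$ explicit. Substituting $\beta^*=L_K^{1/2}T_C^r g_*$ from Assumption~\ref{assum:regularityBeta1} into \eqref{eq:flambda} and using $L_K^{1/2}L_CL_K^{1/2}=T_C$ gives $f_\lambda=(\lambda I+T_C)^{-1}T_C^{1+r}g_*$, so that $f_\lambda-T_C^r g_*=-\lambda(\lambda I+T_C)^{-1}T_C^r g_*$. Since $\|L_K^{1/2}h\|_K=\|h\|_2$ for every $h\in\mathcal{L}^2(\mathcal{T})$ and $L_K^{1/2}f_\lambda-\beta^*=L_K^{1/2}(f_\lambda-T_C^r g_*)$, the elementary bound $\|(\lambda I+T_C)^{-1}T_C^r\|\le\lambda^{r-1}$ (valid for $0<r\le1$) yields $\|L_K^{1/2}f_\lambda-\beta^*\|_K\le\lambda^r\|g_*\|_2$, which will contribute the $2\lambda^{2r}\|g_*\|_2^2$ term.

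For the other approximation term I would use the commutation identity $(I-\gamma_j(L_KL_C+\lambda I))L_K^{1/2}=L_K^{1/2}(I-\gamma_j(T_C+\lambda I))$, already exploited inside \eqref{eq:J2diate}, which iterates to $\omega_1^k(L_KL_C+\lambda I)L_K^{1/2}=L_K^{1/2}\omega_1^k(T_C+\lambda I)$. Combining this with $\|L_K^{1/2}h\|_K=\|h\|_2$ and $\|(\lambda I+T_C)^{-1}T_C\|\le1$ gives
\[
\|\omega_1^k(L_KL_C+\lambda I)L_K^{1/2}f_\lambda\|_K=\|\omega_1^k(T_C+\lambda I)(\lambda I+T_C)^{-1}T_C^{1+r}g_*\|_2\le\|\omega_1^k(T_C+\lambda I)T_C^r\|\,\|g_*\|_2 .
\]
Applying Lemma~\ref{lem:prodOperNormBound} with $\mathcal{A}=T_C$ (legitimately, with $C_*=\kappa_1^2\kappa_2^2\ge\|T_C\|$ and $\gamma_k(\kappa_1^2\kappa_2^2+\lambda)\le1$) and $\alpha=r$ then bounds the square of the last quantity by $\|g_*\|_2^2$ times the first fraction appearing in \eqref{eq:errorDECOMKNorm2}. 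Now $(a+b)^2\le2a^2+2b^2$ produces the first two terms of \eqref{eq:errorDECOMKNorm2}, and bounding the cumulative sample error of Proposition~\ref{prop:errorDECOMKNorm1} by pulling $\sigma^2+\mathbb{E}_{Z^{i-1}}[\mathcal{E}(\hat{\eta}_i)]-\mathcal{E}(\eta^*)$ out as $\max_{1\le i\le k}$ gives the third term. There is no genuine obstacle here: the analytic input is entirely carried by Lemma~\ref{lem:prodOperNormBound}, and the only care required is bookkeeping---keeping straight that the relevant operator is $T_C$ (not $T_K$) and that the $\|\cdot\|_K$ norms must be converted to $\|\cdot\|_2$ norms via the isometry of $L_K^{1/2}$ before Lemma~\ref{lem:prodOperNormBound} can be applied.
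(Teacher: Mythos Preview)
Your proposal is correct and follows essentially the same approach as the paper: both compute $f_\lambda=(\lambda I+T_C)^{-1}T_C^{1+r}g_*$ under Assumption~\ref{assum:regularityBeta1}, bound $\|L_K^{1/2}f_\lambda-\beta^*\|_K\le\lambda^r\|g_*\|_2$ via $\|(\lambda I+T_C)^{-1}T_C^r\|\le\lambda^{r-1}$, use the commutation $\omega_1^k(L_KL_C+\lambda I)L_K^{1/2}=L_K^{1/2}\omega_1^k(T_C+\lambda I)$ together with $\|(\lambda I+T_C)^{-1}T_C\|\le1$ to reduce the first approximation term to $\|\omega_1^k(T_C+\lambda I)T_C^r\|\,\|g_*\|_2$, and then invoke Lemma~\ref{lem:prodOperNormBound} with $\alpha=r$ and Proposition~\ref{prop:errorDECOMKNorm1}. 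The only cosmetic difference is that the paper applies $(a+b)^2\le2a^2+2b^2$ implicitly at the end rather than stating it explicitly.
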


    \begin{proof}
        First, recall that $ f_{\lambda} = (\lambda I + T_{C})^{-1} L_K^{1/2} L_{C} \beta^{*} $ in (\ref{eq:flambda}), then under Assumption \ref{assum:regularityBeta1} with $0<r \leqslant 1$ (i.e., $ \beta^{*} = L_{K}^{1 / 2 } T_{C}^{r} g_{*} $), we have
        \begin{equation}
            \label{eq:BoundKNorm2}
            \begin{aligned}
                \left\| L_{K}^{1 / 2 } f_{\lambda} - \beta^{*} \right\|_{K} & = \left\| f_{\lambda} - T_{C}^{r} g_{*} \right\|_{2} = \left\| (\lambda I + T_{C})^{-1} T_{C}^{1+r} g_{*} - T_{C}^{r} g_{*} \right\|_{2} \\
                & = \lambda \left\| (\lambda I + T_{C})^{-1} T_{C}^{r} g_{*} \right\|_{2}  \leqslant \lambda \left\| (\lambda I + T_{C})^{-1} T_{C}^{r} \right\| \left\| g_{*} \right\|_{2} \leqslant  \lambda^{r} \left\| g_{*} \right\|_{2},
            \end{aligned}
        \end{equation}
        where the last inequality above follows from that $ \| \left( \lambda I + T_{C} \right)^{-1} T_{C}^{r} \| \leqslant \lambda^{r-1}   $ for any $ 0 < r \leqslant 1 $.
        Further, it is not hard to derive that
        \begin{equation}
            \label{eq:BoundKNorm1}
            \begin{aligned}
                \left\| \omega_{1}^{k} ( L_{K}L_{C} + \lambda I) L_{K}^{1 / 2 } f_{\lambda} \right\|_{K} & = \left\| L_{K}^{1 / 2 } \omega_{1}^{k} ( T_{C} + \lambda I) f_{\lambda} \right\|_{K} \\
                & = \left\|  \omega_{1}^{k} ( T_{C} + \lambda I) (\lambda I + T_{C})^{-1} T_{C}^{1+r} g_{*} \right\|_{2} \\
                & \leqslant \left\| \omega_{1}^{k} ( T_{C} + \lambda I) T_{C}^{r} \right\| \left\| g_{*} \right\|_{2},
            \end{aligned}
        \end{equation}
        where the last inequality follows from $ (\lambda I + T_{C})^{-1} T_{C}^{r} = T_{C}^{r} (\lambda I + T_{C})^{-1} $ and $ \| (\lambda I + T_{C})^{-1} T_{C} \| \leqslant 1 $.
        We complete the proof by combining (\ref{eq:BoundKNorm2}) and (\ref{eq:BoundKNorm1}) with Proposition \ref{prop:errorDECOMKNorm1} and applying Lemma \ref{lem:prodOperNormBound} to $ \| \omega_{1}^{k} ( T_{C} + \lambda I) T_{C}^{r} \|^2 $.
    \end{proof}

    Analogously, we can obtain the following result for the estimation error under the additional Assumption \ref{assum:capacity}, which is induced by Proposition \ref{prop:errorDECOMKNormwithCapacity}.

    \begin{proposition}
        \label{prop:errorKNormwithCapacity}
        Define $ \{ \beta_{k+1}: k \in [n]\} $ by (\ref{eq:SGDalgorithm}) with $ \lambda > 0 $.
        If $ \gamma_{k}(\kappa_1^2 \kappa_2^2+\lambda) \leqslant 1 $ for any $ k \in [n] $, then under Assumption \ref{assum:bounded4thmoment}, Assumption \ref{assum:capacity} with $ 0 < s < 1 $ and Assumption \ref{assum:regularityBeta1} with $ 0 < r \leqslant 1 $ we have
        \begin{equation}
            \label{eq:errorKNormwithCapacity}
            \begin{aligned}
                \mathbb{E}_{Z^{k} }&\left\| \beta_{k+1} - \beta^{*} \right\|_{K}^2 \leqslant  2 \left\| g_{*} \right\|_{2}^2 \frac{ \left( r / e \right)^{2 r} + ( \kappa_1 \kappa_2)^{4 r} }{ \exp \left\{ \lambda \sum_{j=1}^{k} \gamma_{j} \right\} \left( 1 + (  \sum_{j=1}^{k} \gamma_{j} )^{ 2 r}\right)} + 2 \lambda^{ 2 r } \left\| g_{*} \right\|_{2}^2 + \sqrt{c} \operatorname{Tr} (T_{C}^{s} ) \cdot \\
                & \max_{1\leqslant i\leqslant k} \left[ \sigma^2 + \sqrt{c} \left( \mathbb{E}_{Z^{i-1}} \mathcal{E} (\hat{\eta}_{i}) - \mathcal{E}( \eta^{*}) \right) \right] \sum_{i=1}^{k}\frac{ \gamma_{i}^2 \left[ \left( \frac{1-s}{2\mathrm{e}} \right)^{1-s} + (\kappa_1 \kappa_2)^{2(1-s)}   \right]   }{ \exp \left\{  \lambda \sum_{j=i+1}^{k} \gamma_{j} \right\} \left( 1 + (\sum_{j=i+1}^{k} \gamma_{j}) ^{1-s} \right) }.
            \end{aligned}
        \end{equation}
    \end{proposition}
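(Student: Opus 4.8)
The plan is to mirror the proof of Proposition \ref{prop:errorDECOMKNorm2}, but starting from the refined decomposition of Proposition \ref{prop:errorDECOMKNormwithCapacity} instead of Proposition \ref{prop:errorDECOMKNorm1}. That decomposition already isolates the cumulative sample error in the target form $\sqrt{c}\,\operatorname{Tr}(T_C^s)\sum_{i=1}^k \gamma_i^2\bigl[(\tfrac{1-s}{2e})^{1-s}+(\kappa_1\kappa_2)^{2(1-s)}\bigr]\big/\bigl(\exp\{\lambda\sum_{j=i+1}^k\gamma_j\}\,(1+(\sum_{j=i+1}^k\gamma_j)^{1-s})\bigr)$ multiplied by the per-step quantity $\sigma^2+\sqrt{c}\,(\mathbb{E}_{Z^{i-1}}\mathcal{E}(\hat\eta_i)-\mathcal{E}(\eta^*))$, so the only work left is (i) to replace the two deterministic approximation-error terms by explicit bounds in $\lambda$ and $\sum_{j=1}^k\gamma_j$ under Assumption \ref{assum:regularityBeta1}, and (ii) to pull the supremum over $i$ out of the sample-error sum, which is legitimate since every coefficient in that sum is nonnegative.

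For step (i) I would recall $f_\lambda=(\lambda I+T_C)^{-1}L_K^{1/2}L_C\beta^*$ from (\ref{eq:flambda}) and substitute $\beta^*=L_K^{1/2}T_C^r g_*$ from Assumption \ref{assum:regularityBeta1}, exactly as in the proof of Proposition \ref{prop:errorDECOMKNorm2}: using $L_K^{1/2}L_C L_K^{1/2}=T_C$ this gives $f_\lambda=(\lambda I+T_C)^{-1}T_C^{1+r}g_*$, hence $\|L_K^{1/2}f_\lambda-\beta^*\|_K=\lambda\|(\lambda I+T_C)^{-1}T_C^r g_*\|_2\le\lambda^{r}\|g_*\|_2$ by the spectral bound $\|(\lambda I+T_C)^{-1}T_C^r\|\le\lambda^{r-1}$ valid for $0<r\le1$. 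After the conjugation identity $\omega_1^k(L_KL_C+\lambda I)L_K^{1/2}=L_K^{1/2}\omega_1^k(T_C+\lambda I)$ and $\|(\lambda I+T_C)^{-1}T_C\|\le1$, I likewise get $\|\omega_1^k(L_KL_C+\lambda I)L_K^{1/2}f_\lambda\|_K\le\|\omega_1^k(T_C+\lambda I)T_C^r\|\,\|g_*\|_2$, and Lemma \ref{lem:prodOperNormBound} applied to the compact positive operator $T_C$ (with $\|T_C\|\le\kappa_1^2\kappa_2^2$ and exponent $\alpha=r$) bounds $\|\omega_1^k(T_C+\lambda I)T_C^r\|^2$ by $\bigl[(r/e)^{2r}+(\kappa_1\kappa_2)^{4r}\bigr]\big/\bigl(\exp\{\lambda\sum_{j=1}^k\gamma_j\}\,(1+(\sum_{j=1}^k\gamma_j)^{2r})\bigr)$.

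Then I would insert these two estimates into the decomposition of Proposition \ref{prop:errorDECOMKNormwithCapacity}, apply $(a+b)^2\le 2a^2+2b^2$ to split the approximation-error square into the advertised $2\|g_*\|_2^2\bigl[(r/e)^{2r}+(\kappa_1\kappa_2)^{4r}\bigr]\big/\bigl(\exp\{\lambda\sum_{j=1}^k\gamma_j\}(1+(\sum_{j=1}^k\gamma_j)^{2r})\bigr)$ and $2\lambda^{2r}\|g_*\|_2^2$, and bound the remaining sum by $\max_{1\le i\le k}\bigl[\sigma^2+\sqrt{c}\,(\mathbb{E}_{Z^{i-1}}\mathcal{E}(\hat\eta_i)-\mathcal{E}(\eta^*))\bigr]$ times the step-size series above. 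This yields exactly (\ref{eq:errorKNormwithCapacity}).

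The argument is essentially bookkeeping and I do not expect a genuine obstacle; the only care needed is with the operator-conjugation identities (the similarity between $L_KL_C$ and $T_C=L_K^{1/2}L_CL_K^{1/2}$, and the commutation of $(\lambda I+T_C)^{-1}$ with powers of $T_C$) and with the restriction $0<r\le1$, which is precisely what makes $\|(\lambda I+T_C)^{-1}T_C^r\|\le\lambda^{r-1}$ valid. The one structural difference from Proposition \ref{prop:errorDECOMKNorm2} — namely that the capacity version replaces the crude bound $\|\omega_{i+1}^k(T_C+\lambda I)\|\le\exp\{-\lambda\sum_{j=i+1}^k\gamma_j\}$ by a sharper $T_C^{(1-s)/2}$-weighted estimate — has already been carried out inside Proposition \ref{prop:errorDECOMKNormwithCapacity}, so nothing new is required on that front.
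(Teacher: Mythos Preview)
Your proposal is correct and matches the paper's approach exactly: the paper states Proposition~\ref{prop:errorKNormwithCapacity} without a separate proof, noting only that it follows ``analogously'' from Proposition~\ref{prop:errorDECOMKNormwithCapacity}, and your outline spells out precisely that analogy---namely, reusing the approximation-error bounds (\ref{eq:BoundKNorm2}) and (\ref{eq:BoundKNorm1}) from the proof of Proposition~\ref{prop:errorDECOMKNorm2}, applying Lemma~\ref{lem:prodOperNormBound} with $\alpha=r$, splitting via $(a+b)^2\le 2a^2+2b^2$, and pulling the $\max_i$ out of the nonnegative sample-error sum. There is nothing missing and no divergence in method.
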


    \noindent \textbf{Proof of Theorem \ref{thm:4}.} We prove Theorem \ref{thm:4} with polynomially decaying step-size in two cases including $ s = 1 $ and $ 0<s < 1 $ in Assumption \ref{assum:capacity}.

    When $ s = 1 $, recall that $ \gamma_{i} = \gamma_1 i^{-\mu} $, $ i \in [k] $ with $ \gamma_1 >0 $ and $ \mu=\frac{2r+1}{2r+2} $.
    Assumption \ref{assum:regularityBeta1} with $ r>0 $ implies that $ \mu > 1 / 2  $ and Lemma \ref{lem:seriesgamma2} provides an upper bound for the series in Proposition \ref{prop:errorDECOMKNorm2},
    \[
        \sum_{i=1}^{k} \gamma_{i}^2 \exp \left\{ - \lambda \sum_{j=i+1}^{k} \gamma_{j} \right\} \leqslant \widetilde{C}_{\mu}\gamma_1 \left( \exp \left\{ - \lambda \gamma_1 d_{\mu} (k+1)^{1-\mu} \right\} + k^{1-2\mu} \right).
    \]
    Combining the assumption $ \gamma_1 \leqslant \left[ 4 C_{\mu} (1+c) ( 1 + \kappa_1^2 \kappa_2^2 )^2 ( \log 2 + \min \{ \mu, 1-\mu \}^{-1} ) \right]^{-1} $ with Proposition \ref{prop:UniformBounds} and Corollary \ref{coro:seriesConstantBound}, we obtain $ \mathbb{E}_{Z^{k} }[\mathcal{E} ( \hat{\eta}_{k+1})]- \mathcal{E}( \eta^{*}) \leqslant 8 \kappa_{2}^2 \left\| \beta^{*} \right\|_{2}^2 + \sigma^2 $ for any $ k \in [n] $.
    Therefore, we apply Proposition \ref{prop:errorDECOMKNorm2} to obtain
    \begin{equation}
        \begin{aligned}
            \mathbb{E}_{Z^{k} }\left\| \beta_{k+1} - \beta^{*} \right\|_{K}^2 & \leqslant C^{\mathsf{ol}}_{\mathsf{e}1} \gamma_1^{-2r} \exp \left\{ -  \lambda \gamma_1 d_{\mu}  k^{1-\mu} \right\} k ^{-2 r ( 1-\mu )} + 2 \lambda^{ 2 r } \left\| g_{*} \right\|_{2}^2 \\
            & \quad + C^{\mathsf{ol}}_{\mathsf{e}3} \left( \exp \left\{ - \lambda \gamma_1 d_{\mu}  k^{1-\mu} \right\} + k^{1-2\mu} \right),
        \end{aligned}
    \end{equation}
    where
    \[
        \begin{aligned}
            C^{\mathsf{ol}}_{\mathsf{e}1} & = 2 d_{\mu}^{-2r} \left\| g_{*} \right\|_{2}^2 \left[ \left( r / e \right)^{2 r} + ( \kappa_1 \kappa_2)^{4 r} \right], \\
            C^{\mathsf{ol}}_{\mathsf{e}3} & = \widetilde{C}_{\mu} ( 4 \kappa_{2}^2 \left\| \beta^{*} \right\|_{2}^2 + \sigma^2 ) \left[ 2C_{\mu}(1+\kappa_1^2 \kappa_2^2) \left( \log 2 + (1-\mu)^{-1} \right) \right]^{-1}.
        \end{aligned}
    \]
    Finally, for any $ 0 < \varepsilon <  r / ( r + 1) $, it can be derived by choosing $ k=n $ and $ \lambda =  n^{-\frac{1}{2 r + 2} + \frac{\varepsilon}{2 r}} $ that
    \begin{equation*}
        \mathbb{E}_{Z^{n} }\left\| \beta_{n+1} - \beta^{*} \right\|_{K}^2 \lesssim   n^{-\frac{ r}{ r + 1} + \varepsilon} .
    \end{equation*}
    The proof is completed for the case $ s = 1 $.

    When $ 0<s < 1 $, recall that Lemma \ref{lem:seriesgamma2} with $ \nu=1-s $ provides a bound for the series in Proposition \ref{prop:errorKNormwithCapacity}, which is divided into two cases consisting of $ \mu < 1 /2  $ and $ \mu \geqslant 1 /2 $.
    In addition, we can combine Proposition \ref{prop:UniformBounds} and Corollary \ref{coro:seriesConstantBound} with the assumption $ \gamma_1 \leqslant \left[ 4 C_{\mu} (1+c) ( 1 + \kappa_1^2 \kappa_2^2 )^2 ( \log 2 + \min \{ \mu, 1-\mu \}^{-1} ) \right]^{-1} $ to guarantee the uniform bounds that $ \mathbb{E}_{Z^{k} }[\mathcal{E} ( \hat{\eta}_{k+1})]- \mathcal{E}( \eta^{*}) \leqslant 8 \kappa_{2}^2 \left\| \beta^{*} \right\|_{2}^2 + \sigma^2 $ for any $ k \in [n] $.
    Hence, explicit error rates can be derived by substituting these results into Proposition \ref{prop:errorKNormwithCapacity}.

    When $ 2r +s < 1 $, we have $ \mu < \frac{1}{2} $ from $ \mu = \frac{2 r + s}{ 2 r + s+ 1} $ and
    \begin{equation}
        \begin{aligned}
            \mathbb{E}_{Z^{k}} \left\| \beta_{k+1} - \beta^{*} \right\|_{K}^2 \leqslant C&^{\mathsf{ol}}_{\mathsf{e}1} \gamma_1^{-2r} \exp \left\{ -  \lambda \gamma_1 d_{\mu}  k^{1-\mu} \right\} k ^{-2 r ( 1-\mu )} + 2 \lambda^{ 2 r } \left\| g_{*} \right\|_{2}^2 \\
            &+ C^{\mathsf{ol}}_{\mathsf{e}3,\mathsf{c}} (k+1)^{-\mu + s(1-\mu)},
        \end{aligned}
    \end{equation}
    where $ C^{\mathsf{ol}}_{\mathsf{e}3,\mathsf{c}} = \sqrt{c} \hat{C}_{\mu} \operatorname{Tr} (T_{C}^{s} ) \left[ \left( \frac{1-s}{2\mathrm{e}} \right)^{1-s} + (\kappa_1 \kappa_2)^{2(1-s)} \right] \left( 8 \sqrt{c} \kappa_{2}^2 \left\| \beta^{*} \right\|_{2}^2 + (\sqrt{c} +1)\sigma^2 \right) $.
    Further, by setting $ k = n $ and $ \lambda =n^{-\frac{1}{2 r +s+ 1}} $, we have
    \begin{equation*}
        \mathbb{E}_{Z^{n} }\left\| \beta_{n+1} - \beta^{*} \right\|_{K}^2 \lesssim n^{-\frac{2 r}{2 r + s+ 1} }.
    \end{equation*}
    \indent When $ 2r + s \geqslant 1 $, we have $ \mu \geqslant  \frac{1}{2} $ and
    \begin{equation}
        \begin{aligned}
            \mathbb{E}_{Z^{k}} \left\| \beta_{k+1} - \beta^{*} \right\|&_{K}^2 \leqslant C^{\mathsf{ol}}_{\mathsf{e}1} \gamma_1^{-2r} \exp \left\{ -  \lambda \gamma_1 d_{\mu}  k^{1-\mu} \right\} k ^{-2 r ( 1-\mu )} + 2 \lambda^{ 2 r } \left\| g_{*} \right\|_{2}^2 \\
            + C^{\mathsf{ol}}_{\mathsf{e}3,\mathsf{c}}&[ (k+1)^{-\mu + s(1-\mu)} +\exp \left\{ - \lambda \gamma_1 d_{\mu} k^{1-\mu} \right\}k^{-(1-s)(1-\mu) } \log(k+1) ],
        \end{aligned}
    \end{equation}
    For any $ 0 < \varepsilon < 2 r / (2 r + s + 1) $, it can be derived by choosing $ k = n $ and $ \lambda =  n^{-\frac{1}{2 r +s+ 1} + \frac{\varepsilon}{2 r}} $ that
    \begin{equation*}
        \mathbb{E}_{Z^{n} }\left\| \beta_{n+1} - \beta^{*} \right\|_{K}^2 \lesssim n^{-\frac{2 r}{2 r + s+ 1} + \varepsilon}.
    \end{equation*}
    The proof of Theorem \ref{thm:4} is completed. \qed

    \noindent \textbf{Proof of Theorem \ref{thm:5}.}
    First recall that the step-sizes satisfy $ \gamma_{k} = \gamma = \gamma_{0} n^{-\mu} $ with $ 0 < \mu < 1 $ and $ 0 < \gamma_0 \leqslant [ 2 (1+c) ( 1 + \kappa_1^2 \kappa_2^2 )^2 (1 + \mu^{-1}) ]^{-1} $ for any $ k \in [n] $, then we have $ \gamma \leqslant 1 $ and $ \sum_{j=1}^{n} \gamma_{j} = n \gamma = \gamma_0 n^{1-\mu} $.
    Moreover, it follows from (\ref{eq:ConsSeriesInequalityFinal}) and Proposition \ref{prop:UniformBounds} that $ \mathbb{E}_{Z^{k} }[\mathcal{E} ( \hat{\eta}_{k+1})]- \mathcal{E}( \eta^{*}) \leqslant 8 \kappa_{2}^2 \left\| \beta^{*} \right\|_{2}^2 + \sigma^2 $ for any $ k \in [n] $.
    To bound the series in Proposition \ref{prop:errorDECOMKNorm2} and \ref{prop:errorKNormwithCapacity}, we have that for any $ 0 < s \leqslant 1 $,
    \begin{equation}
        \label{eq:ConsSeriesEqualityKnorm}
        \begin{aligned}
            \sum_{i=1}^{n}  \frac{ \gamma_{i}^2 \exp \left\{ - \lambda \sum_{j=i+1}^{n} \gamma_{j} \right\}}{  1 + \left( \sum_{j=i+1}^{n} \gamma_{j} \right)^{1-s} } & = \gamma^2 \sum_{i=1}^{n} \frac{ \exp \left\{ - \lambda \gamma (n-i) \right\} }{  1 + [\gamma (n-i)]^{1-s} } \leqslant \gamma^2 + \gamma^{1+s} \sum_{i=1}^{n-1} \frac{ \mathrm{e}^{- \lambda \gamma i} }{  1 + i^{1-s} }.
        \end{aligned}
    \end{equation}
    Since
    \[
        \begin{aligned}
            \sum_{i=1}^{n-1} \frac{ \mathrm{e}^{- \lambda \gamma i} }{  1 + i^{1-s} } & \leqslant \int_{0}^{n-1}  \frac{ 1 }{  1 + x^{1-s} } \mathrm{d}x \leqslant 1 + \int_{1}^{n-1}  \frac{ 1 }{  1 + x^{1-s} } \mathrm{d}x \\
            & \leqslant 1 + \int_{1}^{n-1} x^{s-1} \mathrm{d}x = 1 + \frac{1}{s} [ (n-1)^{s} -1 ] \leqslant s^{-1} n^{s},
        \end{aligned}
    \]
    combining the inequality above with (\ref{eq:ConsSeriesEqualityKnorm}) yields
    \begin{equation}
        \sum_{i=1}^{n}  \frac{ \gamma_{i}^2 \exp \left\{ - \lambda \sum_{j=i+1}^{n} \gamma_{j} \right\}}{  1 + \left( \sum_{j=i+1}^{n} \gamma_{j} \right)^{1-s} } \leqslant \frac{\gamma_0}{s} (n^{-\mu} + n^{-\mu(1+s) +s } ) \leqslant \frac{ 2 \gamma_0}{s} n^{-\mu + s(1-\mu)}.
    \end{equation}

    When $ s = 1 $, substituting these estimates into Proposition \ref{prop:errorDECOMKNorm2} yields that
    \[
        \mathbb{E}_{Z^{n} }\left\| \beta_{n+1} - \beta^{*} \right\|_{K}^2 \leqslant C^{\mathsf{fi}}_{\mathsf{e}1} \gamma_0^{-2r} \exp \{ - \lambda \gamma_0 n^{ 1-\mu} \} n^{2r(1-\mu)} + 2 \lambda^{ 2 r } \left\| g_{*} \right\|_{2}^2 + C^{\mathsf{fi}}_{\mathsf{e}3} n^{1-2\mu},
    \]
    where $ C^{\mathsf{fi}}_{\mathsf{e}1} = 2 \left\| g_{*} \right\|_{2}^2 \left[ \left( r / e \right)^{2 r} + ( \kappa_1 \kappa_2)^{4 r} \right] $ and  $ C^{\mathsf{fi}}_{\mathsf{e}3} = ( 8 \kappa_{2}^2 \left\| \beta^{*} \right\|_{2}^2 + 2\sigma^2 ) [ (1 + \kappa_1^2 \kappa_2^2) ( 1+ \mu^{-1} ) ]^{-1} $.

    When $ s<1 $, we can also obtain from Proposition \ref{prop:errorKNormwithCapacity} that
    \[
        \begin{aligned}
            \mathbb{E}_{Z^{n} }\left\| \beta_{n+1} - \beta^{*} \right\|_{K}^2 & \leqslant C^{\mathsf{fi}}_{\mathsf{e}1} \gamma_0^{-2r} \exp \{ - \lambda \gamma_0 n^{ 1-\mu} \} n^{2r(1-\mu)} + 2 \lambda^{ 2 r } \left\| g_{*} \right\|_{2}^2 + C^{\mathsf{fi}}_{\mathsf{e}3,\mathsf{c}} n^{-\mu + s(1-\mu)},
        \end{aligned}
    \]
    where $ C^{\mathsf{fi}}_{\mathsf{e}3,\mathsf{c}} = 2s^{-1} \operatorname{Tr} (T_{C}^{s} ) (c + \sqrt{c} ) ( 8 \kappa_{2}^2 \left\| \beta^{*} \right\|_{2}^2 + 2\sigma^2 ) $ $ \cdot \left[ \left( \frac{1-s}{2\mathrm{e}} \right)^{1-s} + (\kappa_1 \kappa_2)^{2(1-s)} \right] $.

    To conclude, the proof is completed with $ \mu = \frac{2 r + s}{ 2 r + s + 1} $, $ \lambda =  n^{-\frac{1}{2 r + s + 1}  } $ and $ C^{\mathsf{fi}}_{\mathsf{e}} = 2 \left\| g_{*} \right\|_{2}^2 + \mathrm{e}^{-\gamma_0} \gamma_0^{-2r} C^{\mathsf{fi}}_{\mathsf{e}1} + \max \{ C^{\mathsf{fi}}_{\mathsf{e}3}, C^{\mathsf{fi}}_{\mathsf{e}3,\mathsf{c}} \} $.\qed

\bibliography{wpref}
\bibliographystyle{abbrv}

\section{Appendix}
\label{sec:Appendix}
    This section includes some detailed proofs of lemmas and propositions used in the theoretical analysis of online regularized algorithm (\ref{eq:SGDalgorithm}) for functional linear model (\ref{eq:linearmodel}).

\subsection{Proof of Lemma \ref{lem:betalambda}}

    In this subsection, we shall prove Lemma \ref{lem:betalambda}.
    Since the optimization problem (\ref{eq:betalambda}) is strictly convex, the stationary point is the unique solution. A direct derivation with respect to $ \beta $ yields that
    \begin{equation*}
        \begin{aligned}
            &2 \mathbb{E}_{(X,Y)} \left( \left< \beta,X \right>_{2} - Y \right) \int_{\mathcal{T}} K(s,\cdot) X(s) ds + 2 \lambda \beta \\
            =& 2 \mathbb{E}_{X} \left( \left< \beta - \beta^{*},X \right>_{2} L_{K} X \right) + 2 \lambda \beta \\
            =& 2 L_{K} L_{C} ( \beta - \beta^{*} ) + 2 \lambda \beta = 0.
        \end{aligned}
    \end{equation*}
    Then, we have $  (\lambda I + L_{K} L_{C}) \beta_{\lambda} = L_{K} L_{C} \beta^{*} $.
    By left multiplying this equality with $ L_{C}^{1 / 2}  $ and combining the definition of $ T_{K} $ in (\ref{eq:LscrMscr}), we obtain that $ (\lambda I + T_{K}) L_{C}^{1 / 2 } \beta_{\lambda} =  T_{K} L_{C}^{1 / 2 } \beta^{*} $, which completes the proof.

\subsection{Proof of Lemma \ref{lem:prodOperNormBound}}

    In this subsection, we shall prove Lemma \ref{lem:prodOperNormBound}.
    Define the polynomial $ \tau(x) = x^{\alpha} \prod_{j=t}^{k} [1 - \gamma_{j}(x + \lambda)] $ on $ 0\leqslant x \leqslant C_{*} $.
    Since $ \gamma_{j}(C_{*}+\lambda) \leqslant 1 $ for any $ j \in [t,k] $, it follows that
    \begin{equation}
        \label{eq:tauxbound1}
        0 \leqslant \tau(x) \leqslant x^{\alpha} \prod_{j=t}^{k} ( 1 - \gamma_{j} \lambda ) \leqslant C_{*}^{\alpha}  \exp \left\{ - \lambda \sum_{j=t}^{k} \gamma_{j}  \right\},
    \end{equation}
    where we apply $ 1 - x \leqslant e^{-x} $ in the last inequality.
    On the other hand,
    \begin{equation}
        \label{eq:tauxbound2}
        \begin{aligned}
            \tau(x) &\leqslant  x^{\alpha} \exp \left\{ - (x + \lambda) \sum_{j=t}^{k} \gamma_{j} \right\} = x^{\alpha} \exp \left\{ - x \sum_{j=t}^{k} \gamma_{j} \right\} \cdot \exp \left\{ - \lambda \sum_{j=t}^{k} \gamma_{j} \right\} \\
            &\leqslant  \left( \frac{\alpha}{e} \right)^{\alpha} \left( \sum_{j=t}^{k} \gamma_{j} \right)^{-\alpha} \exp \left\{ - \lambda \sum_{j=t}^{k} \gamma_{j} \right\},
        \end{aligned}
    \end{equation}
    where the last step follows from that $ \max\limits_{x \geqslant 0} x^{\alpha} \mathrm{e}^{-Ax} = (\alpha / e)^{\alpha} A^{-\alpha}  $.

    Consequently, combining (\ref{eq:tauxbound1}) and (\ref{eq:tauxbound2}) implies that
    \begin{equation}
        \begin{aligned}
            \label{eq:tauxbound}
            \tau^2(x) \leqslant & \exp \left\{ - 2 \lambda \sum_{j=t}^{k} \gamma_{j} \right\}  \min \left\{ C_{*}^{2 \alpha} ,\left( \alpha / e \right)^{2 \alpha} \left( \sum_{j=t}^{k} \gamma_{j} \right)^{- 2 \alpha} \right\} \\
            \leqslant & \exp \left\{ -  \lambda \sum_{j=t}^{k} \gamma_{j} \right\} \frac{ \left( \alpha / e \right)^{2 \alpha} + C_{*}^{2 \alpha} }{ 1 + \left(  \sum_{j=t}^{k} \gamma_{j} \right)^{ 2 \alpha}},
        \end{aligned}
    \end{equation}
    where we apply the inequality $ \min \{ a, bc\} \leqslant \frac{ac}{1+c} + \frac{bc}{1+c} $ for any $ a,b,c>0 $ in the last step.
    Since $ \mathcal{A} $ is compact and positive, the proof is completed by combining (\ref{eq:tauxbound}) with the spectral theorem and the definition of $ \omega_{t}^{k} (\mathcal{A}+ \lambda I) $.

\subsection{Proof of Proposition \ref{prop:UniformBounds}}

    In this subsection, we shall prove Proposition \ref{prop:UniformBounds} by mathematical induction. Recall that $ \beta_1 = 0 $, then if follows from (\ref{eq:DECOMExcessRisk}) that
    \[
        \mathcal{E} ( \hat{\eta}_{1}) - \mathcal{E}( \eta^{*}) = \left\| L_{C}^{1 / 2 } \beta^{*} \right\|_{2}^2 \leqslant \left\| L_{C}^{1 / 2 }  \right\|^2 \left\| \beta^{*} \right\|_{2}^2 \leqslant \kappa_2^2 \left\| \beta^{*} \right\|_{2}^2,
    \]
    which implies that (\ref{eq:UniformBounds}) holds true for $ k=1 $.
    Now assume that (\ref{eq:UniformBounds}) holds true for any $ k = 2,3,\cdots ,\ell $.
    To advance the induction, we need to estimate $ \mathbb{E}_{Z^{\ell} }[\mathcal{E} ( \hat{\eta}_{\ell+1})] - \mathcal{E}( \eta^{*}) $.
    In fact, it can be bounded by Proposition \ref{prop:errorDECOM} with $ k = \ell $ as
    \begin{equation}
        \label{eq:BoundErrorInduction}
        \left( \left\| \omega_{1}^{\ell} ( T_{K} + \lambda I) L_{C}^{1 / 2 } \beta_{\lambda} \right\|_{2} +\left\| L_{C}^{1 / 2 }  ( \beta_{\lambda} - \beta^{*}) \right\|_{2}  \right)^2+ \frac{1}{2} \max_{1\leqslant j\leqslant \ell} \left( \sigma^2 + \mathbb{E}_{Z^{i-1} } [ \mathcal{E} (\hat{\eta}_{i}) ] - \mathcal{E}( \eta^{*}) \right).
    \end{equation}
    Recall that $ L_{C}^{1 / 2 } \beta_{\lambda} = (\lambda I + T_{K})^{-1} T_{K} L_{C}^{1 / 2 } \beta^{*} $ in (\ref{eq:betalambdaExpicit}), then we have
    \[
        \begin{aligned}
            \left\| \omega_{1}^{\ell} ( T_{K} + \lambda I) L_{C}^{1 / 2 } \beta_{\lambda} \right\|_{2} & \leqslant \left\| \omega_{1}^{\ell} ( T_{K} + \lambda I) \right\| \left\| ( \lambda I + T_{K} )^{-1} T_{K}  \right\| \left\|  L_{C}^{1 / 2 }  \right\| \left\| \beta^{*} \right\|_{2} \\
            &\leqslant \kappa_2 \left\| \beta^{*} \right\|_{2},
        \end{aligned}
    \]
    where the last step is due to $ \| \omega_{1}^{\ell} ( T_{K} + \lambda I) \| \leqslant \prod_{j=1}^{\ell} ( 1 - \gamma_{j} \lambda ) \leqslant 1 $ and $ \left\| ( \lambda I + T_{K} )^{-1} T_{K}  \right\| \leqslant 1 $.
    Analogously, applying (\ref{eq:DECOMbetaLambdaStar}) yields that
    \[
        \left\| L_{C}^{1 / 2 }  ( \beta_{\lambda} - \beta^{*}) \right\|_{2} = \lambda \left\| \left( \lambda I + T_{K} \right)^{-1} L_{C}^{1 / 2 } \beta^{*}  \right\|_{2} \leqslant \lambda \left\| \left( \lambda I + T_{K} \right)^{-1} \right\| \left\| L_{C}^{1 / 2 } \right\| \left\|  \beta^{*} \right\|_{2} \leqslant \kappa_2 \left\| \beta^{*} \right\|_{2}.
    \]
    Finally, it verifies that
    \[
        \mathbb{E}_{Z^{\ell} }[\mathcal{E} ( \hat{\eta}_{\ell+1})] - \mathcal{E}( \eta^{*}) \leqslant 8 \kappa_{2}^2 \left\| \beta^{*} \right\|_{2}^2 + \sigma^2,
    \]
    by putting all estimates above and the induction assumption into (\ref{eq:BoundErrorInduction}).
    The proof is completed.

\subsection{Proof of Lemma \ref{lem:seriesgamma2}}

    In this subsection, we shall prove Lemma \ref{lem:seriesgamma2} according to different ranges of $ \mu $ and $ \nu $.
    To this end, first we denote by $ \mathcal{S}^{\mathsf{ol}} $ the series in (\ref{eq:seriesBound}), which can decomposed as
    \begin{equation}
        \label{eq:prop31series}
        \begin{aligned}
            \mathcal{S}^{\mathsf{ol}} &= \gamma_1^2 k^{-2\mu} + \gamma_{1}^2 \sum_{i=1}^{k-1} \frac{ i^{-2\mu}  \exp \left\{ - \lambda \sum_{j=i+1}^{k} \gamma_{j} \right\}}{  1 + \left( \sum_{j=i+1}^{k} \gamma_{j} \right)^{\nu} }.
        \end{aligned}
    \end{equation}
    Since $ \gamma_{i} = \gamma_1 i^{-\mu} $ with $ 0 < \mu <1 $ for any $ i \in [k-1] $, we have
    \begin{equation*}
        \sum_{j=i+1}^{k} \gamma_{j} = \gamma_1 \sum_{j=i+1}^{k} j^{-\mu} \geqslant \frac{\gamma_1}{1-\mu} \left[ ( k+1)^{1-\mu} - (i +1)^{1-\mu}  \right],
    \end{equation*}
    and
    \[
        \frac{ \exp \left\{ - \lambda \sum_{j=i+1}^{k} \gamma_{j} \right\}}{  1 + \left( \sum_{j=i+1}^{k} \gamma_{j} \right)^{\nu} } \leqslant \frac{ \exp \left\{ - \frac{\lambda \gamma_1}{1-\mu} \left[ ( k+1)^{1-\mu} - (i +1)^{1-\mu}  \right] \right\} }{ 1 +  \left( \frac{\gamma_1}{1-\mu} \right)^{\nu}  \left[ ( k+1)^{1-\mu} - (i +1)^{1-\mu}  \right]^{\nu} }.
    \]
    The series on the right-hand side of (\ref{eq:prop31series}) can be decomposed as
    \begin{equation}
        \label{eq:olSeriesDECOM}
        \begin{aligned}
            \sum_{i=1}^{k-1} \frac{ i^{-2\mu}  \exp \left\{ - \lambda \sum_{j=i+1}^{k} \gamma_{j} \right\}}{  1 + \left( \sum_{j=i+1}^{k} \gamma_{j} \right)^{\nu} } & \leqslant  \sum_{i\leqslant \frac{k-1}{2}} \frac{ i^{-2\mu} \exp \left\{ -  \frac{\lambda \gamma_1}{1-\mu} \left[ ( k+1)^{1-\mu} - (i +1)^{1-\mu}  \right] \right\} }{ 1 +  \left( \frac{\gamma_1}{1-\mu} \right)^{\nu}  \left[ ( k+1)^{1-\mu} - (i +1)^{1-\mu}  \right]^{\nu} } \\
            & \quad + \sum_{i>\frac{k-1}{2}}^{k-1} \frac{ i^{-2\mu} }{ 1 +  \left( \frac{\gamma_1}{1-\mu} \right)^{\nu}  \left[ ( k+1)^{1-\mu} - (i +1)^{1-\mu}  \right]^{\nu} } \\
            & =: \mathcal{S}_1^{\mathsf{ol}} + \mathcal{S}_2^{\mathsf{ol}}.
        \end{aligned}
    \end{equation}
    In the following, we bound $ \mathcal{S}_1^{\mathsf{ol}} $ and $ \mathcal{S}_2^{\mathsf{ol}} $ for the three cases in (\ref{eq:seriesBound}), respectively.

    First, when $ i \leqslant \frac{k-1}{2} $, we have $ ( k+1)^{1-\mu} - (i +1)^{1-\mu} \geqslant (1-2^{\mu-1} ) (k+1)^{1-\mu} $.
    Therefore,
    \begin{equation*}
        \mathcal{S}_1^{\mathsf{ol}} \leqslant \frac{ \exp \left\{ - \lambda \gamma_1 d_{\mu} (k+1)^{1-\mu} \right\} }{ 1 + \gamma_1^{\nu} d_{\mu}^{\nu} (k+1)^{\nu(1-\mu)} } \sum_{i\leqslant \frac{k-1}{2}} i^{-2\mu}.
    \end{equation*}
    Note that
    \[
        \sum_{i\leqslant \frac{k-1}{2}} i^{-2\mu} \leqslant \int_{0}^{\frac{k-1}{2}} x^{-2\mu} dx \leqslant 1 + \int_{1}^{\frac{k-1}{2}} x^{-2\mu} dx \leqslant
        \begin{cases}
            \frac{(k+1)^{1-2\mu}}{1-2\mu} , & \text{if } 0 < \mu <1 / 2 , \\
            \log(\frac{ \mathrm{e} k}{2}), & \text{if } \mu=1 / 2 , \\
            \frac{2}{2\mu-1}, & \text{if } 1 / 2  < \mu <1.
        \end{cases}
    \]
    Then, we can obtain the corresponding upper bounds for $ \mathcal{S}_1^{\mathsf{ol}} $ for different cases of $ \mu $ and $ \nu $.
    \begin{description}[leftmargin = 1em]
        \item[Case 1: $ \nu = 1, 0<\mu<1 $.]
            \begin{equation*}
                \begin{aligned}
                    \mathcal{S}_1^{\mathsf{ol}}  \leqslant & C_{\mu,1} \gamma_1^{-1} \exp \left\{ - \lambda \gamma_1 d_{\mu} k^{1-\mu} \right\} (k+1)^{- \min \{ \mu ,1-\mu \} } \log(k+1),
                \end{aligned}
            \end{equation*}
            where
            \[
                C_{\mu,1} =
                \begin{cases}
                    4, & \text{if } \mu =1 / 2 , \\
                    \frac{4}{|2\mu-1|(1-2^{\mu-1} )}, & \text{if }  \mu \neq 0,1 / 2 .
                \end{cases}
            \]
        \item[Case 2: $ 0 < \nu,\mu <1 $.]
            \begin{equation*}
                \begin{aligned}
                    \mathcal{S}_1^{\mathsf{ol}} & \leqslant \gamma_1^{-\nu} d_{\mu}^{-\nu} \exp \left\{ - \lambda \gamma_1 d_{\mu} k^{1-\mu} \right\} (k+1)^{-\nu(1-\mu)}
                    \begin{cases}
                        \frac{(k+1)^{1-2\mu}}{1-2\mu} , & \text{if } 0 < \mu <1 / 2 , \\
                        \log(\frac{ \mathrm{e} k}{2}), & \text{if } \mu=1 / 2 , \\
                        \frac{2}{2\mu-1}, & \text{if } 1 / 2  < \mu <1,
                    \end{cases} \\
                    & \leqslant \hat{C}_{\mu,1} \gamma_1^{-\nu} \exp \left\{ - \lambda \gamma_1 d_{\mu} k^{1-\mu} \right\}
                    \begin{cases}
                        (k+1)^{- \mu + (1-\nu)(1-\mu) }, & \text{if } 0 < \mu <1 / 2 , \\
                        (k+1)^{-\nu(1-\mu) } \log(k+1), & \text{if } 1 / 2  \leqslant  \mu <1,
                    \end{cases} \\
                \end{aligned}
            \end{equation*}
            where
            \[
                \hat{C}_{\mu,1} =
                \begin{cases}
                    2^{1+\nu} , & \text{if } \mu =1 / 2 , \\
                    \frac{2 }{|2\mu-1| d_{\mu}^{\nu} }, & \text{if }  \mu \neq 0,1 / 2 .
                \end{cases}
            \]
        \item[Case 3: $ \nu = 0, 1 / 2  < \mu < 1 $.]
        \begin{equation*}
            \begin{aligned}
                \mathcal{S}_1^{\mathsf{ol}} & \leqslant \exp \left\{ - \lambda \gamma_1 d_{\mu} k^{1-\mu} \right\} \sum_{i\leqslant \frac{k-1}{2}}  i^{-2\mu} \leqslant  \frac{2}{2\mu-1}  \exp \left\{ - \lambda \gamma_1 d_{\mu} (k+1)^{1-\mu} \right\}.
            \end{aligned}
        \end{equation*}
        \item[Case 4: $ \nu >1,0 < \mu < 1 $.]
            \begin{equation*}
                \begin{aligned}
                    \mathcal{S}_1^{\mathsf{ol}} \leqslant & \gamma_1^{-\nu} d_{\mu}^{-\nu} \exp \left\{ - \lambda \gamma_1 d_{\mu} k^{1-\mu} \right\} k^{-\nu(1-\mu)}
                    \begin{cases}
                        \frac{k^{1-2\mu}}{1-2\mu} , & \text{if } 0 < \mu <1 / 2 , \\
                        \log(\frac{ \mathrm{e} k}{2}), & \text{if } \mu=1 / 2 , \\
                        \frac{2}{2\mu-1}, & \text{if } 1 / 2  < \mu <1,
                    \end{cases} \\
                    = & \gamma_1^{-\nu} d_{\mu}^{-\nu} \exp \left\{ - \lambda \gamma_1 d_{\mu} k^{1-\mu} \right\}
                    \begin{cases}
                        \frac{1}{1-2\mu} k^{1-2\mu - \nu(1-\mu)}, & \text{if } 0 < \mu <1 / 2 , \\
                        k ^{-\frac{\nu}{2}} \log(\frac{ \mathrm{e} k}{2}), & \text{if } \mu=1 / 2 , \\
                        \frac{2}{2\mu-1} k^{-\nu(1-\mu)} , & \text{if } 1 / 2  < \mu <1,
                    \end{cases} \\
                    \leqslant & \gamma_1^{-\nu} d_{\mu}^{-\nu} \exp \left\{ - \lambda \gamma_1 d_{\mu} k^{1-\mu} \right\}
                    \begin{cases}
                        \frac{1}{1-2\mu} k^{-\mu}, & \text{if } 0 < \mu <1 / 2 , \\
                        (\frac{\mathrm{e}}{2})^{\frac{\nu-3}{2}} \frac{1}{\nu-1}  k^{-\frac{1}{2}}, & \text{if } \mu=1 / 2 , \\
                        \frac{2}{2\mu-1} k^{-\nu(1-\mu)} , & \text{if } 1 / 2  < \mu <1,
                    \end{cases} \\
                \end{aligned}
            \end{equation*}
            since $ 1-2\mu - \nu(1-\mu) = - \mu + (1-\nu)(1-\mu) < -\mu $ and $ \max\limits_{1\leqslant k < \infty} k^{-a} \log(\frac{\mathrm{e}k}{2}) \leqslant (\frac{\mathrm{e}}{2} )^{a} \frac{1}{\mathrm{e}a} $ for $ a = $ $ \frac{\nu-1}{2} $.
            Hence,
            \[
                \begin{aligned}
                    \mathcal{S}_1^{\mathsf{ol}} & \leqslant \overline{C}_{\mu,1} \gamma_1^{-\nu} \exp \left\{ - \lambda \gamma_1 d_{\mu} (k+1)^{1-\mu} \right\}
                    \begin{cases}
                        k^{-\mu}, & \text{if } 0 < \mu \leqslant 1 / 2 , \\
                        k^{-\nu(1-\mu)} , & \text{if } 1 / 2  < \mu < 1,
                    \end{cases} \\
                    & \leqslant \overline{C}_{\mu,1} \gamma_1^{-\nu} \exp \left\{ - \lambda \gamma_1 d_{\mu} (k+1)^{1-\mu} \right\} k^{-\min \{ \mu, \nu(1-\mu) \}},
                \end{aligned}
            \]
            where
            \[
                \overline{C}_{\mu,1} =
                \begin{cases}
                    (\frac{\mathrm{e}}{2})^{\frac{\nu-3}{2}} \frac{1}{(\nu-1)d_{\mu}^{\nu}} , & \text{if } \mu =1 / 2 , \\
                    \frac{2 }{|2\mu-1| d_{\mu}^{\nu} }, & \text{if }  \mu \neq 0,1 / 2.
                \end{cases}
            \]
    \end{description}

    Second, when $ i > \frac{k-1}{2} $, note that $ i^{-\mu} \leqslant (\frac{i+2}{3})^{-\mu} \leqslant 3 (i+2 )^{-\mu}  $ for any $ i \geqslant 1 $.
    Further, for any $ x \in [i,i+1] $, there holds $ i^{-\mu} \leqslant 3(x+1)^{-\mu}  $ and $  (k+1)^{1-\mu} - (x+1)^{1-\mu} \leqslant (k+1)^{1-\mu} - (i+1)^{1-\mu}  $.
    Therefore, we can bound $ \mathcal{S}_2^{\mathsf{ol}} $ by transforming it to integral
    \begin{equation*}
        \begin{aligned}
            \mathcal{S}_2^{\mathsf{ol}} & \leqslant 9 \sum_{i>\frac{k-1}{2}}^{k-1} \int_{i}^{i+1} \frac{  (x+1)^{-2\mu} }{ 1 +  \left( \frac{\gamma_1}{1-\mu} \right)^{\nu}  \left[ ( k+1)^{1-\mu} - (x +1)^{1-\mu}  \right]^{\nu}  }dx \\
            & \leqslant 18 (k+1)^{-\mu}  \int_{\frac{k-1}{2}}^{k} \frac{  (x+1)^{-\mu} }{ 1 +  \left( \frac{\gamma_1}{1-\mu} \right)^{\nu}  \left[ ( k+1)^{1-\mu} - (x +1)^{1-\mu}  \right]^{\nu}  }dx.
        \end{aligned}
    \end{equation*}
    Using the change of variables $ \xi = ( k+1)^{1-\mu} - (x +1)^{1-\mu} $ with $ \mathrm{d}\xi = -(1-\mu) (x+1)^{-\mu} \mathrm{d}x $, we deduce
    \begin{equation*}
        \begin{aligned}
            \mathcal{S}_2^{\mathsf{ol}} & \leqslant 18 \min \left\{ 1, \left( \frac{\gamma_1}{1-\mu} \right)^{\nu} \right\} ^{-1}(k+1)^{-\mu} \frac{1}{1-\mu} \int_{0}^{(1-2^{\mu-1}) (k+1)^{1-\mu} } \frac{1}{1+\xi^{\nu} } \mathrm{d}\xi.
        \end{aligned}
    \end{equation*}
    The specific bounds for $ \mathcal{S}_2^{\mathsf{ol}} $ are given as follows.
    \begin{description}[leftmargin = 1em]
        \item[Case 1: $ \nu = 1, 0<\mu<1 $.]
            \begin{equation*}
                \begin{aligned}
                    \mathcal{S}_2^{\mathsf{ol}} \leqslant & 18 \min \left\{ 1,  \frac{\gamma_1}{1-\mu}  \right\} ^{-1}(k+1)^{-\mu} \frac{1}{1-\mu} \log \left( 1 + (1-2^{\mu-1}) (k+1)^{1-\mu} \right) \\
                    \leqslant & C_{\mu,2} k^{-\mu} \log(k+1),
                \end{aligned}
            \end{equation*}
            where $ C_{\mu,2} = 18 \min \left\{ 1-\mu, \gamma_1\right\} ^{-1} \left( 1-\mu + \log(2-2^{\mu-1} ) \right).$
        \item[Case 2: $ 0 < \nu,\mu <1 $.]
            \begin{equation*}
                \begin{aligned}
                    \mathcal{S}_2^{\mathsf{ol}} \leqslant & 18 \min \left\{ 1, \left( \frac{\gamma_1}{1-\mu} \right)^{\nu} \right\} ^{-1}(k+1)^{-\mu} \frac{1}{1-\mu} \left( 1+\int_{1}^{(1-2^{\mu-1}) (k+1)^{1-\mu} } \xi^{-\nu}  \mathrm{d}\xi \right) \\
                    \leqslant & \hat{C}_{\mu,2} (k+1)^{-\mu + (1-\mu)(1-\nu)},
                \end{aligned}
            \end{equation*}
            where $ \hat{C}_{\mu,2} = 18 \min \left\{ 1, \left( \frac{\gamma_1}{1-\mu} \right)^{\nu} \right\} ^{-1} \frac{(1-2^{\mu-1} )^{1-\nu} }{(1-\mu)(1-\nu)}. $
        \item[Case 3: $ \nu = 0, 1 / 2  < \mu < 1 $.]
            \begin{equation*}
                \begin{aligned}
                    \mathcal{S}_2^{\mathsf{ol}} & \leqslant 18 d_{\mu} \min \left\{ 1, \left( \frac{\gamma_1}{1-\mu} \right)^{\nu} \right\} ^{-1} (k+1)^{1-2\mu} .
                \end{aligned}
            \end{equation*}
        \item[Case 4: $ \nu >1,0 < \mu < 1 $.]
            \[
                \begin{aligned}
                    \mathcal{S}_2^{\mathsf{ol}} \leqslant & 18 \min \left\{ 1, \left( \frac{\gamma_1}{1-\mu} \right)^{\nu} \right\}^{-1} (k+1)^{-\mu} \frac{1}{1-\mu} \left( 1+\int_{1}^{(1-2^{\mu-1}) (k+1)^{1-\mu} } \xi^{-\nu}  \mathrm{d}\xi \right) \\
                    \leqslant & 18 \min \left\{ 1, \left( \frac{\gamma_1}{1-\mu} \right)^{\nu} \right\}^{-1} \frac{\nu}{(1-\mu)(\nu-1)} k^{-\mu},
                \end{aligned}
            \]
    \end{description}

    Finally, combining the bounds for $ \mathcal{S}_{1}^{\mathsf{ol}} $ and $ \mathcal{S}_2^{\mathsf{ol}} $ with (\ref{eq:prop31series}) and (\ref{eq:olSeriesDECOM}), we have
    \begin{description}[leftmargin = 1em]
        \item[Case 1: $ \nu = 1, 0<\mu<1 $.]
            \begin{equation*}
                \begin{aligned}
                    \mathcal{S}^{\mathsf{ol}} \leqslant & C_{\mu} \gamma_1 \left(  \exp \left\{ - \lambda \gamma_1 d_{\mu} k^{1-\mu} \right\} k^{- \min \{ \mu ,1-\mu \} } + k^{-\mu}  \right)\log(k+1), \\
                \end{aligned}
            \end{equation*}
            where $ C_{\mu} = 1+ C_{\mu,1} +C_{\mu,2}.  $
        \item[Case 2.1: $ 0 < \nu <1,0<\mu<1 /2 $.]
            \begin{equation*}
                \begin{aligned}
                    \mathcal{S}^{\mathsf{ol}} \leqslant & \hat{C}_{\mu} \gamma_1 (k+1)^{- \mu + (1-\nu)(1-\mu) },
                \end{aligned}
            \end{equation*}
            where $ \hat{C}_{\mu} = 1 + \hat{C}_{\mu,1} + \hat{C}_{\mu,2}. $
        \item[Case 2.2: $ 0 < \nu <1,1 /2\leqslant \mu<1 $.]
            \begin{equation*}
                \mathcal{S}^{\mathsf{ol}} \leqslant \hat{C}_{\mu} \gamma_1 \left[ (k+1)^{-\mu + (1-\mu)(1-\nu)} +\exp \left\{ - \lambda \gamma_1 d_{\mu} k^{1-\mu} \right\}k^{-\nu(1-\mu) } \log(k+1) \right].
            \end{equation*}
        \item[Case 3: $ \nu = 0, 1 / 2  < \mu < 1 $.]
            \begin{equation*}
                \begin{aligned}
                    \mathcal{S}^{\mathsf{ol}} & \leqslant \widetilde{C}_{\mu} \gamma_1 \left( \exp \left\{ - \lambda \gamma_1 d_{\mu} k^{1-\mu} \right\} + k^{1-2\mu} \right),
                \end{aligned}
            \end{equation*}
            where $ \widetilde{C}_{\mu} = \left( 1 + \frac{2}{2\mu-1} + 18 d_{\mu} \min \left\{ 1, \left( \frac{\gamma_1}{1-\mu} \right)^{\nu} \right\} ^{-1} \right) $.
        \item[Case 4: $ \nu >1,0 < \mu < 1 $.]
            \[
                \begin{aligned}
                    \mathcal{S}^{\mathsf{ol}} & \leqslant \overline{C}_{\mu} \gamma_1 \left( \exp \left\{ - \lambda \gamma_1 d_{\mu} (k+1)^{1-\mu} \right\} k^{-\min \{ \mu, \nu(1-\mu) \}} + k^{-\mu}  \right) , \\
                \end{aligned}
            \]
            where $ \overline{C}_{\mu} = 1+ \overline{C}_{\mu,1} + 18 \min \left\{ 1, \left( \frac{\gamma_1}{1-\mu} \right)^{\nu} \right\}^{-1} \frac{\nu}{(1-\mu)(\nu-1)}.  $
    \end{description}
    The proof is completed.

\end{document}